\documentclass{article}

\usepackage[preprint]{neurips_2025}

\usepackage[utf8]{inputenc}
\usepackage[T1]{fontenc}   
\usepackage[
colorlinks=true,
linkcolor=blue,
urlcolor=blue,
citecolor=blue
]{hyperref}
\usepackage{url}           
\usepackage{booktabs}       %
\usepackage{nicefrac}       %
\usepackage{microtype}      %
\usepackage{xcolor}         %

\usepackage{amsmath, bm}
\usepackage{amssymb, stmaryrd}

\usepackage{amsthm}
\usepackage{mathtools}
\usepackage{mathrsfs}
\usepackage{mathabx}
\usepackage{dsfont}
\usepackage{mathtools}
\usepackage{stmaryrd}
\usepackage{xfrac}
\usepackage{derivative}

\usepackage{booktabs,tabularx, multirow}
\usepackage{titletoc}
\newcommand{\printappendixtoc}{%
  \startcontents[appendix]%
  \printcontents[appendix]{l}{1}{\section*{Contents of the Appendix}}{}%
}
\usepackage{subcaption}
\usepackage{wrapfig}
\usepackage{xspace}
\usepackage{duckuments} %
\usepackage{pgfplots}
\usepackage{scalefnt}
\pgfplotsset{compat=1.18}
\usepackage{graphicx}
\usepackage[export]{adjustbox}
\usepackage[rawfloats=true]{floatrow}
\usepackage[outline]{contour} %

\usepackage{tikz}
\usepackage{tikz-3dplot}
\tdplotsetmaincoords{70}{110}   %

\usepackage{listofitems}
\usetikzlibrary{fpu}
\usetikzlibrary{calc} %
\usepackage[skins]{tcolorbox} %
\usetikzlibrary{shadows.blur}
\usetikzlibrary{fit}
\usepackage{scalefnt}
\usepackage{mwe}
\usetikzlibrary{shapes.geometric}
\usetikzlibrary{patterns, positioning,fit}

\tcbuselibrary{theorems}
\newtcbtheorem
[]%
{propertybox}%
{Property}%
{%
	colback=gray!1,
	colframe=gray!35!black,
	fonttitle=\bfseries,
}%
{propbox}%

\raggedbottom

\title{Feature Attribution from First Principles}

\author{%
  Magamed~Taimeskhanov \\
  Julius-Maximilians Universit\"at W\"urzburg, CAIDAS \\
   W\"urzburg, Germany \\
  \texttt{magamed.taimeskhanov@uni-wuerzburg.de} \\
  \And
  Damien Garreau \\
  Julius-Maximilians Universit\"at W\"urzburg, CAIDAS \\
  W\"urzburg, Germany \\
  \texttt{damien.garreau@uni-wuerzburg.de} \\
}

\DeclareMathOperator*{\Argmax}{arg\,max}%
\DeclareMathOperator{\Diag}{diag}%
\DeclareMathOperator{\Expec}{\mathbb{E}}%

\newcommand{\abs}[1]{\left\lvert#1\right\rvert}%
\newcommand{\card}[1]{\left\lvert#1\right\rvert} %
\newcommand{\condexpec}[2]{\mathbb{E}\left[#1\middle|#2\right]}%

\newcommand{\diag}[1]{\Diag\left(#1\right)}%
\newcommand{\Diff}{\mathrm{d}}%
\newcommand{\defeq}{\vcentcolon =}%
\newcommand{\expecunder}[2]{\Expec_{#2}\left[#1\right]}%
\newcommand{\Indic}{\mathds{1}}%
\newcommand{\indic}[1]{\Indic_{#1}}%
\newcommand{\Inputspace}{\mathcal{X}}%
\newcommand{\norm}[1]{\left\lVert#1\right\rVert}%
\newcommand{\Posint}{\mathbb{N}^{\star}}%
\newcommand{\Reals}{\mathbb{R}}%

\newcommand{\Img}{\bm{\xi}} %

\newcommand{\ps}[2]{\left\langle #1 \; , \; #2 \right\rangle}

\newcommand{\xbf}{\mathbf{x}}
\newcommand{\Xbf}{\mathbf{X}}
\newcommand{\wbf}{\mathbf{w}}
\newcommand{\bbf}{\mathbf{b}}
\newcommand{\abf}{\mathbf{a}}

\newcommand{\Wbf}{\mathbf{W}}

\newcommand{\nbf}{\mathbf{n}}
\newcommand{\mbf}{\mathbf{m}}
\newcommand{\vbf}{\mathbf{v}}
\newcommand{\zbf}{\mathbf{z}}
\newcommand{\ybf}{\mathbf{y}}

\definecolor{ao}{rgb}{0.0, 0.5, 0.0}

\renewcommand{\epsilon}{\varepsilon}
\newcommand{\epsilonbm}{\bm{\epsilon}}

\theoremstyle{plain}
\newtheorem{theorem}{Theorem}[section]
\newtheorem{proposition}{Proposition}[section]
\newtheorem{lemma}{Lemma}[section]
\newtheorem{corollary}{Corollary}[section]

\theoremstyle{definition}

\newtheorem{definition}{Definition}[section]
\newtheorem{remark}{Remark}[section]

\makeatletter
\def\hlinewd#1{%
	\noalign{\ifnum0=`}\fi\hrule \@height #1 %
	\futurelet\reserved@a\@xhline}
\makeatother

\makeatletter
\def\th@plain{%
	\thm@notefont{}%
	\itshape %
}
\def\th@definition{%
	\thm@notefont{}%
	\normalfont %
}
\makeatother

\begin{document}

\maketitle

\begin{abstract}
Feature attribution methods are a popular approach to explain the behavior of machine learning models. 
They assign importance scores to each input feature, quantifying their influence on the model's prediction. 
However, evaluating these methods empirically remains a significant challenge. 
To bypass this shortcoming, several prior works have proposed axiomatic frameworks that any feature attribution method should satisfy. 
In this work, we argue that such axioms are often too restrictive, and propose in response a new feature attribution framework, built from the ground up. 
Rather than imposing axioms, we start by defining attributions for the simplest possible models, \emph{i.e.}, indicator functions, and use these as building blocks for more complex models. 
We then show that one recovers several existing attribution methods, depending on the choice of atomic attribution. 
Subsequently, we derive closed-form expressions for attribution of deep ReLU networks, and take a step toward the optimization of evaluation metrics with respect to feature attributions.
\end{abstract}

\section{Introduction}

As machine learning models continue to improve, they are increasingly deployed in critical domains such as healthcare, self-driving, etc.
Despite their impressive performances, deep learning models remain notoriously opaque due to their vast parameter spaces, complex designs, and reliance on a succession of non-linear transformations.
These characteristics hinder human understanding of how specific outputs are produced, earning such models the label of ``black boxes'' \citep{benitez_et_al_1997}.
In such settings, lack of transparency can undermine trust, hinder error diagnosis, and raise ethical or legal concerns.
In response, the field of eXplainable AI (XAI) has emerged, aiming to bridge this gap by making model behavior more transparent and interpretable.

A wide range of interpretability methods have emerged in recent years, such as LIME~\citep{ribeiro2016should}, KernelSHAP~\citep{lundberg2017unified}, Integrated Gradients~\citep{sundararajan2017axiomatic}, Grad-CAM~\citep{selvaraju2017grad} and LRP~\citep{bach2015pixel}. 
These approaches typically fall under the classical feature attribution paradigm, which seeks to assign importance scores to input features based on their contribution to a model's prediction. 
In computer vision, such feature attributions are often referred to as saliency maps.
Despite their popularity, these methods produce inconsistent attributions for the same model and input point to explain~\citep{hooker2019benchmark}, and no widely accepted theoretical foundation has yet emerged. 
While some efforts~\citep{sundararajan2017axiomatic, han2022explanation, bressan2024theory} have attempted to formalize attribution through axiomatic frameworks, a consensus on what constitutes a ``correct'' explanation remains elusive.

This work builds on the classical result that functions can be approximated by linear combinations of weighted indicator functions. 
We propose to begin by assigning atomic attributions to these indicator functions and then extend the attribution to more complex models through summation. 
Under mild regularity conditions, this construction yields a well-defined attribution method. 
This perspective leads naturally to a measure-theoretic framework for feature attribution, where explanations are represented as integrals. 
Our \textbf{main contributions} are threefold:
\begin{itemize}
    \item We analyze limitations of existing attribution axioms (Th.~\ref{th:bound-1}).
    \item We propose a constructivist, measure-theoretic framework for feature attribution (Th.~\ref{th:measure-atomic-attribution}).
    \item Within this framework, we derive closed-form attributions for deep ReLU networks (Cor.~\ref{cor:linear-positive}), recover existing methods (Table~\ref{tab:example-feature-attributions}), and show how to optimize evaluation metrics as functions of attributions (Th.~\ref{th:optim-1}).
\end{itemize}
Our implementation for computing these explanations is available \href{https://github.com/MagamedT/feature-attribution-from-first-principles}{online}. %
All proofs can be found in Appendix~\ref{app:proofs-main}.

\subsection{Related work}

\textbf{Empirical analysis of XAI methods.} 
Several empirical studies have revealed significant limitations of XAI methods, questioning in particular the reliability of saliency maps as faithful explanations~\citep{kindermans2019reliability, ghorbani2019interpretation}. 
Notably, \citep{adebayo2018sanity} introduced a randomization-based sanity check, showing that some XAI methods for images are independent of both the model and the data.
From a different perspective, \citep{heo2019fooling} directly attacked the reliability of saliency-based attributions by adversarial model manipulation, fine-tuning a model with the purpose of making the saliency maps unreliable.
Building upon these insights, several recent works have developed new evaluation frameworks.
\citep{hooker2019benchmark} and \citep{rong2022consistent} proposed several evaluation strategies to assess the quality of feature attributions and developed ranking schemes to compare the methods.
\citep{agarwal2022openxai} and \citep{hedstrom2023quantus} provide practical toolkits for systematically assessing the quality of feature attribution methods, aiming to streamline and standardize the evaluation of XAI methods.

\textbf{Theoretical analysis of XAI methods.} 
From the theoretical side, specific methods such as LIME~\citep{garreau2021does}, attention-based explanations~\citep{lopardo2024attention}, and KernelSHAP~\citep{bhattacharjee2025safely} have been analyzed, along with other work~\citep{ancona2017towards} covering additional methods such as DeepLIFT and $\epsilon$-LRP. 
Some works have taken a more general approach, presenting impossibility results under certain assumptions, for instance, regarding the feasibility of counterfactual recourse and robustness~\citep{fokkema2023attribution}, or the soundness of feature attributions under specific axioms~\citep{bilodeau2024impossibility}. 
More precisely, \citep{bilodeau2024impossibility} shows that, if the underlying assumptions are too general, one cannot conclude using the classical local feature attribution methods if a feature is important for the model's prediction when its attribution is largely positive.
\citep{konig2024disentangling} provides a decomposition for global feature attribution that disentangles a feature's individual contribution from those arising through interactions with other features.

\textbf{Axiomatisation of feature attribution.} 
Several works have proposed approaches to either unify or provide a solid foundation for XAI. 
\citep{covert2021explaining} introduces explainability through a feature removal lens, aiming to measure how much each feature contributes to the model's output. 
\citep{sundararajan2017axiomatic} proposes a set of axioms that any good explanation method should ideally satisfy. 
In the same spirit, \citep{guptaetal2022} introduces two new axioms, inspired by principles from logic, that any saliency-based method should satisfy, along with evaluation metrics designed to capture both properties.
\citep{han2022explanation} shows that many existing methods can be viewed as performing local approximations of the model being explained. 
Finally, \citep{bressan2024theory} also adopts an approximation-based view, framing interpretability in terms of PAC-learning.

\section{Limitations of existing axioms}
\label{sec:limitations}

In this section, we explore the landscape of existing axioms for feature attribution methods and show formally that they tend to produce similar attributions, as observed by~\citep{ancona2017towards}.
The main difference with~\citep{ancona2017towards} is that our result is valid for any feature attribution satisfying the axioms introduced in~\citep{sundararajan2017axiomatic}.

\subsection{Existing axioms}\label{sec:limitations.1}

The problem at hand is to explain the predictions of a function $f : \mathcal{X} \to \Reals$, with $\mathcal{X}$ the input space, and $\mathcal{F}$ is a vector space of functions to be explained.
One common approach is to use a feature attribution method $\phi$, defined as follows:

\begin{definition}[Feature attribution \citep{bilodeau2024impossibility}] 
A feature attribution method is any mapping $\phi : \mathcal{X} \times \mathcal{F} \to \Reals^d$, where $\phi(\xbf, f)_j$ represents the attribution of the $j$-th feature of the input $\xbf \in \mathcal{X}$ in explaining the output $f(\xbf)$, with $j \in \llbracket d \rrbracket \defeq \{1,2, \ldots, d \}$, and $f \in \mathcal{F}$.
\end{definition}

Intuitively, if $\phi(\xbf, f)_j \gg 0$, then the feature $\xbf_j$ is important for the model's prediction $f(\xbf)$.
We now list a handful of properties/axioms, found in the literature~\citep{sundararajan2017axiomatic}, which are often taken as desirable for any attribution method~$\phi$. 

\begin{definition}[Completeness~\citep{sundararajan2017axiomatic}]
\label{propbox:completeness}
Given a baseline $\xbf' \in \Reals^d$, we say that a feature attribution method $\phi$ is $\xbf'$-\texttt{complete} if:
\begin{align*}
\forall f \in \mathcal{F}, \forall \xbf \in \Reals^d, \qquad f(\xbf) - f(\xbf') = \sum_{j \in \llbracket d \rrbracket} \phi(\xbf, f)_j  \, .
\end{align*}
\end{definition}

That is, $\phi$ splits the model's output for any given input into contributions from each individual feature.
In the literature~\citep{bilodeau2024impossibility, sundararajan2017axiomatic}, \texttt{Completeness} is often defined so that it holds for every baseline $ \xbf' \in \mathbb{R}^d $. 
We do not consider this setting here. 

\begin{definition}[Sensitivity~\citep{sundararajan2017axiomatic}]\label{propbox:sensibility}
Define the perturbed vector $\mathbf{x}^{(x',j)} \in \mathbb{R}^d$ as the vector obtained by replacing the $j$-th coordinate of $\xbf$ with $x' \in \mathbb{R}$, \emph{i.e.}, $\xbf^{(x',j)}_j \defeq x'$.
A feature attribution $\phi$ is \texttt{sensitive} if:
	\begin{align*}
		& \forall f \in \mathcal{F}, \forall j \in \llbracket d \rrbracket, \quad \left(  \forall \xbf \in \Reals^d, \forall x' \in \Reals , \, f(\xbf) = f(\xbf^{(x',j)}) \implies  \forall \Img \in \Reals^d, \, \phi(\Img, f)_j = 0 \right) \, .
	\end{align*}
\end{definition}

In other words, if feature $j$ has no impact on the model's output, then $\phi_j=0$ for any input. 

\begin{definition}[Linearity~\citep{sundararajan2017axiomatic}]\label{propbox:linearity}
	A feature attribution $\phi$ is \texttt{linear} if:
	\begin{align*}
		\forall f, g \in \mathcal{F}, \forall \lambda \in \Reals, \forall \xbf \in \Reals^d, \qquad \phi(\xbf, f + \lambda g) =  \phi(\xbf,f) + \lambda \phi(\xbf, g) \, .
	\end{align*}
\end{definition}

We offer a more exhaustive review of existing axioms in Appendix~\ref{app:2.1}.

\subsection{Feature attribution derived from the previous axioms}\label{sec:limitations.2}

In this section, our goal is to show that any feature attribution method satisfying the previous axioms yields attributions similar to those of \emph{Gradient}$\times$\emph{Input}~\citep{shrikumar2016not}.
Given a differentiable model $f : \Reals^d \to \Reals$ and input to be explained $\xbf \in \Reals^d$, it is defined as
\[
\phi(\xbf, f) \defeq \nabla f(\xbf) \odot \xbf \, ,
\]
where $\odot$ denotes the element-wise product.
Intuitively, it captures both how large the feature is and how sensitive the prediction is. 
In the spirit of~\citep{ancona2017towards}, we now show that any feature attribution $\phi$ which satisfies \texttt{Completeness}, \texttt{Sensitivity}, and \texttt{Linearity} is a perturbation of the feature attribution method \emph{Gradient$\times$Input}, provided that the model is smooth enough. 
The magnitude of the perturbation is bounded by the ``degree of linearity'' (\emph{i.e.}, norm of the Hessian) of the explained model $f$. 
First, we show that the \texttt{Completeness} and \texttt{Sensitivity} properties are very strong as they can completely determine the attribution of some models. 

\begin{proposition}[Completeness $+$ Sensitivity]
\label{prop:sens_complete}
Define $\pi_j : \Reals^d \to \Reals$ the projection of $\xbf \in \Reals^d$ on its $j$-th coordinate as $\pi_j(\xbf) = \xbf_j$. 
Given a baseline $\xbf' \in \Reals^d$, a \texttt{sensitive} and $\xbf'$-\texttt{complete} feature attribution $\phi$ satisfies:
	\begin{align*}
		\forall f : \Reals \to \Reals, \forall \xbf \in \Reals^d, \forall k, j \in \llbracket d \rrbracket, \qquad  \phi\left(\xbf, f \circ \pi_j\right)_k = \indic{k=j} \left(f(\xbf_j) - f(\xbf_j')\right)    \, .
	\end{align*}
\end{proposition}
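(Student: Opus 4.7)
The plan is to unpack the two axioms in turn on the function $f \circ \pi_j$ and exploit that this composition depends on a single coordinate only.

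First I would handle the off-diagonal case $k \neq j$ using \texttt{Sensitivity}. For any fixed $j$, the function $g \defeq f \circ \pi_j$ satisfies $g(\xbf) = f(\xbf_j)$, so for every coordinate $k \neq j$, every input $\xbf \in \Reals^d$, and every $x' \in \Reals$, we have $g(\xbf) = g(\xbf^{(x',k)})$, simply because changing the $k$-th coordinate leaves $\xbf_j$ (and hence $g$) unchanged. \texttt{Sensitivity} applied to $g$ with this coordinate $k$ then forces $\phi(\Img, g)_k = 0$ for every $\Img \in \Reals^d$, which gives the $\indic{k=j}$ factor in the statement.

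Next, for the diagonal case $k = j$, I would invoke $\xbf'$-\texttt{Completeness}. Summing attributions over all coordinates gives
\begin{align*}
g(\xbf) - g(\xbf') = \sum_{k \in \llbracket d \rrbracket} \phi(\xbf, g)_k = \phi(\xbf, g)_j \, ,
\end{align*}
where the second equality uses the vanishing of $\phi(\xbf, g)_k$ for $k \neq j$ established in the previous step. Since $g(\xbf) = f(\xbf_j)$ and $g(\xbf') = f(\xbf_j')$, this yields $\phi(\xbf, f \circ \pi_j)_j = f(\xbf_j) - f(\xbf_j')$, combining with the first step to give exactly the claimed formula.

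There is no real obstacle here; the proof is essentially a direct application of the two definitions once one observes that $f \circ \pi_j$ is trivially insensitive to every coordinate except the $j$-th. The only small care needed is to apply \texttt{Sensitivity} to the composite function $f \circ \pi_j$ (which lies in $\mathcal{F}$ implicitly) and to note that the universal quantifier over $\Img$ in \texttt{Sensitivity} is what lets us evaluate at the specific input $\xbf$ appearing in the conclusion.
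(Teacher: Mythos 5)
Your proof is correct and follows essentially the same route as the paper's: \texttt{Sensitivity} kills the off-diagonal coordinates because $f \circ \pi_j$ is invariant under changes to any coordinate $k \neq j$, and then $\xbf'$-\texttt{Completeness} forces the remaining $j$-th coordinate to equal $f(\xbf_j) - f(\xbf_j')$. No gaps; nothing further to add.
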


By adding the \texttt{Linearity} property into the previous proposition, one can show that any such feature attribution methods is equal to \emph{Gradient$\times$Input} plus a remainder.

\begin{lemma}[Completeness $+$ Sensitivity $+$ Linearity $+$ smoothness]\label{lemma:taylor-phi}
	Let $C^2(\Reals^d)$ be the space of twice continuously differentiable function from $\Reals^d$ to $\Reals$. 
    Given a vector $\xbf_0 \in \Reals^d$, a baseline $\xbf' \in \Reals^d$, and a \texttt{sensitive}, $\xbf'$-\texttt{complete}, \texttt{linear} feature attribution $\phi$, the following holds:
	\begin{align*}
		\forall f \in C^2(\Reals^d), \forall \xbf \in \Reals^d, \qquad  \phi(\xbf, f) = \nabla f (\xbf_0) \odot (\xbf - \xbf') + \phi(\xbf, R_{\xbf_0} )    \, ,
	\end{align*}
	with $R_{\xbf_0}(\zbf) \defeq \frac{1}{2} (\zbf - \xbf_0)^\top \nabla^2 f \left( \ybf_\zbf \right) (\zbf- \xbf_0)$ for $\zbf \in \Reals^d$ and, given $\zbf$, there exists $\tau \in [0,1]$ such that $\ybf_\zbf \defeq \xbf_0 + \tau(\zbf-\xbf_0)$. 
\end{lemma}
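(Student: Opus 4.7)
The plan is to apply the second-order Taylor expansion of $f$ around the fixed point $\xbf_0$ and then exploit the three assumed axioms to identify every resulting piece.

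First, I would write, for every $\zbf \in \Reals^d$,
\begin{align*}
f(\zbf) = f(\xbf_0) + \nabla f(\xbf_0)^\top (\zbf - \xbf_0) + R_{\xbf_0}(\zbf),
\end{align*}
which is valid with the Lagrange-form remainder $R_{\xbf_0}$ given in the statement, using that $f \in C^2(\Reals^d)$. Viewing both sides as functions of $\zbf$, this is an identity in $\mathcal{F}$ (at least informally; strictly, one would want $\mathcal{F}$ to contain the constant function $\xbf_0$, the coordinate projections $\pi_j$, and the remainder $R_{\xbf_0}$, which I would assume as part of the ambient setting, as is implicit in the preceding statements). Crucially, the linear part decomposes as a sum of one-coordinate functions: with $g_j(x) \defeq \partial_j f(\xbf_0)\,(x - \xbf_{0,j})$, one has $\nabla f(\xbf_0)^\top(\zbf - \xbf_0) = \sum_{j \in \llbracket d \rrbracket} g_j \circ \pi_j(\zbf)$, so the Taylor identity is a finite linear combination of functions of the form $h \circ \pi_j$ plus the constant $f(\xbf_0)$ plus $R_{\xbf_0}$.

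Second, I would apply \texttt{Linearity} to evaluate $\phi(\xbf, f)$ term by term. The constant contribution $\phi(\xbf, f(\xbf_0)\cdot \mathbf{1})$ vanishes: the constant function is invariant under changing any coordinate, hence by \texttt{Sensitivity} every one of its attribution coordinates is zero. For each linear piece $g_j \circ \pi_j$, Proposition~\ref{prop:sens_complete} (which required only \texttt{Completeness} and \texttt{Sensitivity}) yields
\begin{align*}
\phi(\xbf, g_j \circ \pi_j)_k = \indic{k = j}\bigl(g_j(\xbf_j) - g_j(\xbf_j')\bigr) = \indic{k=j}\,\partial_j f(\xbf_0)\,(\xbf_j - \xbf_j'),
\end{align*}
where the inner differences of $\xbf_{0,j}$ cancel. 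Summing over $j$ and collecting coordinates gives the vector $\nabla f(\xbf_0) \odot (\xbf - \xbf')$. Finally, the remainder term contributes $\phi(\xbf, R_{\xbf_0})$, which is simply left as is. Combining these three contributions yields the claimed identity.

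The only nontrivial obstacle is really bookkeeping: one must make sure that the constant function and the coordinate projections are admissible inputs to $\phi$ so that \texttt{Linearity} can be applied term by term, and that the atomic evaluations from Proposition~\ref{prop:sens_complete} combine correctly coordinate-by-coordinate through the Hadamard product. Neither involves any calculation beyond the expansion above, and the Hessian-type bound on $\phi(\xbf, R_{\xbf_0})$ (which motivates calling this remainder small when $f$ is close to linear) is not needed for the identity itself, only for its interpretation.
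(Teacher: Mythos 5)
Your proposal is correct and follows essentially the same route as the paper: Taylor--Lagrange expansion at $\xbf_0$, \texttt{Linearity} to split the attribution, \texttt{Sensitivity} to kill the constant term, and Proposition~\ref{prop:sens_complete} to evaluate the linear part coordinate-wise. The only cosmetic difference is that you absorb the shift $-\xbf_{0,j}$ into the one-variable functions $g_j$ before invoking Proposition~\ref{prop:sens_complete}, whereas the paper applies it to the bare projections $\pi_j$ and disposes of the constant $\nabla f(\xbf_0)^\top \xbf_0$ separately via \texttt{Sensitivity}; both yield the same cancellation.
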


The remainder term $R_{\xbf_0}$ comes from the Taylor expansion of $f$. Since it is quadratic, the explanation of the remainder term $\phi(\xbf, R_{\xbf_0})$ may be large. 
We show how to control it under the assumptions that $\phi$ is Lipschitz continuous with respect to its second argument, and that the input space $\mathcal{X}$ is restricted to~$[0,1]^d$---this is a standard assumption in practice. 
Before proceeding, we specify Lipschitz continuity in our context.

\begin{definition}[Lipschitz continuity]
Set $B([0,1]^d)$ the space of bounded function from $[0,1]^d$ to $\Reals$. 
We say that $\phi : [0,1]^d \times B([0,1]^d) \to \Reals^d$ 
is Lipschitz continuous (in its second argument) if:
	\begin{align*}
		\forall \xbf \in [0,1]^d, \exists L_\xbf \geq 0,  \forall f, g \in B([0,1]^d), \qquad \norm{\phi(\xbf, f) - \phi(\xbf, g)}_2 \leq L_\xbf \norm{f-g}_\infty \, ,
	\end{align*}
where $\norm{\cdot}_2$ (resp. $\norm{\cdot}_\infty$) denotes the  Euclidean (resp. sup) norm.
\end{definition}

\begin{theorem}[Bound on remainder] 
\label{th:bound-1}
Define $M \defeq  \max_{\xbf \in [0,1]^d} \norm{\nabla^2 f (\xbf)}_{\mathrm{op}}$.
Given a baseline $\xbf' \in [0,1]^d$ and a \texttt{sensitive}, $\xbf'$-\texttt{complete}, \texttt{linear}, Lipschitz continuous feature attribution $\phi$, the following holds:
\begin{align*}
\forall f \in C^2([0,1]^d), \forall \xbf_0 \in [0,1]^d,  \forall \xbf \in [0,1]^d, \qquad  \norm{\phi(\xbf,  R_{\xbf_0})}_2 \leq \frac{L_\xbf  d}{2} M    
\, ,
\end{align*}
where $R_{\xbf_0}$ is as in Lemma~\ref{lemma:taylor-phi}, and $\norm{\cdot}_{\mathrm{op}}$ is the operator norm of a matrix.
\end{theorem}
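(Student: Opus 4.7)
The plan is to combine three ingredients: the Lipschitz continuity of $\phi$ in its second argument, the fact that $\phi(\xbf, 0) = 0$ (a byproduct of \texttt{Linearity}), and a uniform bound on the remainder $R_{\xbf_0}$ in sup norm on $[0,1]^d$. The chain is simply
\begin{align*}
\norm{\phi(\xbf, R_{\xbf_0})}_2 = \norm{\phi(\xbf, R_{\xbf_0}) - \phi(\xbf, 0)}_2 \leq L_\xbf \norm{R_{\xbf_0}}_\infty,
\end{align*}
so the entire problem reduces to bounding $\norm{R_{\xbf_0}}_\infty$ on the unit cube.

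First I would argue that $\phi(\xbf, 0) = 0$. This follows directly from \texttt{Linearity} applied to $f=g=0$ with $\lambda=1$: we get $\phi(\xbf, 0) = \phi(\xbf, 0) + \phi(\xbf, 0)$, hence $\phi(\xbf, 0) = 0$. With this in hand, the Lipschitz inequality above yields the reduction to estimating $\norm{R_{\xbf_0}}_\infty$.

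Second, I would bound $R_{\xbf_0}$ pointwise using the explicit form given in Lemma~\ref{lemma:taylor-phi}. For any $\zbf \in [0,1]^d$, writing $\ybf_\zbf = \xbf_0 + \tau(\zbf - \xbf_0)$ (which also lies in $[0,1]^d$ by convexity, so the operator norm of $\nabla^2 f(\ybf_\zbf)$ is controlled by $M$), Cauchy--Schwarz gives
\begin{align*}
\abs{R_{\xbf_0}(\zbf)} = \frac{1}{2}\abs{(\zbf - \xbf_0)^\top \nabla^2 f(\ybf_\zbf)(\zbf - \xbf_0)} \leq \frac{1}{2} \opnorm{\nabla^2 f(\ybf_\zbf)} \norm{\zbf - \xbf_0}_2^2 \leq \frac{M}{2} \norm{\zbf - \xbf_0}_2^2.
\end{align*}
Since both $\zbf$ and $\xbf_0$ belong to $[0,1]^d$, each coordinate-wise difference lies in $[-1,1]$, so $\norm{\zbf - \xbf_0}_2^2 \leq d$. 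Taking the sup over $\zbf$ yields $\norm{R_{\xbf_0}}_\infty \leq \tfrac{Md}{2}$.

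Putting the two bounds together delivers $\norm{\phi(\xbf, R_{\xbf_0})}_2 \leq \tfrac{L_\xbf d}{2} M$, as claimed. I do not anticipate a substantive obstacle: the only subtle point is invoking \texttt{Linearity} to obtain $\phi(\xbf, 0) = 0$ before applying the Lipschitz hypothesis, since the Lipschitz definition only controls differences of attributions and not their absolute value. Everything else is routine application of Cauchy--Schwarz together with the compactness of the unit cube.
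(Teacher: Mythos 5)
Your proof is correct and follows essentially the same route as the paper's: apply the Lipschitz hypothesis to the pair $(R_{\xbf_0}, 0)$, then bound $\norm{R_{\xbf_0}}_\infty$ by $\tfrac{Md}{2}$ via the operator-norm estimate on the Hessian and $\norm{\zbf-\xbf_0}_2^2 \leq d$ on the unit cube. Your explicit justification that $\phi(\xbf,0)=0$ via \texttt{Linearity} is a step the paper leaves implicit, and it is indeed needed to pass from the difference $\norm{\phi(\xbf,R_{\xbf_0})-\phi(\xbf,0)}_2$ to the stated bound on $\norm{\phi(\xbf,R_{\xbf_0})}_2$.
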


Intuitively, $M$ measures the linearity of $f$ on $[0,1]^d$. 
If $f$ is linear, then $M = 0$ and we get no remainder, meaning that our attribution method $\phi$ on this model is exactly \emph{Gradient$\times$Input}.
\citep{ancona2017towards} and \citep{hesse2021fast} show, both theoretically and experimentally, that for deep ReLU networks, common attribution methods (Integrated Gradients, DeepLIFT, etc.) behave like \emph{Gradient$\times$Input}. 
\textbf{Th.~\ref{th:bound-1} generalizes this observation to arbitrary feature attribution methods checking the previous axioms.}

This theorem can be extended to deep ReLU networks by approximating them via convolution using a sequence of mollifier functions. 
More precisely, given a sequence $(\varphi_n)_{n \geq 0}$ of mollifiers functions, the smoothed network $\varphi_n \star f$ converges to the original network $f$ as $n \to +\infty$~\citep{Magyar1992Continuous}, where~$\star$ denotes convolution. 
Since $\varphi_n \star f$ is smooth, it satisfies the regularity conditions required by Th.~\ref{th:bound-1}. 
Moreover, under mild regularity assumptions on the attribution method $\phi$, the attribution $\phi(\mathbf{x}, \varphi_n \star f)$ converges to $\phi(\mathbf{x}, f)$ for any input $\xbf \in \mathbb{R}^d$.
This connection may allow the issues identified with \emph{Gradient$\times$Input}~\citep{adebayo2018sanity}, such as its tendency to behave like an edge detector, to be transferred to any such attribution methods.

Thus, \texttt{Completeness} and \texttt{Sensitivity}, while intuitive, are overly restrictive. 
Introducing any new element or even reinforcing/weakening existing axioms seems to inevitably undermine the axiomatic system's coherency. 
For instance, in Appendix~\ref{app:2.2}, we show that enforcing \texttt{Completeness} without a baseline leads to a contradiction. 
Similarly, formulating a local version of \texttt{Sensitivity} seems challenging, as naive formulations also lead to contradiction. 
Therefore, we propose an alternative approach, aiming to escape these pitfalls.
Among the previous axioms, we will only keep \texttt{Linearity}, as it is a reasonable property for any attribution method to satisfy.

\section{Our framework: a constructivist approach}\label{sec:3}
\label{sec:constructivist}

In our framework, we start with a simple model for which we can be confident in the explanation. 
A natural candidate for such a model is the indicator function.
This choice is justified by a classical approximation result: any continuous function defined on a compact set can be approximated arbitrarily well by a finite sum of indicator functions (see Appendix~\ref{app:3.1}).
Building on this, we derive attributions for the full model by aggregating the explanations of its indicator function components.
From now on, we restrict ourselves to models which are continuous on 
$\mathcal{X} \defeq [0,1]^d$.

\subsection{\texorpdfstring{Warm-up: $2$-dimensional example of our framework}{Warm-up: 2-dimensional example of our framework}}
\label{sec:3.1}
In order to give a good intuition of our framework, let us carry out the informal derivation in the case where the input dimension is $d = 2$.

\textbf{First step (Attribution of indicator functions).} 
Let $\indic{R} : [0,1]^2 \to \{0,1\}$ be the indicator function of a rectangle $R \defeq (a,b]\times (c,d] \subset [0,1]^2$ defined as $\indic{R}(\xbf) \defeq 1$ if $\xbf \in R$, else $0$.
A possible definition for the \emph{elementary (atomic) attribution} $\phi\left(\xbf,  \indic{R} \right)$ is
\[
\phi\left(\xbf, \indic{(a,b]\times (c,d]} \right) \defeq \left( \indic{\xbf_1 \in (a,b]}, \indic{\xbf_2 \in (c,d]} \right)^\top \in\Reals^2
\, .
\]
We argue that this definition is natural, since \textbf{a feature is considered positively important for the prediction of the model if it lies within the region where the indicator function is active.}  
Specifically, if $\xbf_1$ falls in the interval $(a,b]$, we give to feature $1$ a score of $1$.
The same reasoning applies to $\xbf_2$ with the interval $(c,d]$. 
To avoid binary attributions lacking a notion of magnitude, such as simply assigning $0$ or $1$, we scale each attribution by the length of the remaining intervals:
\begin{align*}
	\phi\left(\xbf, \indic{(a,b]\times (c,d]} \right) \defeq \left( (d-c) \indic{\xbf_1 \in (a,b]}, (b-a) \indic{\xbf_2 \in (c,d]} \right)^\top \, .
\end{align*}
The previous display can be interpreted as follows: if $(c,d]$ is significantly larger than $(a,b]$, then most of the attribution magnitude is assigned to the first feature. 
This makes sense because when $(c,d]$ covers a large range of the second coordinate, the condition $\xbf_1\in(a,b]$ is much more important as an explanation to the prediction $\indic{(a,b]\times(c,d]}$ than the condition 
$\xbf_2\in(c,d]$, which is always satisfied as $(c,d]$ covers a large portion of the second feature. 

\textbf{Second step (Approximation of the model).}
We approximate the function $f$ using a sequence of piecewise-constant functions $(f_n)_{n \geq 1}$ defined as:
\begin{equation}
\label{eq:approximation-step}
\forall n \in \Posint, \qquad f_n \defeq \sum_{i_1, i_2= 0}^{n-1} f\left(a^{(n)}_{i_1}, a^{(n)}_{i_2} \right) \indic{R_{i_1, i_2}^{(n)}} \, , 
\end{equation}
with $\left(a^{(n)}_{i} \right)_{i \in \llbracket 0, n \rrbracket}$ a partition ($n+1$ elements) of $[0,1]$, such that $a^{(n)}_{0} = 0 < a^{(n)}_{1} < \cdots < a^{(n)}_{n} = 1$, and the rectangles
\[
\forall n \in \Posint, \forall i,j \in \llbracket 0, n-1 \rrbracket, \qquad R_{i, j}^{(n)} \defeq \left(a^{(n)}_{i}, a^{(n)}_{i+1} \right] \times \left(a^{(n)}_{j}, a^{(n)}_{j+1} \right] \, .
\]
Standard arguments show that the sequence $(f_n)_{n \geq 1}$ is a good approximation of the model $f$ (see Appendix~\ref{app:3.1} for details). 
We depict this approximation in Figure~\ref{fig:approximation}.

\textbf{Third step (Attribution of sum of indicator functions).} 
Given $\xbf \in [0,1]^2$, if $\phi$ is \texttt{Linear}, we have
\begin{align}
\label{eq:rs-integral-d2}
\begin{split}
\phi\left(\xbf,  f_n  \right)_1 = \sum_{i_1, i_2= 0}^{n-1} f\left(a^{(n)}_{i_1}, a^{(n)}_{i_2} \right) \left( a^{(n)}_{i_2+1}-a^{(n)}_{i_2} \right) \left( \indic{\xbf_1 \leq a^{(n)}_{i_1+1}} - \indic{\xbf_1 \leq a^{(n)}_{i_1}}  \right) \, .
\end{split}
\end{align}

\textbf{Last step (Attribution of the whole model $f$).}
Under mild assumptions, the previous display converges to the following double integral:
\begin{align}
\begin{split} \label{eq:motivation-lim-1}
\forall \xbf \in (0,1]^2, \qquad  \int_{0}^1 \int_{0}^1 f(x,y) \, \Diff (\indic{x \geq \xbf_1}) \Diff y 
=  \int_{0}^1 f(\xbf_1, y) \, \Diff y \, .
\end{split}
\end{align}
The same argument applies to the second coordinate of the attribution. 
Finally, assuming minimal regularity assumptions on $\phi$, the sequence $\phi\left(\xbf, f_n \right)$ converges to $\phi(\xbf, f)$ as $n \to +\infty$. 
Combining this convergence with Equation~\eqref{eq:motivation-lim-1}, we obtain the feature attribution for the whole model $f$:
\begin{align}
\label{eq:pdp-2d}
\forall \xbf \in (0,1]^2, \qquad  \phi(\xbf, f)  = \left( \int_{0}^1 f(\xbf_1, y) \, \Diff y,  \int_{0}^1 f(x, \xbf_2) \, \Diff x\right)^\top 
\, . 
\end{align}
Intuitively, this means that each feature is explained by marginalizing the effect of the remaining feature, \emph{i.e.}, by integrating over it.
As it turns out, this coincides with the Partial Dependence Plot (PDP) of \citep{friedman2001greedy} under the uniform distribution, meaning no prior over the input space is assumed and all points are weighted equally.

\subsection{Motivation for our construction from a functional analysis perspective} 
\label{sec:3.2}

We can recover the previous integral form of feature attribution using a classical result from functional analysis, allowing us to bypass the more involved derivations presented earlier.
The construction from Section~\ref{sec:3.1} relies on the following regularity property of $\phi$:

\begin{definition}[Functional Supremum Continuity]
\label{propbox:fsc}
A feature attribution $\phi$ is \emph{functionally supremum continuous} (\texttt{FSC}) if for all sequences $(f_n)_{n \geq 0}$ ($f_n \in \mathcal{F}$) and all $f \in \mathcal{F}$:
\begin{align*}
 \lim_{n \to +\infty} \norm{f_n-f}_\infty = 0 \implies  \forall \xbf \in \Reals^d,  \lim_{n \to +\infty} \phi (\xbf, f_n) =  \phi (\xbf, f)  \, .
\end{align*}
\end{definition}

In other words, for all $\xbf \in \Reals^d$ and $j \in \llbracket d \rrbracket$, $\phi(\xbf, \cdot)_j$ is continuous where $\mathcal{F}$ is endowed with the topology of uniform convergence.  
\texttt{FSC} is also similar to the \texttt{Robustness} property~\citep{fokkema2023attribution} (see Appendix~\ref{app:2.1}). 
However, the difference is that this property provides stability to the attribution $\phi$ with respect to perturbations of the model $f \in \mathcal{F}$, rather than the explained point $\mathbf{x} \in \mathcal{X}$.
This new property enables us to construct a framework that offers greater flexibility than the axiomatic framework of~\citep{sundararajan2017axiomatic} (Section~\ref{sec:3.1}), without sacrificing a controlled structure for feature attribution.
One motivation for our constructivist approach, taken from functional analysis, is the Riesz–Markov representation theorem which we apply on any \texttt{Linear} and \texttt{FSC} attribution method $\phi$ as follows:

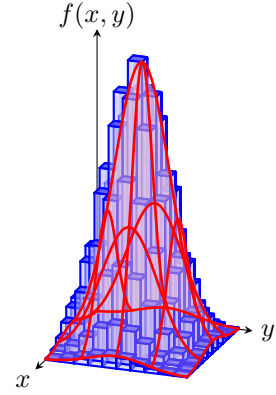
\begin{wrapfigure}{h}{0.4\textwidth}
  \centering
  \begin{tikzpicture}[tdplot_main_coords, line cap=round, line join=round,>=stealth, scale=2,
      declare function={
        f(\x,\y)=2*exp(-((\x-0.5)^2 + (\y-0.5)^2)/(2*0.2^2));
      }
  ]
    \draw[->] (0,0,0) -- (1.2,0,0) node[pos=1.2]{$x$};
    \draw[->] (0,0,0) -- (0,1.1,0) node[pos=1.1]{$y$};
    \draw[->] (0,0,0) -- (0,0,2.0) node[pos=1.05]{$f(x,y)$};

    \def\n{10}                      %
    \pgfmathsetmacro{\step}{1/\n}   %

    \foreach \i in {1,...,\n}{
      \foreach \j in {1,...,\n}{
        \pgfmathsetmacro{\x}{(\i-1)*\step}
        \pgfmathsetmacro{\y}{(\j-1)*\step}
        \pgfmathsetmacro{\xp}{\i*\step}
        \pgfmathsetmacro{\yp}{\j*\step}
        \pgfmathsetmacro{\hp}{f(\xp,\yp)}

        \coordinate (A)  at ({\x},{\y},0);
        \coordinate (B)  at ({\xp},{\y},0);
        \coordinate (C)  at ({\xp},{\yp},0);
        \coordinate (D)  at ({\x},{\yp},0);

        \coordinate (A') at ({\x},{\y},{\hp});
        \coordinate (B') at ({\xp},{\y},{\hp});
        \coordinate (C') at ({\xp},{\yp},{\hp});
        \coordinate (D') at ({\x},{\yp},{\hp});

        \begin{scope}[fill=blue!30, fill opacity=0.5, draw=none]
          \fill (A) -- (D) -- (D') -- (A') -- cycle;
          \fill (C) -- (D) -- (D') -- (C') -- cycle;
          \fill (B) -- (C) -- (C') -- (B') -- cycle;
        \end{scope}

        \draw[blue, thick]
          (A)--(D)--(D')--(A') 
          (C)--(D)--(D')--(C') 
          (B)--(C)--(C')--(B')
          (B)--(B') (C)--(C') (D)--(D');

        \draw[blue, thick, fill=blue!70, fill opacity=0.8]
          (A')--(B')--(C')--(D')--cycle;
      }
    }

  \begin{scope}[fill=red!20, fill opacity=0.3, draw=none]
    \foreach \i in {1,...,\n}{
      \foreach \j in {1,...,\n}{
        \pgfmathsetmacro{\x}{(\i-1)*\step}
        \pgfmathsetmacro{\y}{(\j-1)*\step}
        \pgfmathsetmacro{\xp}{\i*\step}
        \pgfmathsetmacro{\yp}{\j*\step}
        \pgfmathsetmacro{\hA}{f(\x,\y)}
        \pgfmathsetmacro{\hB}{f(\xp,\y)}
        \pgfmathsetmacro{\hC}{f(\xp,\yp)}
        \pgfmathsetmacro{\hD}{f(\x,\yp)}
        \fill 
          ({\x},{\y},{\hA})
          -- ({\xp},{\y},{\hB})
          -- ({\xp},{\yp},{\hC})
          -- ({\x},{\yp},{\hD})
        -- cycle;
      }
    }
  \end{scope}

    \begin{scope}[draw=red, line width=1pt]
      \foreach \X in {0,0.25,...,1}
        \draw plot[variable=\y,domain=0:1,smooth] (\X,\y,{f(\X,\y)});
      \foreach \Y in {0,0.25,...,1}
        \draw plot[variable=\x,domain=0:1,smooth] (\x,\Y,{f(\x,\Y)});
    \end{scope}

  \end{tikzpicture}
   \caption{\label{fig:approximation}Illustration of Eq.~\eqref{eq:approximation-step}. In {\color{red}red} the function $f$ to be approximated; in {\color{blue}blue} the approximation $f_{10}$.}
\end{wrapfigure}

\begin{theorem}[Riesz–Markov in our setting~\citep{kakutani41}] 
\label{th:riesz}
Let $\mathcal{X} = [0,1]^d$ be the input space and $C^0(\mathcal{X})$ be the space of continuous function from $\mathcal{X}$ to $\Reals$. For any \texttt{Linear} and \texttt{FSC} feature attribution $\phi: \mathcal{X} \times C^0([0,1]^d) \to \Reals^d$, there are $d$ unique signed regular Borel measures $\{\mu_{j, \xbf}\}_{j \in \llbracket d \rrbracket}$ on $[0,1]^d$, with $\xbf \in [0,1]^d$, such that:
\[
\forall \xbf \in [0,1]^d, \forall f \in C^0([0,1]^d), \forall j \in \llbracket d \rrbracket,  \qquad  \phi(\xbf, f)_j = \int_{[0,1]^d} f(\ybf) \, \Diff \mu_{j, \xbf}(\ybf) \, . 
\]
\end{theorem}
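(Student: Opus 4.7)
The plan is to reduce this to the classical Riesz–Markov–Kakutani representation theorem by showing that, for each fixed $\xbf$ and $j$, the map $f \mapsto \phi(\xbf, f)_j$ is a bounded linear functional on $C^0([0,1]^d)$.

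First, I would fix $\xbf \in [0,1]^d$ and $j \in \llbracket d \rrbracket$, and define the functional $\Lambda_{\xbf, j}: C^0([0,1]^d) \to \Reals$ by $\Lambda_{\xbf, j}(f) \defeq \phi(\xbf, f)_j$. Linearity of $\Lambda_{\xbf, j}$ is immediate: taking the $j$-th coordinate of the identity $\phi(\xbf, f + \lambda g) = \phi(\xbf, f) + \lambda \phi(\xbf, g)$ from the \texttt{Linearity} property of $\phi$ gives $\Lambda_{\xbf, j}(f + \lambda g) = \Lambda_{\xbf, j}(f) + \lambda \Lambda_{\xbf, j}(g)$.

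Next, I would verify that $\Lambda_{\xbf, j}$ is continuous with respect to the sup norm. The \texttt{FSC} property states that whenever $\norm{f_n - f}_\infty \to 0$, one has $\phi(\xbf, f_n) \to \phi(\xbf, f)$ in $\Reals^d$, which forces $\Lambda_{\xbf, j}(f_n) \to \Lambda_{\xbf, j}(f)$. Since $(C^0([0,1]^d), \norm{\cdot}_\infty)$ is a metric space, sequential continuity of $\Lambda_{\xbf, j}$ at every point is equivalent to continuity; combined with linearity this yields a bounded linear functional, \emph{i.e.}, an element of the topological dual $C^0([0,1]^d)^*$.

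Finally, I would invoke the Riesz–Markov–Kakutani representation theorem on the compact Hausdorff space $[0,1]^d$: every bounded linear functional on $C^0([0,1]^d)$ is represented as integration against a unique signed regular (finite) Borel measure. Applying this to $\Lambda_{\xbf, j}$ produces a unique $\mu_{j, \xbf}$ such that $\Lambda_{\xbf, j}(f) = \int_{[0,1]^d} f(\ybf) \, \Diff \mu_{j, \xbf}(\ybf)$, which is exactly the claimed identity. Repeating this for each $\xbf$ and each $j \in \llbracket d \rrbracket$ yields the full family of measures.

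The only delicate step is the passage from \texttt{FSC} (sequential continuity) to boundedness of $\Lambda_{\xbf, j}$; once one notes that $C^0([0,1]^d)$ with the sup norm is a metric space and that a sequentially continuous linear map on a normed space is automatically bounded, the rest reduces to quoting the classical representation theorem, so there is no serious obstacle beyond bookkeeping.
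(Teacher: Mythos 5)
Your proposal is correct and is exactly the argument the paper intends: the paper offers no separate proof of Theorem~\ref{th:riesz} and simply cites Kakutani's duality $C^0(\mathcal{X})^* \cong$ (signed regular Borel measures), which is what you recover by fixing $\xbf$ and $j$, noting that \texttt{Linearity} makes $f \mapsto \phi(\xbf,f)_j$ linear and that \texttt{FSC} (sequential continuity on the metric space $(C^0([0,1]^d), \norm{\cdot}_\infty)$, hence continuity at $0$, hence boundedness) makes it a bounded functional. No gap; the reduction to the classical representation theorem is complete.
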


In other words, \textbf{any \texttt{FSC} and \texttt{Linear} feature attribution method for models in $C^0([0,1]^d)$ is just an integral.} 
In the setting of Section~\ref{sec:3.1} with $d = 2$, the measures $\{\mu_{j, \xbf}\}_{j \in \llbracket 2 \rrbracket}$ from Th.~\ref{th:riesz} take the form
$\Diff \mu_{1, \xbf} (x, y) \defeq \Diff \delta_{\xbf_1}(x) \Diff y$ and $\Diff \mu_{2, \xbf} (x, y) \defeq \Diff x \Diff \delta_{\xbf_2}(y)$, thereby recovering the attribution of Equation~\eqref{eq:pdp-2d}.
Here, $\delta_{\xbf_1}$ denotes the Dirac measure, which concentrates all the weight at the point $\xbf_1$.
Or more succinctly, $\mu_{1, \xbf} \defeq \delta_{\xbf_1} \otimes \mathcal{L}$ and $\mu_{2, \xbf} \defeq \mathcal{L} \otimes \delta_{\xbf_2}$ with $\mathcal{L}$ the Lebesgue measure on $[0,1]$.

\begin{remark}
\label{rmk:fsc}
If the feature attribution $\phi$ is assumed to be positive, the \texttt{FSC} property is automatically satisfied and the representing measure $\mu_{j, \xbf}$ is positive. 
\end{remark}

\texttt{FSC} can seem very strong but can be interpreted as follows: \emph{two models which have $\epsilon>0$ close attribution (in term of $\norm{\cdot}_2$) should be $\delta(\epsilon)> 0$ distant to each other (in term of $\norm{\cdot}_\infty$)}. 
This is a weak version of the \texttt{Model Invariance} property~\citep{sundararajan2017axiomatic} (Appendix~\ref{app:2.1}). 
Another argument in favor of \texttt{FSC} is given in Remark~\ref{rmk:fsc}, as it is naturally checked by \texttt{Linear} and positive feature attribution methods.

\begin{remark} 
All the theorems in the following sections can be formulated with \texttt{FPC} (see Appendix~\ref{app:3.2}) instead of \texttt{FSC}. Moreover, the topology induced by \texttt{FPC} is weaker than the one generated by \texttt{FSC}.
One should note that Th.~\ref{th:riesz} does not apply with \texttt{FPC}.
For simplicity, we state all the results with \texttt{FSC}.
\end{remark}

\subsection{\texorpdfstring{General case: $d$-dimensional feature attribution}{General case: d-dimensional feature attribution}} 
\label{sec:3.3}

In this section, we extend the construction from Section~\ref{sec:3.1} to the $d$-dimensional input space. 
In doing so, we also specify the measures in Th.~\ref{th:riesz} by defining the attribution assigned to indicator functions $\indic{R}$, where $R \subset [0,1]^d$ is a hyperrectangle.
Our main result is the following:

\begin{theorem}[From atomic to general attribution]
\label{th:measure-atomic-attribution}
Let $\phi$ be a \texttt{Linear} and \texttt{FSC} attribution method.
Let $\{ \mu_{j, \xbf} \}_{j \in \llbracket d \rrbracket,\xbf\in \Inputspace}$
be a family of finite signed Borel \emph{measures} on $[0,1]^d$. 
Assume that the atomic attributions of indicator functions are given by:
\begin{align*}
\forall \xbf \in [0,1]^d, \forall j \in \llbracket d \rrbracket, \qquad
\phi\left(\xbf, \indic{R} \right)_j \defeq 
\mu_{j, \xbf}\left( R \right)  \in \Reals \, ,
\end{align*}
where $R \subset [0,1]^d$ is a hyperrectangle.
Then, the attribution of a continuous model $f: [0,1]^d \to \Reals$ is the following Lebesgue-Stieltjes integral:
\begin{align*}
\forall \xbf \in [0,1]^d, \forall j \in \llbracket d \rrbracket, \qquad \phi\left(\xbf, f \right)_j = \int_{[0,1]^d} f(\ybf) \, \Diff \mu_{j, \xbf}(\ybf)
\, .
\end{align*}
\end{theorem}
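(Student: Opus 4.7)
The plan is to follow the three-layer constructive scheme already foreshadowed in Section~\ref{sec:3.1}: first pin down $\phi$ on indicators of hyperrectangles (this is the hypothesis), then extend by \texttt{Linearity} to finite sums of such indicators, and finally pass to the uniform limit using \texttt{FSC}. Throughout, the integral side is controlled by the fact that each $\mu_{j,\xbf}$ is a finite signed Borel measure, so its total variation $|\mu_{j,\xbf}|([0,1]^d)$ is finite.

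Concretely, given a continuous $f \in C^0([0,1]^d)$, I would first construct a sequence $(f_n)_{n \geq 1}$ of \emph{step functions} of the form
\[
f_n \defeq \sum_{i \in I_n} c_{i,n} \, \indic{R_{i,n}} \, ,
\]
where the $R_{i,n}$ are disjoint hyperrectangles tiling $[0,1]^d$ (a $d$-dimensional analogue of the partition used in Equation~\eqref{eq:approximation-step}) and $c_{i,n} = f(\ybf_{i,n})$ for some $\ybf_{i,n} \in R_{i,n}$. By uniform continuity of $f$ on the compact set $[0,1]^d$ (the classical approximation lemma referenced in Appendix~\ref{app:3.1}), $\norm{f_n - f}_\infty \to 0$ as the mesh of the partition tends to zero. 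Applying \texttt{Linearity} of $\phi$ and the hypothesis on atomic attributions gives
\[
\phi(\xbf, f_n)_j = \sum_{i \in I_n} c_{i,n} \, \mu_{j,\xbf}(R_{i,n}) = \int_{[0,1]^d} f_n(\ybf) \, \Diff \mu_{j,\xbf}(\ybf) \, ,
\]
where the second equality is just the definition of the Lebesgue-Stieltjes integral of a simple function with respect to a signed measure.

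It then remains to pass to the limit on both sides. On the left, \texttt{FSC} applied to $(f_n)_{n\geq 1}$ directly yields $\phi(\xbf, f_n)_j \to \phi(\xbf, f)_j$. On the right, the uniform bound
\[
\abs{\int_{[0,1]^d} (f_n - f) \, \Diff \mu_{j,\xbf}} \leq \norm{f_n - f}_\infty \cdot \abs{\mu_{j,\xbf}}([0,1]^d)
\]
together with finiteness of the total variation gives $\int f_n \, \Diff\mu_{j,\xbf} \to \int f \, \Diff\mu_{j,\xbf}$. Identifying the two limits establishes the claimed integral representation for every $\xbf$ and every coordinate $j$.

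The main obstacles are structural rather than technical. First, one must make sure that the approximating step functions $f_n$ actually live in the function space $\mathcal{F}$ on which \texttt{Linearity} and \texttt{FSC} of $\phi$ are assumed; since the hypothesis supplies the atomic values $\phi(\xbf, \indic{R})_j$ explicitly, it is natural to read the statement as implicitly enlarging $\mathcal{F}$ to contain these indicators and their finite linear combinations, and this should be made clear. Second, some care is needed because \texttt{Linearity} as stated in Definition~\ref{propbox:linearity} is \emph{finite} linearity, so one cannot shortcut the argument by invoking countable additivity; the approximation must stay finite and the passage to the limit must go exclusively through \texttt{FSC}. With these two points handled, the rest of the proof is a routine application of uniform approximation and the finiteness of the representing measures.
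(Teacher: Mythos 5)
Your proof is correct, but it takes a genuinely more direct route than the paper. The paper derives Theorem~\ref{th:measure-atomic-attribution} as a corollary of the integrand-based Theorem~\ref{th:increment-based-atomic-attribution}: it first identifies the limit of the linearized sums $\sum_i c_{i,n}\,\Delta(g_{j,\xbf};R_{i,n})$ with a Riemann--Stieltjes integral via the convergence result of Theorem~\ref{th:general-riemann-stieltjes}, then converts integrands to signed measures through the \texttt{BHK}/right-continuity correspondence of Theorem~\ref{th:signed-borel}, and finally invokes the coincidence of the Riemann--Stieltjes and Lebesgue--Stieltjes integrals. You instead work with the measures $\mu_{j,\xbf}$ from the start: the identity $\sum_i c_{i,n}\,\mu_{j,\xbf}(R_{i,n}) = \int f_n \,\Diff\mu_{j,\xbf}$ holds \emph{exactly} (it is the definition of the integral of a simple function against a signed measure), and the limit on the integral side is controlled by the elementary total-variation bound $\bigl|\int (f_n - f)\,\Diff\mu_{j,\xbf}\bigr| \leq \norm{f_n - f}_\infty\,\abs{\mu_{j,\xbf}}([0,1]^d)$ rather than by Riemann--Stieltjes convergence theory. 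This is self-contained, avoids the increment operator and the \texttt{BV}/\texttt{BHK} machinery entirely, and matches the statement (which is already phrased in terms of measures) more closely; what the paper's longer route buys is the more general integrand version (Theorem~\ref{th:increment-based-atomic-attribution}), which covers \texttt{BV} integrands that do not correspond to Borel measures. Your two flagged caveats --- that $\mathcal{F}$ must implicitly contain indicators of hyperrectangles and their finite linear combinations for \texttt{Linearity} and \texttt{FSC} to apply to the $f_n$, and that only finite linearity is available so all limit-passing must go through \texttt{FSC} --- are exactly the right points to make explicit, and are equally implicit in the paper's own argument; the only further detail worth pinning down is that the tiling rectangles should be taken half-open (as in Theorem~\ref{th:approx}) so that they are genuinely disjoint and the simple-function identity holds without boundary double-counting.
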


One should read Th.~\ref{th:measure-atomic-attribution} as follows: \textbf{first, decide on an attribution for the indicator function $\phi\left(\xbf, \indic{R} \right)$, then, get the attribution for the whole model $\phi\left(\xbf, f \right)$ in the form of an integral.}
The intuition behind the attribution $\phi\left(\xbf, \indic{R} \right)_j \defeq \mu_{j, \xbf}(R)$ is clear: it expresses the attribution of $\indic{R}$ in terms of the measure of the region $R$. 
This reflects how important feature $j$ is for activating $\indic{R}$, where the notion of importance is determined by the measure chosen for that feature.
In other words: \textbf{in our framework, doing feature attribution is equivalent to choosing a family of measures over the input space $\mathcal{X}$}. 

To simplify the selection of the measures, we can restrict our attention to measures $\mu$ that are absolutely continuous with respect to the Lebesgue measure on $[0,1]^d$, meaning they can be written as $\Diff \mu(\xbf) = h(\xbf) \Diff \xbf$ with $h : \Reals^d \to \Reals$ a density function. 
This restriction enhances intuition because the density $h$ can then be interpreted as a weight function over the input space $[0,1]^d$. 
This setting is presented in Appendix~\ref{app:3.3}.

In practice, the implementation of our framework is done in two manners: either Monte-Carlo with a sampler for high dimensional models or a classical Riemann sum approximation of the integral which is vectorized on GPU. 
It is compatible with any PyTorch model and allows the use of arbitrary integrands/measures to explore various instantiations of our feature attribution framework.

\section{Applications of our framework}
\label{sec:4}

Now that our feature attribution framework is defined, we illustrate its applications through several examples, which span through theory and practice.

\subsection{Feature attribution for piecewise affine functions} 
\label{sec:4.1}

We derive a closed-form representation of the feature attribution for the class of piecewise affine continuous functions~$\mathcal{F}$, defined as:

\begin{definition}[Piecewise affine continuous function]\label{def:piecewise-affine}
We say that a continuous function $f:~[0,1]^d \to \Reals$ is piecewise affine continuous if it can be written as
\[
\forall \xbf \in [0,1]^d, \qquad f(\xbf) \defeq \sum_{P \in \mathcal{R}([0,1]^d)} \left(\abf_{P}^\top \xbf + b_P \right) \indic{\xbf \in P} 
\, ,
\]
where $\mathcal{R}([0,1]^d)$ is a partition of $[0,1]^d$ into $d$-polytopes (\emph{i.e.}, any shape in $[0,1]^d$ formed by gluing together finitely many convex $d$-dimensional pieces along their faces), and $\abf_{P} \in \Reals^d$ and $b_P \in \Reals$ are the coefficients of the local linear model in region $P$. 
\end{definition}

Interestingly, \textbf{this class of functions corresponds exactly to the deep ReLU networks}~\citep{petersen2024mathematical} as defined in Appendix~\ref{app:4.1}. 
The following result, direct consequence of Th.~\ref{th:measure-atomic-attribution}, provides a representation of feature attributions for piecewise affine continuous functions, making it directly applicable to deep ReLU networks.

\begin{corollary}[Feature attribution of piecewise affine continuous functions] \label{cor:linear-positive}
Let $\phi$ be a \texttt{Linear} and \texttt{FSC} attribution method.
Let $\{ \mu_{j, \xbf} \}_{j \in \llbracket d \rrbracket,\xbf\in \Inputspace}$
be a family of finite positive Borel \emph{measures} on $[0,1]^d$. 
Assume that the atomic attributions of indicator functions are given by:
\begin{align*}
\forall \xbf \in [0,1]^d, \forall j \in \llbracket d \rrbracket, \qquad
\phi\left(\xbf, \indic{R} \right)_j \defeq 
\mu_{j, \xbf} (R)  \in \Reals_+ \, ,
\end{align*}
where $R \subset [0,1]^d$ is a hyperrectangle.
Then, the attribution of a piecewise affine continuous function~$f$ is given by: 
\begin{align*}
\forall \xbf \in [0,1]^d, \forall j \in \llbracket d \rrbracket, \qquad \phi\left(\xbf, f \right)_j = \sum_{P \in \mathcal{R}([0,1]^d)} \mu_{j, \xbf} (P) \left( \abf_{P}^\top \mathbf{m}^{(P, j, \xbf)} + b_P \right)  \, ,
\end{align*}
where $\mathbf{m}^{(P, j, \xbf)} \in \Reals^d$ is the \emph{center of mass} of $P$ defined as:
\[
\forall i \in \llbracket d \rrbracket, \qquad \mathbf{m}^{(P, j, \xbf)}_i \defeq 
\begin{cases}
(\mu_{j, \xbf}(P))^{-1} \int_{P} \ybf_i \, \Diff \mu_{j, \xbf} (\ybf)  & \text{if $\mu_{j, \xbf}(P) \neq 0$} \, ,\\
0 & \text{otherwise} \, .
\end{cases} 
\]
\end{corollary}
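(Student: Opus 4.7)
The plan is to invoke Theorem~\ref{th:measure-atomic-attribution} to convert the attribution into a Lebesgue--Stieltjes integral, and then to exploit the piecewise affine structure of $f$ to reduce this integral to a finite sum over the polytopes in $\mathcal{R}([0,1]^d)$. Since any piecewise affine continuous $f$ (Definition~\ref{def:piecewise-affine}) lies in $C^0([0,1]^d)$, and since a finite positive Borel measure is in particular a signed regular Borel measure, the hypotheses of Theorem~\ref{th:measure-atomic-attribution} are satisfied, which gives
\[
\phi(\xbf, f)_j = \int_{[0,1]^d} f(\ybf) \, \Diff \mu_{j, \xbf}(\ybf) \, .
\]

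Next I would substitute the explicit expression of $f$. Because $\mathcal{R}([0,1]^d)$ is a finite partition and $\mu_{j,\xbf}$ is a finite measure (so each region $P$ is integrable), one can exchange sum and integral to get
\[
\phi(\xbf, f)_j = \sum_{P \in \mathcal{R}([0,1]^d)} \int_P \left( \abf_P^\top \ybf + b_P \right) \Diff \mu_{j, \xbf}(\ybf) \, .
\]
By linearity of the integral, each local term decomposes componentwise as
\[
\int_P \left( \abf_P^\top \ybf + b_P \right) \Diff \mu_{j, \xbf}(\ybf) = \sum_{i=1}^d (\abf_P)_i \int_P \ybf_i \, \Diff \mu_{j, \xbf}(\ybf) + b_P \, \mu_{j, \xbf}(P) \, .
\]

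Finally, I would use the definition of the center of mass to factor out $\mu_{j,\xbf}(P)$. If $\mu_{j,\xbf}(P) > 0$, then by definition $\int_P \ybf_i \, \Diff \mu_{j,\xbf}(\ybf) = \mu_{j,\xbf}(P) \, \mathbf{m}^{(P,j,\xbf)}_i$, and the local contribution collapses to $\mu_{j,\xbf}(P) \bigl(\abf_P^\top \mathbf{m}^{(P,j,\xbf)} + b_P\bigr)$. If $\mu_{j,\xbf}(P) = 0$, all integrals over $P$ vanish since the integrands are bounded on $[0,1]^d$, so the local contribution is zero, which matches the claimed formula under the convention $\mathbf{m}^{(P,j,\xbf)} = 0$. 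Summing over $P \in \mathcal{R}([0,1]^d)$ yields the announced identity.

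The proof is essentially a direct computation once Theorem~\ref{th:measure-atomic-attribution} is available; the only point worth checking carefully is the degenerate case $\mu_{j,\xbf}(P) = 0$, but both sides of the equality vanish identically there, so this is not a genuine obstacle.
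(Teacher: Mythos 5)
Your proposal is correct and follows essentially the same route as the paper: reduce to the integral representation $\phi(\xbf,f)_j=\int f\,\Diff\mu_{j,\xbf}$, expand $f$ over the finite partition, use linearity, and handle the degenerate case $\mu_{j,\xbf}(P)=0$ via positivity of the measure (the point the paper also singles out, since the vanishing of $\int_P \ybf_i\,\Diff\mu_{j,\xbf}$ would fail for signed measures). The only difference is organizational: the paper specializes its signed-measure Corollary~\ref{cor:relu-attrib}, whereas you redo the (identical) computation directly from Theorem~\ref{th:measure-atomic-attribution}.
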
	
Informally, for this class of models, feature attribution is computed as a weighted sum of the model's output evaluated at the center of mass of each $P \in \mathcal{R}([0,1]^d)$. 
The sum is taken over all elements of the input space partition $\mathcal{R}([0,1]^d)$, with the weight of each element given by its corresponding explanation measure $\mu(P)$.
Notably, \textbf{this corollary also extends to CART trees and random forests} by relying only on \texttt{Linearity} and the atomic attribution of indicator functions.
\texttt{FSC} is not needed as CART-trees create a \emph{fixed} partition of $\mathcal{X}$ with axis-aligned hyperrectangle, and the sole purpose of \texttt{FSC} is to take the limit of the partition.

\begin{remark}
For ReLU networks, the coefficients $(\abf_P, b_P)$ of the linear regions $P$ have closed-form expressions in terms of the network's weights. 
See Appendix~\ref{app:4.1} for details. 
\end{remark}

\subsection{Recovering existing feature attribution methods in our framework}
\label{sec:4.2}

\begin{table}[t]
\caption{\label{tab:example-feature-attributions}Specialization of the measures from Th.~\ref{th:measure-atomic-attribution} to recover existing feature attribution methods. 
Recall that $\xbf_{-j} \in \Reals^{d-1}$ is $\xbf$ without its $j$-th coordinate.
}
\centering
\begin{tabular}{llr}
\toprule
\textbf{Atomic attribution $\phi\left(\xbf, \indic{R} \right)_j$} & \textbf{Attribution for the model $\phi\left(\xbf, f \right)_j$} & \textbf{Method}  \\
\midrule
$\mathbb{P}_{\Xbf}(\Diff \ybf\mid \Xbf_j = \xbf_j)$ & $\condexpec{f(\Xbf)}{\Xbf_j = \xbf_j}$ & \citep{covert2021explaining} \\
$\prod_{i = 1}^{j-1} \mathbb{P}_{\Xbf_i}(\Diff \ybf_i)  \delta_{\xbf_j}(\Diff \ybf_j) \prod_{k = j+1}^{d} \mathbb{P}_{\Xbf_k}(\Diff \ybf_k) $  & $\expecunder{f\left(\Xbf^{(\xbf_j, j)}\right)}{\bigotimes_{i \in \llbracket d \rrbracket, i\neq j} \mathbb{P}_{\Xbf_i}}$ & \citep{covert2021explaining} \\
$ \delta_{\xbf_j}(\Diff  \ybf_j) \mathbb{P}_{\Xbf_{-j}}(\Diff  \ybf_{-j})$ & $\expecunder{f\left(\Xbf^{(\xbf_j, j)}\right)}{\mathbb{P}_{\Xbf_{-j}}}$ & \citep{friedman2001greedy} \\
\bottomrule
\end{tabular}
\end{table}

In Table~\ref{tab:example-feature-attributions}, we show how particular choices of measures leads to recovering existing feature attribution methods. 
All the methods listed in Table~\ref{tab:example-feature-attributions} can be expressed as expectations, making it possible to recover them within our framework. 
The first entry frames feature attribution in the feature removal setting of~\citep{covert2021explaining}, specifically, by marginalizing features with respect to a conditional distribution.
The second entry assumes feature independence and performs marginalization using the product of marginals $\mathbb{P}_{\Xbf_i}$. 
The third entry corresponds exactly to PDP where the explained feature is fixed and the remaining features are marginalized out using the remaining joint distribution.
These approaches all share a common principle: feature attribution is defined with respect to some assumed data-generating distribution, typically approximated by the empirical dataset. 

We now turn to linear models. 
A natural feature attribution for $f_\wbf: \xbf \mapsto \wbf^\top \xbf$ is to look at the coefficients $\wbf_j$. 
Using the family of measures
\[
\mu_{j}(\Diff  \ybf)  \defeq  2 \prod_{i=1}^{j-1} \delta_0(\Diff  \ybf_i)  \, \Diff \ybf_j \prod_{k=j+1}^{d} \delta_0(\Diff \ybf_k)  \, ,
\]
we can recover the coefficients $\wbf_j$. 
This fact is made rigorous in Appendix~\ref{app:4.2}.

\subsection{Optimizing the feature attribution}\label{sec:4.3}

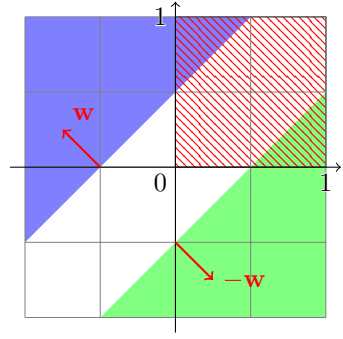
\begin{wrapfigure}{!t}{0.4\textwidth}
  \centering
      \begin{tikzpicture}
    
        \fill[green!50] (-1,-2) -- (2,-2) -- (2,1);
        \fill[blue!50] (-2,-1) -- (-2,2) -- (1,2);
        \draw[pattern=north west lines, pattern color=red] (0,0) rectangle (2,2);
        
        \draw[very thin,color=gray] (-2,-2) grid (2,2);
        \draw[->] (-2.2,0) -- (2.2,0) node[right] {};
        \draw[->] (0,-2.2) -- (0,2.2) node[above] {};
        \node at (2,-0.2) {\contour{white}{$1$}};
        \node at (-0.2,2) {\contour{white}{$1$}};
        \node at (-0.2,-0.2) {$0$};

         \draw[thick, ->, red] (-1,0) -- (-1.5,1/2) node[red, above right] {$\wbf$};
         \draw[thick, ->, red] (0,-1) -- (0.5,-1.5) node[red, right] {$-\wbf$};;

    \end{tikzpicture}
\caption{\label{fig:regression-optimal-measures-1}Description of the solution to \eqref{eq:pb-optim-2} for $f_\wbf$ with $d = 2$ and $\wbf \in \Reals^2$. The {\color{green}green} and {\color{blue}blue} areas correspond to the solution set $\mathcal{S}_j$ ($j \in \llbracket 2 \rrbracket$) of Th.~\ref{th:optim-1} when $j \in D_{1, \wbf}$. The {\color{red}red} hatched area represent the input space. 
All measures such that their center of mass belong to the {\color{green}green} or {\color{blue}blue} areas are optimal for \eqref{pb:recall-1}.
}
\end{wrapfigure}
Our framework is natural for optimization, as it provides a well-defined search space: the space of measures. 
However, optimization in measure space is generally difficult and cannot be performed in closed form~\citep{molchanov2000tangent}. 
Nevertheless, for piecewise affine models, this optimization problem reduces to a finite-dimensional problem. 
We detail this procedure when $\phi$ is \emph{global} and the criterion for the quality of attribution is \emph{Recall}~\citep{ribeiro2016should}. 
For linear models with a ground-truth feature attribution available, it is defined as:

\begin{definition}[Recall]  
\label{def:formal-recall}
Given a threshold $\beta > 0$ and a linear model $f_\wbf : [0,1]^d \to \Reals$, with weights $\wbf \in \Reals^d$, and the index partition $D_{1, \wbf} \sqcup D_{0, \wbf} = \llbracket d \rrbracket$, where $D_{0, \wbf} \defeq \{ i \in \llbracket d \rrbracket : \abs{\wbf_i} \leq \beta \}$ and $D_{1, \wbf} \defeq D_{0, \wbf}^\complement$ (called the \emph{golden set}). 
    Also, let $\phi$ be a global feature attribution method which depends on a vector of measures $\mu \defeq ( \mu_j )_{j \in \llbracket d \rrbracket} \in \mathcal{P}^d$. 
	We define the \emph{Recall} metric, with a threshold $\alpha > 0$, as
	\[
	 \mathrm{Recall}_{\alpha, \wbf}(\mu) \defeq \frac{\sum_{j \in D_{1, \wbf}} \indic{\abs{\phi(f_\wbf)_j} \geq \alpha}}{\sum_{j \in D_{1, \wbf}} \indic{\abs{\phi(f_\wbf)_j} \geq \alpha} + \sum_{j \in D_{1, \wbf}} \indic{\abs{\phi(f_\wbf)_j} < \alpha}} = \frac{\sum_{j \in D_{1, \wbf}} \indic{\abs{\phi(f_\wbf)_j} \geq \alpha}}{\card{D_{1, \wbf}}}  \, ,
\]
where $\mathcal{P}^d$ is the space of probability measures on $[0,1]^d$.
\end{definition}

Informally, \emph{Recall} measures the proportion of features retrieved by $\phi$ that are actually used by the model $f_\wbf$. 
Thus, ideally, we would like to solve 
\begin{equation}
\label{pb:recall-1}
\Argmax_{\mu \in \mathcal{P}^d} \mathrm{Recall}_{\alpha, \wbf}(\mu) 
\, .    
\end{equation}
This can be casted into an optimization problem in $[0,1]^d$ (see Appendix~\ref{app:4.3}) as 
\begin{equation}\label{eq:pb-optim-2}
 \Argmax_{\mbf^{(1)},  \ldots, \mbf^{(d)} \in  [0,1]^d } \mathrm{Recall}_{\alpha, \wbf}(\mbf^{(1)}, \ldots, \mbf^{(d)}) \, .
\end{equation}
The measures solving Problem~(\ref{pb:recall-1}) are generally model-specific, \emph{i.e.}, they depend on the particular choice of $\wbf$ and are denoted by $\mu_{\wbf}$.
To identify measures that perform well across a class of models, we can intersect the sets of optimal solutions obtained for individual models, provided the intersection is non-empty.
That is why, defining the evaluation metric on specific models is natural, since finding a measure that is optimal for all possible unconstrained models is, in general, not feasible. 
Using geometric arguments, one can show that the measures $\mu$ which optimize $\mathrm{Recall}_{\alpha, \wbf}$ are projected on vectors that lie above a given hyperplane as follows
\begin{theorem}[Optimal projections for $\mathrm{Recall}_{\alpha,\mathbf{w}}$] \label{th:optim-1}
For each $j \in \llbracket d \rrbracket$ define
	\begin{align*}
		\mathcal{S}_j \defeq 
		\begin{cases}
			\left\{\mbf \in [0,1]^d : \wbf^\top \mbf \geq \alpha \right\} \cup \left\{\mbf \in [0,1]^d : \wbf^\top \mbf \leq -\alpha \right\} \text{  if $j \in D_{1, \wbf}$} \, ,\\
			[0,1]^d  \text{  otherwise.}
		\end{cases}
	\end{align*}
Then, the solutions of \eqref{eq:pb-optim-2} are $\mbf^{(j)}\in \mathcal{S}_j$ for all $j\in \llbracket d\rrbracket$. 
\end{theorem}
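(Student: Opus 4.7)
The plan is to collapse the indicator-sum objective into a coordinate-separable problem and solve each coordinate in closed form. The key step is to apply Corollary~\ref{cor:linear-positive} to the linear model $f_\wbf$, which is trivially piecewise affine continuous with the one-element partition $\mathcal{R}([0,1]^d) = \{[0,1]^d\}$, slope $\abf_{[0,1]^d} = \wbf$, and intercept $b_{[0,1]^d} = 0$. Since each $\mu_j$ is a probability measure on $[0,1]^d$, the corollary collapses to
$$\phi(f_\wbf)_j = \wbf^\top \mbf^{(j)} \, ,$$
where $\mbf^{(j)} \defeq \int_{[0,1]^d} \ybf \, \Diff \mu_j(\ybf) \in [0,1]^d$ is the center of mass of $\mu_j$. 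This identity is the reason why \eqref{eq:pb-optim-2} is a legitimate finite-dimensional reformulation of \eqref{pb:recall-1}: the only statistic of $\mu_j$ that the objective sees is $\mbf^{(j)}$, and every point of $[0,1]^d$ is realized as such a center of mass, e.g.~via a Dirac.

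Substituting this identity into $\mathrm{Recall}_{\alpha, \wbf}$ and dropping the fixed denominator $\card{D_{1, \wbf}}$, the objective reads
$$\sum_{j \in D_{1, \wbf}} \indic{\abs{\wbf^\top \mbf^{(j)}} \geq \alpha} \, .$$
The sum is separable: each term depends on a single $\mbf^{(j)}$ with $j \in D_{1, \wbf}$, while the variables $\mbf^{(j)}$ for $j \notin D_{1, \wbf}$ do not appear at all. Hence the joint maximum is attained by maximizing each indicator independently, and any $\mbf^{(j)} \in [0,1]^d = \mathcal{S}_j$ is optimal when $j \notin D_{1, \wbf}$, which matches the theorem's definition of $\mathcal{S}_j$ in that case.

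For $j \in D_{1, \wbf}$, each indicator is bounded above by $1$ and attains this bound precisely when $\wbf^\top \mbf^{(j)} \geq \alpha$ or $\wbf^\top \mbf^{(j)} \leq -\alpha$, i.e.~$\mbf^{(j)} \in \mathcal{S}_j$. To confirm that $1$ is actually achievable I would exhibit an explicit witness: the vertex $\mbf \in \{0,1\}^d$ defined by $\mbf_i = 1$ iff $\wbf_i \wbf_j > 0$ satisfies $\abs{\wbf^\top \mbf} \geq \abs{\wbf_j} > \beta$, and therefore lies in $\mathcal{S}_j$ under the implicit regime $\alpha \leq \beta$. Assembling the two cases yields the product structure $\mathcal{S}_1 \times \cdots \times \mathcal{S}_d$ for the set of solutions.

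The main obstacle is the first step: turning the corollary on piecewise affine continuous functions into the clean identity $\phi(f_\wbf)_j = \wbf^\top \mbf^{(j)}$, which reduces an a priori intractable optimization over signed measures to a geometric half-space condition on a single vector in $[0,1]^d$. Once this identification is made, the remainder is a routine separable maximization of a sum of $\{0,1\}$-valued indicators and the geometric description of $\mathcal{S}_j$ as the intersection of $[0,1]^d$ with the complement of the slab $\{\mbf : \abs{\wbf^\top \mbf} < \alpha\}$.
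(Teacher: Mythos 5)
Your proof is correct and follows essentially the same route as the paper's: reduce the attribution of the linear model to $\phi(f_\wbf)_j = \wbf^\top \mbf^{(j)}$ (you via Corollary~\ref{cor:linear-positive} with the trivial one-region partition, the paper via its linear-model corollary), exploit separability of the objective across the $\mbf^{(j)}$, and read off $\mathcal{S}_j$ as the set where each indicator attains its maximum of $1$. Your explicit vertex witness showing that $\mathcal{S}_j \neq \emptyset$ for $j \in D_{1,\wbf}$ under the regime $\alpha \leq \beta$ is a worthwhile addition that the paper's proof leaves implicit.
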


This theorem is illustrated in Figure~\ref{fig:regression-optimal-measures-1}. 
It should be interpreted as a sanity check for the optimality of a given measure and can thus aid in designing good measures. 
More precisely, to check if a measure $\mu$ is optimal, we compute its center of mass (see Appendix~\ref{app:4.3}) and see if it lies inside of a given solution set $\mathcal{S}_j$.
Also, it is possible to construct trivial or degenerate measures that project to a given vector, but these are typically model-dependent. 

There are two main limitations to this result.
First, it relies on the availability of a ground-truth for the features, which is not always the case. 
Second, it does not provide a direct characterization of the optima in the space of measures.
Instead, it characterizes the optima in $[0,1]^d$ after projecting the measures. 
Since this projection is not bijective, it seems that there is no straightforward way to recover all possible measures that map to a given vector in $[0,1]^d$. 
Additional analysis related to the \emph{Precision} metric, a natural counterpart to \emph{Recall}, and the optimization of feature attribution for ReLU networks can be found in Appendix~\ref{app:4.3}.

\section{Conclusion}

In this paper, we showed that any attribution method satisfying the axioms introduced in~\citep{sundararajan2017axiomatic} can produce attributions similar to \emph{Gradient$\times$Input}. 
Based on this observation, we propose a new framework for feature attribution. 
By defining attributions for indicator functions and making mild assumptions on the feature attribution method, one can reconstruct the attribution for the entire model. 
We then express several existing XAI methods within our framework and derive closed-form attributions for deep ReLU networks. 
An implementation of this framework is available online.
Finally, we use our framework to introduce a sanity check for feature attribution methods and take a first step toward theoretically optimizing feature attribution with respect to a given evaluation metric.

As future work, we want to explore the optimization of feature attribution, as introduced in Section~\ref{sec:4.3}, along two main directions. 
First, we aim to better characterize the properties of attributions that optimize a given evaluation metric, and, if possible, derive closed-form solutions. 
Second, on the empirical side, we plan to apply the gradient ascent algorithm of \citep{molchanov2002steepest} in the context of feature attribution, providing a way of empirically constructing optimal feature attribution methods.

\medskip

\bibliography{biblio}
\bibliographystyle{plain}

\clearpage
\appendix

\section*{Appendix}
\addcontentsline{toc}{section}{Appendix}
\printappendixtoc

\section{Appendix for Section~\ref{sec:limitations}} \label{app:2}

\subsection{Appendix for Section~\ref{sec:limitations.1}} \label{app:2.1}

Below, we present several axioms/properties discussed in the literature. The list is intentionally selective rather than exhaustive: we include only those that are either directly cited in our main results or for which we provide additional analysis.
Before introducing the first property, we need an auxiliary definition:
\begin{definition}[Action of a partition on vector] \label{def:action-vector}
	Given $S$ a partition of $\llbracket d \rrbracket$ and a vector $\Img \in  \Reals^d$, we define $\Img^S  \in \Reals^d$ as the unique vector verifying:
	\[
	\forall i \in \llbracket d \rrbracket , \exists! \, S' \in S , \qquad \left( i \in S' \implies \Img_i^{S} = \Img_{\min{S'}}   \right) \, .
	\]
\end{definition}
Informally, $\Img^S$ is obtained by making all components of $\Img$ that belong to the same subset $S' \in S$ equal to the component of $\Img$ at the smallest index in $S'$. Meaning, we assign the same value to all indices within each partition element.
We illustrate this definition as follows: given $d = 3$, the partition $S = \{ \{1,3\}, \{2\}\}$ and input $\Img \defeq (5, -2, 1)^\top$, then $\Img^{S} = (5,-2,5)^\top$.
\begin{definition}[Weak symmetry~\cite{sundararajan2017axiomatic}]
	A feature attribution method $\phi$ is \texttt{weakly-symmetric} if:
	\begin{align*}
	& \forall f \in \mathcal{F}, \forall \sigma \in \mathfrak{S}_d, \qquad \left(  \forall \xbf \in \Reals^d, \, f(\xbf) = f(g_\sigma(\xbf)) \implies  \forall \Img \in \Reals^d, \, \phi(\Img, f) = g_\sigma (\phi(\Img^{\llbracket d \rrbracket \small/ \sigma}, f)) \right) \, ,
	\end{align*}
	where $\Img^{\llbracket d \rrbracket \small/ \sigma}$ is defined as in Definition~\ref{def:action-vector}, using the partition $\llbracket d \rrbracket \small/ \sigma$  corresponding to the orbits of $\llbracket d \rrbracket$ under the action of $\sigma$ (\emph{i.e.}, two elements $x, y \in \llbracket d \rrbracket$ are in the same orbit if there exists $n \in \mathbb{N}$ s.t. $\sigma^n(x) = y$).
\end{definition}

For example, if we take the permutation $\sigma = (1 2) (3 5)$ (with $d = 5$), then $\llbracket d \rrbracket \big/ \sigma = \left\{  \{1,2 \}, \{3,5 \}, \{4\} \right\}$.
We introduces a stronger version of \texttt{Weak symmetry}, which avoids dependence on the partition $\llbracket d \rrbracket \small/ \sigma$ and leads to a more natural mathematical formulation.

\begin{definition}[Strong symmetry]
A feature attribution method $\phi$ is \texttt{strongly-symmetric} if:
	\begin{align*}
		& \forall f \in \mathcal{F}, \forall \sigma \in \mathfrak{S}_d, \qquad \left(  \forall \xbf \in \Reals^d, \, f(\xbf) = f(g_\sigma(\xbf)) \implies  \forall \Img \in \Reals^d, \, \phi(\Img, f) = g_\sigma (\phi(\Img, f)) \right) \, ,
	\end{align*}
	where $g_\sigma (\xbf) \defeq \left(  \xbf_{\sigma(1)}, \xbf_{\sigma(2)}, \ldots, \xbf_{\sigma(d)} \right)$ for $\xbf \in \Reals^d$ and $\mathfrak{S}_d$ is the permutation group of $\llbracket d \rrbracket$.
\end{definition}

To illustrate the difference between \texttt{Weak symmetry} and \texttt{Strong symmetry}, consider the simple model $f : \Reals^2 \to \Reals$ defined by $f(x, y) \defeq x + y$, and the permutation $\sigma = (1 \, 2)$ that swaps the two input features. 
Since $f(x, y) = f(y, x)$ for all $(x, y) \in \Reals^2$, the function $f$ is invariant under~$\sigma$.
A feature attribution method that satisfies \texttt{Weak symmetry} is only required to produce equal attributions when the input itself is invariant under $\sigma$. 
In other words:
\begin{equation} \label{eq:sym-weak}
    \forall x \in \Reals, \qquad \phi((x, x), f)_1 = \phi((x, x), f)_2 \, .
\end{equation}
By contrast, a \texttt{Strongly-symmetric} attribution method must produce equal attributions for all inputs whenever the function is invariant under $\sigma$, regardless of the specific input values:
\begin{equation} \label{eq:sym-strong}
    \forall (x, y) \in \Reals^2, \qquad \phi((x, y), f)_1 = \phi((x, y), f)_2 \, .
\end{equation}
This stricter requirement ensures that the symmetry of the function is fully reflected in the attributions across all inputs.
\begin{definition}[Model invariance~\cite{sundararajan2017axiomatic}]
	A feature attribution method $\phi$ is \texttt{model invariant} if:
	\begin{align*}
		& \forall f_0, f_1 \in \mathcal{F}, \qquad \left( \forall \xbf \in \Reals^d, \, f_0(\xbf) = f_1(\xbf) \implies \forall \Img \in \Reals^d, \, \phi(\Img,f_0 ) = \phi(\Img,f_1 ) \right) \, .
	\end{align*}
\end{definition}
In other words, if two models yield the same outputs for the same inputs, their attributions should be identical as well.
\begin{definition}[Robustness~\cite{fokkema2023attribution}]
	A feature attribution method $\phi$ is \texttt{robust} if:
	\begin{align*}
		\forall f \in \mathcal{F},  \qquad \phi(\cdot, f) \text{ is continuous} \, .
	\end{align*}
\end{definition}
\texttt{Robustness} ensures that the attributions are stable to the perturbations of the explained input $\xbf$.

\begin{definition}[Agreeing with linear models~\cite{verdinelli2024feature}]
A \emph{global} feature attribution method $\phi$ is agreeing with linear models if:
\[
\forall \wbf \in \Reals^d, \forall j \in \Reals^d, \qquad \phi(\wbf^\top \times (\cdot))_j = \wbf_j^2 \, .
\]
\end{definition}
In other words, this axiom treats linear models as a sanity check, since they provide a ground-truth attribution by design.

\subsection{Appendix for Section~\ref{sec:limitations.2}} \label{app:2.2}
In  this section, we prove the claims made at the end of Section~\ref{sec:limitations.2}.
First, one can not remove the baseline in the \texttt{Completeness} property as it creates a contradiction.

\begin{proposition}[Impossibility of completeness without baseline]
	Any \texttt{sensitive} and \texttt{complete} (without baseline) attribution method $\phi$ lead to a contradiction, where \texttt{Completeness} without baseline is defined as
	\begin{align*}
		\forall f \in \mathcal{F}, \forall \xbf \in \Reals^d, \qquad f(\xbf)  = \sum_{j \in \llbracket d \rrbracket} \phi(\xbf, f)_j  \, .
	\end{align*}
\end{proposition}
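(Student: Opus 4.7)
The plan is to exhibit a single explicit function on which the two axioms cannot both hold. The natural candidate is any nonzero constant function $f \equiv c$ with $c \neq 0$, assumed to lie in $\mathcal{F}$ (which is reasonable since $\mathcal{F}$ is a vector space of functions on $\mathcal{X}$, and constants are typically included; otherwise one can pick any $f \in \mathcal{F}$ which does not depend on any coordinate, e.g.\ by scaling).

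First I would unpack \texttt{Sensitivity} applied to this $f$. For every $j \in \llbracket d \rrbracket$, every $\xbf \in \Reals^d$, and every $x' \in \Reals$, one has $f(\xbf) = c = f(\xbf^{(x',j)})$, so the hypothesis of the sensitivity implication is satisfied for every coordinate. Consequently, $\phi(\Img, f)_j = 0$ for all $\Img \in \Reals^d$ and all $j \in \llbracket d \rrbracket$, so $\sum_{j \in \llbracket d \rrbracket} \phi(\Img, f)_j = 0$.

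Next I would invoke \texttt{Completeness} without baseline on the same $f$: for every $\Img \in \Reals^d$, $f(\Img) = c = \sum_{j \in \llbracket d \rrbracket} \phi(\Img, f)_j$. Combining the two displays yields $c = 0$, contradicting the choice $c \neq 0$. This concludes the argument.

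The only subtle point, and the main thing to justify carefully, is that the contradiction argument requires exhibiting at least one $f \in \mathcal{F}$ which is insensitive to every feature but takes a nonzero value somewhere. If $\mathcal{F}$ contains the nonzero constant functions this is immediate; more generally, it suffices that $\mathcal{F}$ contain some function which is \emph{constant on $\Reals^d$} and not identically zero, which is a mild and standard assumption on the hypothesis class. Under this minimal closure assumption on $\mathcal{F}$, the incompatibility is immediate and the proof fits in a couple of lines.
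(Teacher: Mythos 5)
Your proof is correct and coincides with the paper's own argument: both apply \texttt{Sensitivity} to a nonzero constant function to force all attributions to vanish, then derive $c=0$ from baseline-free \texttt{Completeness}. Your extra remark about $\mathcal{F}$ containing nonzero constants is a reasonable point of care, but it does not change the substance of the argument.
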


\begin{proof}
	Given a constant $c \in \Reals^\star$ and input $\xbf \in \Reals^d$, we define the constant function $\bar c : \ybf \mapsto c$. 
        By \texttt{Sensitivity} property, we get $\phi(\xbf, \bar c)_j = 0$, for all $j \in \llbracket d \rrbracket$.
	Now, using \texttt{Completeness} without baseline property
	\begin{align*}
		\bar c(\xbf) & = \sum_{j \in \llbracket d \rrbracket} \phi(\xbf, \bar c)_j \, .
	\end{align*}
    This implies that $c = 0$, which is absurd because $c \neq 0$.
\end{proof}
	Another rigidity of the given axioms/properties is that one can hardly make them local. For instance, trying the following modification of \texttt{Sensitivity} which seems quite natural introduces an incoherency.
\begin{proposition}[Absurdity of local sensibility]
	Given a baseline $\xbf' \in \Reals^d$ and $\delta > 0$, any attribution method $\phi$ which is  $\delta$-\texttt{sensitive}, and $\xbf'$-\texttt{complete} leads to a contradiction.
	We define (local) $\delta$-\texttt{sensitivity} as:
	\begin{align*}
		& \forall f \in \mathcal{F}, \forall \xbf_0 \in \Reals^d, \forall j \in \llbracket d \rrbracket, \quad \\
		&\left(  \forall \xbf \in B(\xbf_0, \delta), \forall x' \in [(\xbf_0)_j-\delta, (\xbf_0)_j+\delta] , \, f(\xbf) = f(\xbf^{(x',j)}) \implies  \forall \Img \in B(\xbf_0, \delta), \, \phi(\Img, f)_j = 0 \right) \, .
	\end{align*}
\end{proposition}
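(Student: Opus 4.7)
The plan is to construct an explicit counterexample: a function $f$ on which the hypothesis of $\delta$-\texttt{sensitivity} forces all coordinates of $\phi$ to vanish at some point, while $\xbf'$-\texttt{completeness} forces their sum to equal a prescribed nonzero value. The key geometric observation is that if $f$ is constant on a ball of radius slightly larger than $\delta$ around $\xbf_0$, then every perturbation $\xbf^{(x',j)}$ appearing in the hypothesis of $\delta$-\texttt{sensitivity} stays inside that ball, so the hypothesis is satisfied automatically.

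First, I would pick $\xbf_0 \in \Reals^d$ with $\norm{\xbf_0 - \xbf'}_2 > \delta\sqrt{2}$; such a point exists since $\xbf'$ and $\delta$ are fixed. Then I would construct $f \in \mathcal{F}$ (for concreteness a smooth bump, so membership in reasonable function classes is not an issue) with the two properties $f \equiv c$ on the closed ball $\overline{B}(\xbf_0, \delta\sqrt{2})$ for some nonzero constant $c$, and $f(\xbf') = 0$. The radius $\delta\sqrt{2}$ is chosen because for any $\xbf \in B(\xbf_0, \delta)$, any coordinate $j \in \llbracket d \rrbracket$, and any $x' \in [(\xbf_0)_j - \delta, (\xbf_0)_j + \delta]$, one has
\begin{align*}
\norm{\xbf^{(x',j)} - \xbf_0}_2^2 = \sum_{i \neq j} (\xbf_i - (\xbf_0)_i)^2 + (x' - (\xbf_0)_j)^2 \leq \delta^2 + \delta^2 = 2\delta^2 ,
\end{align*}
so both $\xbf$ and $\xbf^{(x',j)}$ lie in $\overline{B}(\xbf_0, \delta\sqrt{2})$ and therefore $f(\xbf) = f(\xbf^{(x',j)}) = c$.

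With the hypothesis of $\delta$-\texttt{sensitivity} verified for every $j \in \llbracket d \rrbracket$ with anchor point $\xbf_0$, the conclusion gives $\phi(\Img, f)_j = 0$ for every $\Img \in B(\xbf_0, \delta)$ and every $j$. Specialising to $\Img = \xbf_0$ and summing over $j$ yields $\sum_{j \in \llbracket d \rrbracket} \phi(\xbf_0, f)_j = 0$. On the other hand, $\xbf'$-\texttt{completeness} applied at $\xbf_0$ gives $\sum_{j \in \llbracket d \rrbracket} \phi(\xbf_0, f)_j = f(\xbf_0) - f(\xbf') = c - 0 = c$, so $c = 0$, contradicting the choice $c \neq 0$.

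The proof is essentially mechanical once the geometric calculation above is noticed; the only real choice is the radius $\delta\sqrt{2}$, which is tight enough to ensure the invariance hypothesis of $\delta$-\texttt{sensitivity} while leaving $\xbf'$ strictly outside the region of constancy. The step I expect to require the most care is bookkeeping the quantifiers in the definition of $\delta$-\texttt{sensitivity} (the inner universal over $\xbf \in B(\xbf_0, \delta)$ and $x'$) to make absolutely sure the perturbed point never escapes the region where $f$ is constant; the factor $\sqrt{2}$ rather than $2$ comes from the fact that only one coordinate is perturbed, and missing this would weaken the argument unnecessarily.
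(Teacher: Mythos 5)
Your proof is correct and follows essentially the same strategy as the paper's: exhibit a function that is locally constant around a point $\xbf_0$ away from the baseline, so that the hypothesis of $\delta$-\texttt{sensitivity} is vacuously satisfied for every coordinate and forces $\phi(\xbf_0,f)=0$, and then contradict $\xbf'$-\texttt{completeness} because $f(\xbf_0)-f(\xbf')\neq 0$. The only difference is the witness --- the paper takes $\indic{\xbf_1\geq 0}$ in $d=2$ while you take a smooth bump in arbitrary dimension --- and your explicit $\delta\sqrt{2}$ computation is, if anything, more careful than the paper's rather loose claim that sensitivity applies on all of $B(0,\delta)^\complement$.
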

\begin{proof}
	We set the input dimension $d = 2$ and take the model $f(\xbf) \defeq \indic{\xbf_1 \geq 0}$. Also, take the baseline $\xbf' \in (\Reals_{-}^\star)^2$ such that $f(\xbf') = 0$.
	Now, by applying $\delta$-\texttt{sensitivity}, we get
	\begin{align*}
		\forall \xbf \in B(0, \delta)^\complement, \qquad \phi(\xbf, f) = 0.
	\end{align*}
	Finally, using \texttt{Completeness}, we have
	\begin{align*}
		\forall \xbf \in (\delta, +\infty)^2, \qquad  f(\xbf)  &= \sum_{j \in \llbracket d \rrbracket} \phi(\xbf, f)_j \, . 
	\end{align*}
    This implies that
    \[
    \forall \xbf \in (\delta, +\infty)^2, \qquad \indic{\xbf_1 \geq 0}  = 0 \, .
    \]
	But, $\indic{\xbf_1 \geq 0} \neq 0$ for $\xbf_1 \in (\delta, +\infty)$.
\end{proof}
Finally, another straightforward incoherency arises from the \texttt{Strong-symmetry} property, which we apply on the indicator functions
\begin{proposition}[Absurdity of strong-symmetry]
	If the model class $\mathcal{F}$ includes the indicator functions with respect to one coordinate. Then \texttt{Sensitivity}, \texttt{Completeness}  and \texttt{Linearity} are incompatible with \texttt{Strong-symmetry}.
\end{proposition}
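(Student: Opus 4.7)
The plan is to exhibit an explicit sum of two single-coordinate indicator functions whose attribution at a carefully chosen input is forced by \texttt{Completeness}, \texttt{Sensitivity} and \texttt{Linearity} to be coordinate-asymmetric, while \texttt{Strong-symmetry} demands symmetry. Since this sum will be invariant under a simple transposition of two coordinates, the clash yields a contradiction.

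Concretely, I would take the two one-coordinate indicators $f_1(\xbf) \defeq \indic{\xbf_1 \geq 0}$ and $f_2(\xbf) \defeq \indic{\xbf_2 \geq 0}$ (both in $\mathcal{F}$ by assumption), and define $g \defeq f_1 + f_2$. Swapping the first two coordinates via $\sigma = (1\,2)$ only reorders the summands, so $g \circ g_\sigma = g$ everywhere. I would then pick an input that distinguishes the two features, for instance $\xbf \defeq (1,-1,0,\dots,0)$, together with a baseline $\xbf' \defeq (-1,\dots,-1)$ for which $f_1(\xbf') = f_2(\xbf') = 0$.

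The computation then runs as follows. By \texttt{Sensitivity} applied separately to $f_1$ and $f_2$, one gets $\phi(\xbf, f_1)_j = 0$ for $j \neq 1$ and $\phi(\xbf, f_2)_j = 0$ for $j \neq 2$. Combined with $\xbf'$-\texttt{Completeness}, this pins down the remaining coordinates: $\phi(\xbf, f_1)_1 = f_1(\xbf) - f_1(\xbf') = 1$ and $\phi(\xbf, f_2)_2 = f_2(\xbf) - f_2(\xbf') = 0$. Adding through \texttt{Linearity} yields $\phi(\xbf, g)_1 = 1$ and $\phi(\xbf, g)_2 = 0$. But applying \texttt{Strong-symmetry} to the $\sigma$-invariant $g$ and unfolding the definition of $g_\sigma$ gives $\phi(\xbf, g)_i = \phi(\xbf, g)_{\sigma(i)}$ for all $i$, hence $\phi(\xbf, g)_1 = \phi(\xbf, g)_2$, that is $1 = 0$.

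The main obstacle is essentially bookkeeping: one must check carefully that $g$ is genuinely invariant under $g_\sigma$ (not merely under reindexing of the inputs), and that the conclusion of \texttt{Strong-symmetry}, read through the formula $g_\sigma(\mathbf{v})_i = \mathbf{v}_{\sigma(i)}$, really delivers the coordinate equality $\phi(\xbf,g)_1 = \phi(\xbf,g)_2$. Once this alignment is made explicit, no quantitative estimate is required---the contradiction is a direct consequence of \texttt{Sensitivity}$+$\texttt{Completeness} carrying ``too much'' information about the individual indicator structure for \texttt{Strong-symmetry} on the sum to tolerate.
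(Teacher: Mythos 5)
Your proposal is correct and follows essentially the same route as the paper: both decompose the symmetric model into the two single-coordinate indicators, use \texttt{Sensitivity}$+$\texttt{Completeness} to pin down each atomic attribution (the paper packages this as Proposition~\ref{prop:sens_complete}), add via \texttt{Linearity}, and contradict \texttt{Strong-symmetry} at an input with $\xbf_1 \geq 0 > \xbf_2$. Your version is if anything slightly more streamlined, since instantiating $\xbf=(1,-1,0,\dots,0)$ makes the clash $1=0$ immediate without the paper's extra \texttt{Completeness} step on the sum.
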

\begin{proof}
	Assume the input dimension $d = 2$, the baseline $\xbf' < 0$ (element-wise) and $\xbf \in \Reals^2$.
	Using Proposition~\ref{prop:sens_complete}, we get
	\begin{align*}
		\phi(\xbf, \indic{\pi_1 \geq 0}) &= \left(\indic{\xbf_1 \geq 0}, 0 \right)^\top \, , \\
		\phi(\xbf, \indic{\pi_2 \geq 0}) &= \left(0, \indic{\xbf_2 \geq 0} \right)^\top
	\end{align*}
	By \texttt{Linearity}
	\begin{align*}
		\phi(\xbf, \indic{\pi_1 \geq 0} + \indic{\pi_2 \geq 0}) = \left(\indic{\xbf_1 \geq 0}, \indic{\xbf_2 \geq 0} \right)^\top \, .
	\end{align*}
	As $\indic{\pi_1 \geq 0} + \indic{\pi_2 \geq 0}$ is symmetric, thus, by \texttt{Strong-symmetry}
	\begin{align*}
		\phi(\xbf, \indic{\pi_1 \geq 0} + \indic{\pi_2 \geq 0})_1 = \phi(\xbf, \indic{\pi_1 \geq 0} + \indic{\pi_2 \geq 0})_2 \, .
	\end{align*}
	Using \texttt{Completeness} on $\indic{\pi_1 \geq 0} + \indic{\pi_2 \geq 0}$
	\begin{align*}
		\phi(\xbf, \indic{\pi_1 \geq 0} + \indic{\pi_2 \geq 0}) = \frac{1}{2} \left(\indic{\xbf_1 \geq 0} + \indic{\xbf_2 \geq 0}, \indic{\xbf_1 \geq 0} + \indic{\xbf_2 \geq 0} \right)^\top \, .
	\end{align*}
	The incoherency arises here as there exist $\xbf \in \Reals^2$ such that
	\begin{align*}
		\frac{1}{2} \left(\indic{\xbf_1 \geq 0} + \indic{\xbf_2 \geq 0}, \indic{\xbf_1 \geq 0} + \indic{\xbf_2 \geq 0} \right)^\top \neq  \left(\indic{\xbf_1 \geq 0}, \indic{\xbf_2 \geq 0} \right)^\top \, .
	\end{align*}
\end{proof}

\section{Appendix for Section~\ref{sec:3}} \label{app:3}
\subsection{Appendix for Section~\ref{sec:3.1}} \label{app:3.1}
\begin{figure}[!t]
  \centering
    	\begin{tikzpicture}[scale=2.3]
		
		\node[below] at (0,0) {0};
		\node[below] at (1,0) {1};
		\node[left] at (0,1) {1};
		
		\node[below] at (1/3,-0.03) {$\mathcolor{gray}{a}$};
		\node[below] at (1-1/3,0) {$\mathcolor{gray}{b}$};
		
		\node[left] at (0,1/4) {$\mathcolor{gray}{c}$};
		\node[left] at (0,3/4) {$\mathcolor{gray}{d}$};

		\draw[dotted, thick, color=gray] (1/3,0) -- (1/3,1/3);
		\draw[dotted, thick, color=gray] (1-1/3,0) -- (1-1/3,1/3);
		
		\draw[dotted, thick, color=gray] (0,1/4) -- (1/3,1/4);
		\draw[dotted, thick, color=gray] (0, 3/4) -- (1/3,3/4);
		\draw[thick, ao] (1/3,1/4) rectangle (1-1/3,3/4);

		\draw[thick] (0,0) rectangle (1,1);
		
		\node[anchor=north east] at (1/3,1/4) {$\mathcolor{red}{+}$};
		\node[red, circle, draw, fill, minimum size = 4pt, inner sep=0pt, outer sep=0pt] at (1/3,1/4) {};
		
		\node[anchor=north west] at (1-1/3,1/4) {$\mathcolor{red}{-}$};
		\node[red, circle, draw, fill, minimum size = 4pt, inner sep=0pt, outer sep=0pt] at (1-1/3,1/4) {};
		
		\node[anchor=south east] at (1/3,3/4) {$\mathcolor{red}{-}$};
		\node[red, circle, draw, fill, minimum size = 4pt, inner sep=0pt, outer sep=0pt] at (1/3,3/4) {};
		
		\node[anchor=south west] at (1-1/3,3/4) {$\mathcolor{red}{+}$};
		\node[red, circle, draw, fill, minimum size = 4pt, inner sep=0pt, outer sep=0pt] at (1-1/3,3/4) {};
	\end{tikzpicture}
  \caption{\label{fig:increment}Illustration of the increment operator $\Delta(\cdot; \cdot)$ in the 2D setting. Given a rectangle $\mathcolor{ao}{R} \defeq [a,b] \times [c, d] \subset [0,1]^2$, the 2D increment for a function $g: \Reals^2 \to \Reals$ is defined as $\Delta (g;R) \defeq \mathcolor{red}{+} g(a,c)\mathcolor{red}{-} g(b,c) \mathcolor{red}{-} g(a,d) \mathcolor{red}{+} g(b,d)$. 
}
\end{figure}
In this section, we present the approximation result which underpins our constructivist approach to feature attribution.

\begin{theorem}[Approximation of continuous function on a compact] \label{th:approx}
	Given a continuous function $f : [0,1]^d \to \Reals$ and a grid partition of $[0,1]^d$ defined as 
    \[
    \coprod_{i_1, \ldots, i_d= 0}^{p_n -1} R_{i_1, \ldots, i_d}^{(n)} = [0,1]^d \quad \text{(disjoint union)}\, 
    \] 
    with $( p_n )_n \in (\Posint)^{\mathbb{N}}$ an increasing positive sequence, a step $a^{(n)}_i \defeq \frac{i}{p_n}$ ($i \in \llbracket 0, p_n \rrbracket$) and a interior rectangle 
    \[
    R^{(n)}_{i_1, \ldots, i_d} \defeq \prod_{k=1}^{d} \left(a^{(n)}_{i_k}; a^{(n)}_{i_k+1} \right]
    \]
    for $i_1, \ldots, i_d \in \llbracket p_n-1 \rrbracket $.
	Exterior rectangles $R^{(n)}_{i_1, \ldots, i_d}$ are similarly defined but with the interval closed on the left for all indices $i_k$ equal to $0$.
	We have the following uniform approximation of $f$ using indicator functions:
	\begin{align*}
   \lim_{n \to +\infty} \norm{\left( \sum_{i_1, \ldots, i_d= 0}^{p_n-1} f\left(a^{(n)}_{i_1}, \ldots, a^{(n)}_{i_d}\right) \indic{R_{i_1, \ldots, i_d}^{(n)}}  \right)  -f}_\infty = 0 \, .
	\end{align*}
\end{theorem}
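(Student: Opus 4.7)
The plan is to prove this uniform approximation result via the classical argument combining the Heine–Cantor theorem with the fact that the mesh of the partition vanishes.

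First, since $[0,1]^d$ is compact and $f$ is continuous on it, $f$ is uniformly continuous. Fix $\varepsilon > 0$; then there exists $\delta > 0$ such that for all $\xbf, \ybf \in [0,1]^d$ with $\norm{\xbf - \ybf}_2 < \delta$, one has $\abs{f(\xbf) - f(\ybf)} < \varepsilon$. This is the only regularity we will need.

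Second, I would observe that every rectangle $R^{(n)}_{i_1, \ldots, i_d}$ has diameter $\sqrt{d}/p_n$, and that by construction (the interior half-open intervals plus the left-closed boundary convention for exterior rectangles) the family $\bigl\{R^{(n)}_{i_1, \ldots, i_d}\bigr\}_{i_1, \ldots, i_d \in \llbracket 0, p_n-1 \rrbracket}$ is a \emph{disjoint} covering of $[0,1]^d$. Hence, for any $\xbf \in [0,1]^d$ there is a unique tuple $(i_1, \ldots, i_d)$ with $\xbf \in R^{(n)}_{i_1, \ldots, i_d}$, and the sum collapses to a single term:
\[
\sum_{i_1, \ldots, i_d= 0}^{p_n-1} f\bigl(a^{(n)}_{i_1}, \ldots, a^{(n)}_{i_d}\bigr) \indic{R_{i_1, \ldots, i_d}^{(n)}}(\xbf) = f\bigl(a^{(n)}_{i_1}, \ldots, a^{(n)}_{i_d}\bigr).
\]
Moreover, the anchor corner $(a^{(n)}_{i_1}, \ldots, a^{(n)}_{i_d})$ belongs to the closure of $R^{(n)}_{i_1, \ldots, i_d}$, so it lies within Euclidean distance $\sqrt{d}/p_n$ of $\xbf$.

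Third, I would choose $n$ large enough that $\sqrt{d}/p_n < \delta$, which is possible because $(p_n)_n$ is an increasing sequence of positive integers tending to $+\infty$. Then uniform continuity yields $\abs{f_n(\xbf) - f(\xbf)} < \varepsilon$ for every $\xbf \in [0,1]^d$, hence $\norm{f_n - f}_\infty \le \varepsilon$. Since $\varepsilon$ was arbitrary, this gives the claimed limit.

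The only subtle point is the boundary bookkeeping ensuring that $\bigl\{R^{(n)}_{i_1, \ldots, i_d}\bigr\}$ genuinely partitions $[0,1]^d$: the left-closed convention for exterior rectangles is exactly what guarantees that points on the faces $\{x_k = 0\}$ belong to one and only one rectangle, while all other points are covered by the interior (half-open) rectangles. Once this is acknowledged, the argument reduces entirely to applying uniform continuity at a resolution finer than $\delta$. No cleverness beyond Heine–Cantor is required, and the proof should be short.
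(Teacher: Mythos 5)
Your proof is correct: the collapse of the sum to the single term $f\bigl(a^{(n)}_{i_1},\ldots,a^{(n)}_{i_d}\bigr)$ at each $\xbf$, the bound $\sqrt{d}/p_n$ on the distance to the anchor corner, and the Heine--Cantor step are all sound, and the boundary bookkeeping you flag is handled exactly as you describe. The paper itself gives no proof (it only cites Rudin), and your argument is precisely the standard one that reference contains, so there is nothing further to reconcile.
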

\begin{proof}
    See \cite{rudin1976principles}.
\end{proof}
\subsection{Appendix for Section~\ref{sec:3.2}} \label{app:3.2}
We introduce here the property \texttt{FPC}, a counterpart to \texttt{FSC} as defined in the main paper.
\begin{definition}[functional point-wise sequential continuity]\label{propbox:fpc}
	A feature attribution $\phi$ is \emph{functionally point-wise sequentially continuous} (\texttt{FPC}) if for all sequences $(f_n)_{n \geq 0} \in \mathcal{F}^{\Posint}$ and all $f \in \mathcal{F}$:
	\begin{align*}
		 \forall \Img \in \Reals^d,  \lim_{n \to +\infty} f_n (\Img)=f(\Img) \implies  \forall \xbf \in \Reals^d,  \lim_{n \to +\infty} \phi (\xbf, f_n) =  \phi (\xbf, f)  \, .
	\end{align*}
	The last limit $\lim_{n \to +\infty} \phi (\xbf, f_n) =  \phi (\xbf, f)$ is taken component-wise as we are working in finite dimension $d$.
\end{definition}
In other words, for all $\xbf \in \Reals^d$ and $j \in \llbracket d \rrbracket$, $\phi(\xbf, \cdot)_j$ is sequentially continuous where $\mathcal{F}$ is endowed with the topology of point-wise convergence. 
\begin{remark}
	Beware, with the topology of point-wise convergence (which is not metrizable), the sequential continuity and continuity are not equivalent.  Indeed, in this case, sequential continuity is weaker than continuity. 
\end{remark}

\subsection{Appendix for Section~\ref{sec:3.3}} \label{app:3.3}
\subsubsection{Formalization of Section~\ref{sec:3.1} using the Riemann-Stieltjes integral}
In this section, we introduce the necessary concepts to formalize and generalize the construction from Section~\ref{sec:3.1} to the $d$-dimensional input space using the Stieltjes integral.
\paragraph{Notations.}
First, we need to introduce some general notation following \cite{owen2005multidimensional}. 
Given two vectors $\abf, \bbf \in \Reals^d$, we write $\abf \leq \bbf$ \emph{iff} for all $i \in \llbracket d \rrbracket$, $\abf_i \leq \bbf_i$. Using this, we define the rectangle $[\abf, \bbf] \defeq \{ \xbf \in \Reals^d : \abf \leq \xbf \leq \bbf \} = \prod_{i=1}^{d} [\abf_i, \bbf_i]$. 
With $u,v  \subset \llbracket d \rrbracket$, we denote by $\card{u}$ the cardinality of $u$. 
Also, $-u$ denotes the complement of $u$ in $\llbracket d \rrbracket$ and more generally, $u-v$ the complement of $v$ in~$u$. 
Furthermore, for $u \subset \llbracket d \rrbracket$ and $\xbf \in [\abf, \bbf]$, $\xbf^u$ is the $\card{u}$-sized vector $\left(\xbf_{i_1}, \xbf_{i_2}, \ldots, \xbf_{i_{\card{u}}} \right)^\top$ where $i_1 < i_2 < \cdots < i_{\card{u}}$ are all the ordered elements of $u$. 
Also, for two disjoint subsets $u, v \subset \llbracket d \rrbracket$ and $\xbf, \ybf \in [\abf, \bbf]$, we define the vector gluing $\xbf^u : \ybf^v$ as a $\card{u \cup v}$-sized vector $\zbf \in [\abf^{u \cup b}, \bbf^{u \cup b}]$ with $\zbf_i = \xbf_i$ if $i \in u$, and $\zbf_i = \ybf_i$ if $i \in v$.
We give an example of vector gluing in dimension $d = 3$ with $\xbf, \ybf \in \Reals^3$ and $u \defeq \{2 \}, v \defeq \{ 1,3\}$:
\[
 \xbf^u : \ybf^v = (\ybf_1, \xbf_2, \ybf_3)\, .
\]
Finally, a one-dimensional ladder $\mathcal{Y}$ on $[a,b]$ is a finite set containing at least $a$ with values in $(a,b)$, the ladder does not contain $b$ (except in the degenerated case $a = b$). 
Each element of the ladder $y$ has a successor $y_+$, if the elements of $\mathcal{Y}$ are arranged as $a=y_0 < y_1 < y_2 < \cdots < y_{\card{\mathcal{Y}}}$ , then the successor of $y_k$ is $y_{k+1}$ (when $k < \card{\mathcal{Y}}$) and if $k = \card{\mathcal{Y}}$, then the successor is $b$ (beware, the dependencies of $y_+$ on $\mathcal{Y}$ is not made explicit).  
A multidimensional ladder $\mathcal{Y}$ on $[\abf, \bbf]$ has the form $\mathcal{Y} \defeq \prod_{i=1}^{d} \mathcal{Y}^i$ where $\mathcal{Y}^i$s are one-dimensional ladders on $[\abf_i, \bbf_i]$. A successor $\ybf_+$ of $\ybf \in \mathcal{Y}$ is defined by taking $(\ybf_+)_i$ as the successor of $\ybf_i$ in $\mathcal{Y}^i$. The set of all ladders in $[\abf, \bbf]$ is $\mathbb{Y}([\abf, \bbf])$.
We denote by $\mathcal{B}([0,1]^d)$ the Borel $\sigma$-algebra of $[0,1]^d$.

We start with an informal description of the Stieltjes integral in the $1$D case:
given two real‐valued functions \(f\) and \(g\) on an interval \([a,b]\), the Stieltjes integral
\[
\int_a^b f(x)\,\Diff g(x)
\]
is the limit (when it exists) of the weighted sum
\[
\sum_{i=1}^n f(\xi_i)\,\bigl[g(x_i)-g(x_{i-1})\bigr],
\]
where \(a=x_0<x_1<\cdots<x_n=b\) is a partition of \([a,b]\) and each \(\xi_i\in[x_{i-1},x_i]\).
Briefly, to define multidimensional Stieltjes integral, we need to generalize the 1-dimensional increment $g(x+) -~g(x)$, one way to do it is:
\begin{definition}[Increment operators, \cite{owen2005multidimensional}] \label{def:increment-operators}
 Given $f :  [0, 1]^d \to \Reals$, we define the increment operator as: 
 \[
 \forall \abf, \bbf \in [0,1]^d, \qquad \Delta (f; \abf, \bbf) \defeq \sum_{v \subset \llbracket d \rrbracket} (-1)^{\card{v}} f( \abf^{v} : \bbf^{-v}) \, .
 \] 
 A useful generalization of the previous operator, with a subset $u \subset \llbracket d \rrbracket$, is:
  \[
 \forall \abf, \bbf \in [0,1]^d, \qquad\Delta_{u} (f ; \abf, \bbf) \defeq \sum_{v \subset u} (-1)^{\card{v}} f( \abf^{v} : \bbf^{u-v} : \abf^{-u}) \, .
 \] 
 We will also use the notation $\Delta (f, R)$ interchangeably with $\Delta (f ; \abf, \bbf)$ where $R = (\abf, \bbf] \subset [0,1]^d$.
\end{definition}
In other words, one way to generalize the 1-dimensional increment is to define a sign-alternated sum on the vertices of hyperrectangles as depicted in Figure~\ref{fig:increment}.
One cannot integrate w.r.t. to any integrand~$g$, indeed it need to be of bounded variation (roughly speaking, the function does not oscillate to much).
We introduce two multidimensional extensions of one-dimensional bounded variation, namely Vitali bounded variation (\texttt{BV}) and Hardy-Krause bounded variation (\texttt{BHK}), as follows
\begin{definition}[Vitali and Hardy-Krause variations, \cite{owen2005multidimensional}] \label{app:hk-variation}
	Given $\abf, \bbf \in \Reals^d$, a function $g~:~[\abf,\bbf] \to \Reals$ has Vitali bounded variation (abbreviated as \texttt{BV}) if the following quantity is bounded:
	\[
	V_{[\abf,\bbf]}(g) \defeq \sup_{\mathcal{Y} \in \mathbb{Y}([\abf,\bbf])}	\sum_{\ybf \in \mathcal{Y}} \abs{\Delta(g; \ybf, \ybf_+)}	 < +\infty \, .
	\]
	$V_{[\abf,\bbf]}(g)$ is the Vitali variation of $g$ on $[\abf,\bbf]$.
	
	Similarly, $g$ have Hardy-Krause bounded variation (abbreviated as \texttt{BHK}) at $\mathbf{1}$ if:
	\[
	V_{HK} (g) \defeq \sum_{u \subsetneq \llbracket d \rrbracket}  V_{[\abf^{-u},\bbf^{-u}]}(g((\cdot)^{-u}, \bbf^u)) < +\infty \, .
	\]
	$V_{HK} (g)$ is the Hardy-Krause variation of $g$ on $[\abf,\bbf]$.
\end{definition}
The Hardy-Krause variation is more refined than the Vitali variation because it takes into account variation over all lower-dimensional subspaces.
The following definition provides one possible generalization of monotonicity to higher dimensions.
\begin{definition}[Completely monotone]
    A function $g:[0,1]^d \to \Reals$ is \emph{completely monotone} if:
    \[
    \forall \xbf, \ybf \in [0,1]^d, \qquad \left( \xbf \leq \ybf \implies \forall u \subset \llbracket d \rrbracket , \, \Delta_u (g; \xbf, \ybf) \geq 0 \right) \, .
    \]
\end{definition}
Now, the construction of Section~\ref{sec:3.1} is made rigorous and generalized to general input dimension $d$, using the Stieltjes integral, in the following theorem:
\begin{theorem}[General increment-based atomic attribution]\label{th:increment-based-atomic-attribution}
	Let $\phi$ be a \texttt{Linear} and \texttt{FSC} attribution method, and $\{ g_{j, \xbf} \}_{j \in \llbracket d \rrbracket} \subset \{ g \mid g : \Reals^d \to \Reals \}$ be a family of \texttt{BV} functions with $\xbf \in [0,1]^d$.
	Assume that the atomic attributions of indicator functions are given by:
	\begin{align*}
		\forall \xbf \in [0,1]^d, \forall j \in \llbracket d \rrbracket, \qquad
		\phi\left(\xbf, \indic{R} \right)_j \defeq 
		\Delta  \left(g_{j, \xbf}; R \right)  \in \Reals \, ,
	\end{align*}
	where $R \subset [0,1]^d$ is a right-closed hyperrectangle.
	Then, the attribution of a continuous model $f: [0,1]^d \to \Reals$ is the following Riemann-Stieltjes integral:
	\begin{align*}
		\forall \xbf \in [0,1]^d, \forall j \in \llbracket d \rrbracket, \qquad \phi\left(\xbf, f \right)_j = \int_{[0,1]^d} f(\ybf) \, \Diff g_{j, \xbf}(\ybf)\, .
	\end{align*}
\end{theorem}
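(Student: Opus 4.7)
The plan proceeds in four steps: approximate $f$ by a grid-based indicator expansion, linearize $\phi$ to reduce to atomic attributions, recognize the resulting expression as a multidimensional Riemann--Stieltjes sum, and transfer the limit through $\phi$ via \texttt{FSC}. This is essentially the $d$-dimensional analogue of the informal derivation given in Section~\ref{sec:3.1}, but with the Stieltjes integral in place of the Lebesgue integral.

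First, I fix $\xbf \in [0,1]^d$ and $j \in \llbracket d \rrbracket$ and invoke Theorem~\ref{th:approx} with $p_n = n$ to obtain a sequence of piecewise-constant approximants
\[
f_n \defeq \sum_{i_1, \ldots, i_d= 0}^{n-1} f\!\left(a^{(n)}_{i_1}, \ldots, a^{(n)}_{i_d}\right) \indic{R_{i_1, \ldots, i_d}^{(n)}}
\]
satisfying $\norm{f_n - f}_\infty \to 0$. Second, \texttt{Linearity} of $\phi$ lets me distribute $\phi(\xbf, \cdot)_j$ over the finite sum; substituting the atomic attribution assumption $\phi(\xbf, \indic{R})_j = \Delta(g_{j,\xbf}; R)$ for each interior (right-closed) hyperrectangle yields
\[
\phi(\xbf, f_n)_j = \sum_{i_1, \ldots, i_d= 0}^{n-1} f\!\left(a^{(n)}_{i_1}, \ldots, a^{(n)}_{i_d}\right) \Delta\!\left(g_{j, \xbf}; R_{i_1, \ldots, i_d}^{(n)}\right).
\]
The boundary rectangles (those closed on the left whenever an index equals $0$) differ from right-closed ones only on a lower-dimensional face, which contributes nothing once integrated against a \texttt{BV} integrator, so the same atomic formula applies up to this harmless adjustment.

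Third, the right-hand side above is exactly a multidimensional Riemann--Stieltjes sum for $f$ against the integrator $g_{j,\xbf}$, on the uniform grid $(a^{(n)}_i)_{i \in \llbracket 0, n \rrbracket}$ with the tag of each sub-rectangle placed at its ``lower-left'' corner $(a^{(n)}_{i_1}, \ldots, a^{(n)}_{i_d})$. Since the mesh of the partition tends to zero, $f$ is uniformly continuous on the compact $[0,1]^d$, and $g_{j, \xbf}$ has Vitali bounded variation, the classical existence theorem for the multidimensional Riemann--Stieltjes integral (cf.~\cite{owen2005multidimensional}) guarantees
\[
\lim_{n \to +\infty} \phi(\xbf, f_n)_j = \int_{[0,1]^d} f(\ybf) \, \Diff g_{j, \xbf}(\ybf).
\]
Finally, uniform convergence $\norm{f_n - f}_\infty \to 0$ combined with the \texttt{FSC} property of $\phi$ forces $\phi(\xbf, f_n)_j \to \phi(\xbf, f)_j$, and identifying the two limits proves the claim.

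The main obstacle is Step 3: ensuring that the specific tagging scheme (corner evaluation) produces Riemann--Stieltjes sums that actually converge to the integral. The hypothesis that $g_{j, \xbf}$ is \texttt{BV} makes the total mass $\sum_{i_1, \ldots, i_d} \abs{\Delta(g_{j,\xbf}; R^{(n)}_{i_1, \ldots, i_d})}$ uniformly bounded in $n$, and uniform continuity of $f$ then bounds the difference between the grid sum and any other tagged Riemann--Stieltjes sum on the same partition, which closes the gap to the integral through a standard Cauchy argument. A secondary subtlety is the handling of boundary rectangles, but as noted above these can be absorbed without changing any limiting value, so no additional hypothesis on $\phi$ or $g_{j,\xbf}$ is required.
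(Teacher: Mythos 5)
Your proposal is correct and follows essentially the same route as the paper's proof: approximate $f$ via Theorem~\ref{th:approx}, distribute $\phi$ by \texttt{Linearity} and substitute the atomic increments, identify the resulting expression as the Riemann--Stieltjes sum of Theorem~\ref{th:general-riemann-stieltjes}, and pass to the limit through $\phi$ using \texttt{FSC}. Your additional remarks on corner-tagging and on the left-closed boundary rectangles (which the increment operator does not distinguish, since $\Delta(g;R)$ depends only on the corner values) address details the paper's proof leaves implicit, but do not change the argument.
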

\begin{proof}
	Given a continuous model $f : [0,1]^d \to \Reals$. We apply \texttt{Linearity} to the approximation given by Theorem~\ref{th:approx}:
	
	\begin{align*}
		\forall \xbf \in \Reals^d, \qquad &\phi\left(\xbf,   \sum_{i_1, \ldots, i_d= 0}^{p_n-1} f\left(a^{(n)}_{i_1}, \ldots, a^{(n)}_{i_d}\right) \indic{ R_{i_1, \ldots, i_d}^{(n)}} \right) \\
		&= \sum_{i_1, \ldots, i_d= 0}^{p_n-1}  f\left(a^{(n)}_{i_1}, \ldots, a^{(n)}_{i_d}\right)  \phi\left(\xbf,  \indic{R_{i_1, \ldots, i_d}^{(n)}}\right) \\
		&= \sum_{i_1, \ldots, i_d= 0}^{p_n-1} f\left(a^{(n)}_{i_1}, \ldots, a^{(n)}_{i_d}\right)   \left(\Delta  \left(g_{1, \xbf}, R^{(n)}_{i_1, \ldots, i_d} \right), \ldots, \Delta  \left(g_{d, \xbf} , R^{(n)}_{i_1, \ldots, i_d} \right)  \right)^\top  \, .
	\end{align*}
We can work coordinate-wise,
\begin{align}
	\begin{split}
		\forall \xbf \in \Reals^d, \forall j \in \llbracket d \rrbracket, \qquad &\phi\left(\xbf,   \sum_{i_1, \ldots, i_d= 0}^{p_n-1} f\left(a^{(n)}_{i_1}, \ldots, a^{(n)}_{i_d}\right) \indic{R_{i_1, \ldots, i_d}^{(n)}} \right)_j \\
		&= \sum_{i_1, \ldots, i_d= 0}^{p_n-1} f\left(a^{(n)}_{i_1}, \ldots, a^{(n)}_{i_d}\right)   \Delta  \left(g_{j, \xbf}, R^{(n)}_{i_1, \ldots, i_d} \right) 
   \, .
	\end{split}
\end{align}
One recognizes the multiple Riemann-Stieltjes sum of Theorem~\ref{th:general-riemann-stieltjes}:
\begin{align}
	\begin{split} \label{eq:stieltjes-proof-1}
			\forall \xbf \in \Reals^d, \forall j \in \llbracket d \rrbracket, \qquad 
		&\lim_{n \to +\infty} \phi\left(\xbf,    \sum_{i_1, \ldots, i_d= 0}^{p_n-1} f\left(a^{(n)}_{i_1}, \ldots, a^{(n)}_{i_d}\right) \indic{R_{i_1, \ldots, i_d}^{(n)}} \right)_j   \\
		&= \int_{[0,1]^d} f(t_1,t_2, \ldots, t_d) \, \Diff g_{j, \xbf}(t_1, t_2, \ldots, t_d)\, .
	\end{split}
\end{align}
Finally, as $\phi$ is \texttt{FPC} (or \texttt{FSC}) and we have the following convergence
\[
\lim_{n \to +\infty} \norm{\left( \sum_{i_1, \ldots, i_d= 0}^{p_n-1} f\left(a^{(n)}_{i_1}, \ldots, a^{(n)}_{i_d}\right) \indic{R_{i_1, \ldots, i_d}^{(n)}}  \right)  -f}_\infty = 0 \, .
\]
Then
\begin{align*}
	\forall \xbf \in \Reals^d, \qquad \lim_{n \to +\infty} \phi\left(\xbf,   \sum_{i_1, \ldots, i_d= 0}^{p_n-1} f\left(a^{(n)}_{i_1}, \ldots, a^{(n)}_{i_d}\right) \indic{R_{i_1, \ldots, i_d}^{(n)}} \right) = \phi(\xbf, f) \, ,
\end{align*}
where the limit coordinates are Riemann-Stieltjes integrals as in Equation~(\ref{eq:stieltjes-proof-1}).
\end{proof}
One should read Th.~\ref{th:increment-based-atomic-attribution} as follows: \textbf{first, choose an attribution for the indicator function $\phi\left(\xbf, \indic{R} \right)$, then, get the attribution for the whole model $\phi\left(\xbf, f \right)$ in the form of an integral.}
The intuition behind the integrands $g_{j, \xbf}$ is not always immediately clear. 
For this reason, we have reformulated the result in terms of measures in the main paper (Theorem~\ref{th:measure-atomic-attribution}).
As stated in the following technical result, the relationship between $\Delta(g_{j, \xbf}; [\mathbf{0}, \cdot])$ and measures is bijective, allowing us to think in terms of measures rather than integrands. 
\begin{theorem}[Signed Borel measure and bounded variation, \cite{aistleitner2014functions}]\label{th:signed-borel}
	Let $h: [0,1]^d \to \Reals$ be a \texttt{BHK}, and coordinate-wise right-continuous function. 
	Then, there exists a unique signed Borel measure $\mu$ on $[0,1]^d$ such that
	\begin{align}
	\forall \xbf \in [0,1]^d , \qquad  h(\xbf) &= \mu([\mathbf{0}, \xbf]) \, , \\
	\mathrm{Var}_{total}(\mu) &= V_{HK}(f) + \abs{f(\mathbf{0})}\, ,
	\end{align}
	where $\mathrm{Var}_{total}(\cdot)$ is the total variation of a measure defined as $\mathrm{Var}_{total}(\mu) \defeq \mu^+([0,1]^d) + \mu^-([0,1]^d)$ with $(\mu^+, \mu^-)$ the positive measures from the Jordan decomposition of $\mu$.
	
	Conversely, if $\mu$ is finite signed Borel measure on $[0,1]^d$. Then, there exists a unique \texttt{BHK}, and coordinate-wise right-continuous function $h$ on $[0,1]^d$ that satisfies $(1.4)$ and $(1.5)$.
\end{theorem}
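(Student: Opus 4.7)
The plan is to establish this as the $d$-dimensional analogue of the classical one-dimensional Lebesgue--Stieltjes correspondence between right-continuous functions of bounded variation and signed Borel measures. The central ingredient is a Jordan-type decomposition for Hardy--Krause functions: any \texttt{BHK} function $h : [0,1]^d \to \Reals$ admits a minimal representation $h = h^+ - h^-$ in which $h^+$ and $h^-$ are completely monotone, so that the increments $\Delta(h^\pm;\abf,\bbf)$ are nonnegative on every half-open hyperrectangle $(\abf,\bbf] \subset [0,1]^d$. Coordinate-wise right-continuity of $h$ is inherited by this decomposition.

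For the forward direction, I would define pre-measures on the semiring of half-open rectangles by $\mu_0^\pm((\abf,\bbf]) \defeq \Delta(h^\pm;\abf,\bbf)$. Finite additivity is a telescoping identity for the increment operator, and countable additivity follows from coordinate-wise right-continuity, since any decreasing sequence of rectangles collapsing to $\emptyset$ forces the increments to vanish in the limit. Carath\'eodory's extension theorem then yields unique finite positive Borel measures $\mu^\pm$ on $[0,1]^d$, and $\mu \defeq \mu^+ - \mu^-$ is the desired signed measure. The identity $h(\xbf) = \mu([\mathbf{0}, \xbf])$ is verified by approximating $[\mathbf{0}, \xbf]$ by a decreasing sequence of half-open rectangles, passing to the limit with right-continuity, and accounting for the boundary contribution at $\mathbf{0}$. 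Uniqueness is then a direct application of Dynkin's $\pi$-$\lambda$ lemma to $\mu^+$ and $\mu^-$ separately, since $\{[\mathbf{0}, \xbf] : \xbf \in [0,1]^d\}$ is a $\pi$-system generating the Borel $\sigma$-algebra.

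The converse direction is largely formal: given a finite signed Borel measure $\mu$, the function $h(\xbf) \defeq \mu([\mathbf{0}, \xbf])$ inherits coordinate-wise right-continuity from dominated convergence on nested rectangles, and its Hardy--Krause variation is controlled by the total variation of $\mu$. The main obstacle I anticipate is the identity $\mathrm{Var}_{total}(\mu) = V_{HK}(h) + \abs{h(\mathbf{0})}$. By Definition~\ref{app:hk-variation}, $V_{HK}(h)$ decomposes as a sum of Vitali variations over subsets $u \subsetneq \llbracket d \rrbracket$, each variation taken on the face obtained by pinning the coordinates in $u$ at $1$. The plan is to show that, under the minimal Jordan decomposition, the Vitali variation of $h^\pm$ on each such face coincides with the mass assigned by $\mu^\pm$ to the corresponding boundary stripe of $[0,1]^d$, and that summing these contributions over all $u$ together with the isolated corner term $\abs{h(\mathbf{0})}$ recovers $\mu^+([0,1]^d) + \mu^-([0,1]^d)$. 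The delicate step is the boundary bookkeeping---reconciling half-open versus closed rectangle conventions and isolating the contribution of the corner $\mathbf{0}$---which is exactly where the right-continuity hypothesis and the explicit $\abs{h(\mathbf{0})}$ term in the identity play their role.
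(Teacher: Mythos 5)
The paper gives no proof of this statement---it simply cites \cite{Breneis2020} (the result being due to Aistleitner and Dick)---and your outline reconstructs the standard argument from that literature: a Jordan-type decomposition of a \texttt{BHK} function into completely monotone, coordinate-wise right-continuous parts, construction of the two positive Lebesgue--Stieltjes measures from the increment operator via Carath\'eodory extension on the semiring of half-open rectangles, uniqueness via a $\pi$-$\lambda$ argument on the generating $\pi$-system $\left\{[\mathbf{0},\xbf] : \xbf \in [0,1]^d\right\}$ (which should be applied to the signed measure $\mu$ itself rather than to $\mu^+$ and $\mu^-$ separately, since an arbitrary decomposition $h = h^+ - h^-$ need not induce the Jordan decomposition of $\mu$), and the converse by direct verification. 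This is correct in outline; the one caveat is that the existence of the completely monotone decomposition---with right-continuity preserved, and in the minimal form needed for the exact identity $\mathrm{Var}_{total}(\mu) = V_{HK}(h) + \abs{h(\mathbf{0})}$---is itself the main theorem of Aistleitner--Dick rather than a routine lemma, so a self-contained write-up would have to prove or cite it explicitly; you rightly identify the associated boundary bookkeeping as the delicate step.
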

\begin{proof}
    See \cite{Breneis2020}.
\end{proof}
We now state a corollary of the previous result that yields positive measures.
\begin{corollary}[Positive Borel measure and bounded variation]\label{cor:positive-borel}
	Let $h: [0,1]^d \to \Reals$ be a \texttt{BHK}, coordinate-wise right-continuous, and completely monotone function. 
Then, there exists a unique positive Borel measure $\mu$ on $[0,1]^d$ such that
\begin{align*}
	\forall \xbf \in [0,1]^d , \qquad  h(\xbf) &= \mu([\mathbf{0}, \xbf]) \, , \\
	\mathrm{Var}_{total}(\mu) &= V_{HK}(f) + \abs{f(\mathbf{0})}\, ,
\end{align*}
Conversely, if $\mu$ is finite positive Borel measure on $[0,1]^d$. Then, there exists a unique \texttt{BHK}, coordinate-wise right-continuous, and completely monotone function $h$ on $[0,1]^d$ that satisfies $(1.4)$ and $(1.5)$.
\end{corollary}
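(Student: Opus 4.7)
The plan is to reduce this statement to Theorem~\ref{th:signed-borel} by showing that, under the bijection it provides between BHK right-continuous functions and signed Borel measures, the additional hypothesis of complete monotonicity on $h$ corresponds exactly to positivity of the associated measure $\mu$. Once this equivalence is established, uniqueness and the total variation formula transfer immediately from Theorem~\ref{th:signed-borel}, and we are done.

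For the forward direction, I would start by applying Theorem~\ref{th:signed-borel} to obtain the unique signed Borel measure $\mu$ with $h(\xbf) = \mu([\mathbf{0},\xbf])$ and $\mathrm{Var}_{total}(\mu) = V_{HK}(h) + |h(\mathbf{0})|$. The task is then to prove $\mu \geq 0$. The natural approach is to show that $\mu$ is non-negative on the $\pi$-system of half-open hyperrectangles $(\abf,\bbf] \subset [0,1]^d$, since such a $\pi$-system generates the Borel $\sigma$-algebra and a signed measure non-negative on a generating $\pi$-system (closed under finite intersection, covering the space) is a positive measure. By inclusion–exclusion on the defining identity $h(\xbf) = \mu([\mathbf{0},\xbf])$, one obtains $\mu((\abf,\bbf]) = \Delta(h;\abf,\bbf)$, which is non-negative by complete monotonicity applied with $u = \llbracket d \rrbracket$.

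For the converse, given a finite positive Borel measure $\mu$, define $h(\xbf) \defeq \mu([\mathbf{0},\xbf])$. Theorem~\ref{th:signed-borel} provides BHK, coordinate-wise right-continuity, and the variation identity. The remaining point is complete monotonicity, i.e., $\Delta_u(h;\xbf,\ybf) \geq 0$ for all $u \subset \llbracket d \rrbracket$ and all $\xbf \leq \ybf$. The key identity to verify here is
\begin{equation*}
\Delta_u(h;\xbf,\ybf) = \mu\Bigl( \prod_{i \in u}(\xbf_i,\ybf_i] \times \prod_{i \notin u}[0,\xbf_i] \Bigr),
\end{equation*}
which follows from an inclusion–exclusion over the subsets $v \subset u$ in the definition of $\Delta_u$, using that each term $h(\xbf^v : \ybf^{u-v} : \xbf^{-u})$ is the measure of the box $\prod_{i \in v}[0,\xbf_i] \times \prod_{i \in u-v}[0,\ybf_i] \times \prod_{i \notin u}[0,\xbf_i]$. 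Positivity of $\mu$ then yields $\Delta_u(h;\xbf,\ybf) \geq 0$.

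The main technical hurdle, as indicated above, is verifying the inclusion–exclusion identity expressing $\Delta_u(h;\xbf,\ybf)$ as the $\mu$-measure of a mixed box (open–closed on $u$, closed on $-u$). This is a straightforward but notationally heavy computation: one expands $h$ using $h(\cdot) = \mu([\mathbf{0},\cdot])$, interchanges the finite sum with $\mu$, and checks that the alternating sum of indicators of boxes collapses to the indicator of the target mixed box, which amounts to applying one-dimensional inclusion–exclusion separately in each coordinate $i \in u$. Apart from this combinatorial bookkeeping, the argument is a direct specialization of Theorem~\ref{th:signed-borel}, with the equivalence ``$\mu \geq 0 \iff h$ completely monotone'' pinpointing exactly which subclass of BHK functions corresponds to positive measures.
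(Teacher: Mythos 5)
Your overall strategy---reduce to Theorem~\ref{th:signed-borel} and show that, under that correspondence, complete monotonicity of $h$ is exactly positivity of $\mu$---is the reduction the paper itself intends (its proof is a citation to an external reference plus the remark that the corollary is a direct consequence of complete monotonicity and Theorem~\ref{th:signed-borel}), and your converse direction is correct: expanding $h(\cdot)=\mu([\mathbf{0},\cdot])$ and factoring the alternating sum of indicators coordinate-wise does give $\Delta_u(h;\xbf,\ybf) = \mu\bigl(\prod_{i\in u}(\xbf_i,\ybf_i]\times\prod_{i\notin u}[0,\xbf_i]\bigr)\geq 0$.

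The forward direction, however, rests on a false general principle. A finite signed measure that is nonnegative on a $\pi$-system which generates the $\sigma$-algebra and contains the whole space need \emph{not} be positive: on $\{1,2\}$ take $\mu(\{1\})=2$, $\mu(\{2\})=-1$; the $\pi$-system $\{\{1\},\{1,2\}\}$ generates the power set and $\mu$ is nonnegative on it, yet $\mu$ is not a positive measure. (You may be conflating this with the uniqueness theorem for two positive measures agreeing on a generating $\pi$-system, which is a different statement.) What actually saves the argument for half-open rectangles is that they form a \emph{semiring} whose countable disjoint unions exhaust every open subset of $(0,1]^d$ (dyadic decomposition): if the negative Jordan part $\mu^-$ were nonzero, inner/outer regularity would produce an open set $U$ with $\mu(U)<0$, contradicting $\mu(U)=\sum_k\mu(R_k)\geq 0$. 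You should replace the $\pi$-system appeal by this semiring-plus-regularity argument. A second, related omission: rectangles $(\abf,\bbf]$ miss the boundary faces $\{\ybf_i=0\}$ of $[0,1]^d$, so nonnegativity on them only controls $\mu$ on $(0,1]^d$. To cover all of $[0,1]^d$ you need complete monotonicity for proper subsets $u\subsetneq\llbracket d\rrbracket$ as well---by the very identity you establish in the converse, $\Delta_u(h;\xbf,\ybf)\geq 0$ gives $\mu\geq 0$ on the mixed boxes $\prod_{i\in u}(\xbf_i,\ybf_i]\times\prod_{i\notin u}[0,\xbf_i]$, and these do form a semiring covering the boundary---whereas you only invoke the case $u=\llbracket d\rrbracket$.
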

\begin{proof}
    See \cite{Breneis2020}. It is a direct consequence of completely monotone and Theorem~\ref{th:signed-borel}.
\end{proof}
\begin{remark}
	Right-continuous is needed (as a convention) to properly define the measure.
\end{remark}
Using this result, we rewrite Th.~\ref{th:general-riemann-stieltjes} with measures by assuming that the integrands are \texttt{BHK} (instead of \texttt{BV}) and right-continuous.

\subsubsection{Remainder on the Riemann-Stieltjes integral}
For the sake of completeness, we recall the definition of the Riemann–Stieltjes integral. 
It suffices for the integrand to have bounded Vitali variation for the integral to be well-defined.
\begin{theorem}[Riemann-Stieltjes integral in $\Reals^d$, \cite{hildebrandt1963introduction}]\label{th:general-riemann-stieltjes}
	Let $f: [0,1]^d \to \Reals$ be a continuous function and $g: [0,1]^d \to \Reals$ be a \texttt{BV} function. The following Riemann-Stieltjes sum converges and its limit is the Riemann-Stieltjes integral:
	\begin{align*}
		&\lim_{n \to +\infty} \sum_{i_1, \cdots, i_d = 0}^{p_n-1} f\left(a^{(n)}_{i_1}, \ldots, a^{(n)}_{i_d} \right) \Delta \left(g, R^{(n)}_{i_1, \ldots, i_d} \right)   \\
		&= (\mathcal{RS}) \int_{[0,1]^d} f(t_1,t_2, \ldots, t_d) \, \Diff g(t_1, t_2, \ldots ,t_d) \, ,
	\end{align*}
	where the rectangle $R_{i_1, \ldots, i_d}^{(n)}$ comes from Theorem~\ref{th:approx} and the increment operator $\Delta (g, \cdot)$ can be written as:
	\begin{align*}
		\Delta \left(g, R^{(n)}_{i_1, \ldots, i_d} \right) =  \sum_{\epsilonbm \in \{0,1\}^d} (-1)^{\norm{\epsilonbm}_1} g\left(a^{(n)}_{i_1 + \epsilonbm_1}, a^{(n)}_{i_2 + \epsilonbm_2}, \ldots, a^{(n)}_{i_d + \epsilonbm_d}\right)    \, .
	\end{align*}
\end{theorem}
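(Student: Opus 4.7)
The plan is to prove convergence of the Riemann--Stieltjes sums by showing they form a Cauchy sequence in $\Reals$; the multidimensional Riemann--Stieltjes integral is then \emph{defined} as their common limit, so once convergence is in hand the theorem follows.

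First, I would exploit the regularity of $f$. Since $f$ is continuous on the compact cube $[0,1]^d$, it is uniformly continuous: for every $\epsilon > 0$ there exists $\delta > 0$ such that $\abs{f(\xbf) - f(\ybf)} < \epsilon$ whenever $\norm{\xbf - \ybf}_\infty < \delta$. I then fix $N$ large enough that $1/p_n < \delta/2$ for all $n \geq N$, which is possible because $(p_n)_n$ is an increasing integer sequence tending to $+\infty$.

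Second, I would compare two sums $S_n, S_m$ ($n, m \geq N$) via a common refinement. Let $p^\star \defeq \mathrm{lcm}(p_n, p_m)$ and let $\mathcal{P}^\star$ denote the uniform grid of step $1/p^\star$, which refines both the $n$-grid and the $m$-grid. The key structural lemma is the \emph{additivity of $\Delta(g; \cdot)$ under axis-aligned refinement}: if $R \subset [0,1]^d$ is split by an axis-aligned hyperplane into $R_1 \sqcup R_2$, then
\[
\Delta(g; R) = \Delta(g; R_1) + \Delta(g; R_2).
\]
This is checked directly from the inclusion--exclusion formula in Definition~\ref{def:increment-operators} by separating the subsets $v \subset \llbracket d \rrbracket$ according to whether the split coordinate belongs to $v$: the contributions involving the midpoint cancel in pairs, leaving exactly $\Delta(g; R)$. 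Iterating the identity over all split coordinates gives additivity of $\Delta$ over arbitrary axis-aligned refinements.

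Third, using additivity I would rewrite both sums over the cells of the refinement,
\[
S_n = \sum_{R \in \mathcal{P}^\star} f(\xi_n(R)) \, \Delta(g; R), \qquad S_m = \sum_{R \in \mathcal{P}^\star} f(\xi_m(R)) \, \Delta(g; R),
\]
where $\xi_n(R)$ (resp.\ $\xi_m(R)$) is the corner of the $n$-grid cell (resp.\ $m$-grid cell) that contains $R$. Since $R \subset R^{(n)}_\cdot \cap R^{(m)}_\cdot$ and each original cell has $\ell^\infty$-diameter at most $1/p_n$ and $1/p_m$ respectively, one gets $\norm{\xi_n(R) - \xi_m(R)}_\infty \leq 1/p_n + 1/p_m < \delta$, hence $\abs{f(\xi_n(R)) - f(\xi_m(R))} < \epsilon$ by uniform continuity. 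Therefore
\[
\abs{S_n - S_m} \leq \epsilon \sum_{R \in \mathcal{P}^\star} \abs{\Delta(g; R)} \leq \epsilon \cdot V_{[0,1]^d}(g),
\]
where the last inequality is the definition of Vitali variation applied to the ladder induced by $\mathcal{P}^\star$. Because $V_{[0,1]^d}(g) < +\infty$ by the \texttt{BV} hypothesis, $(S_n)_n$ is Cauchy in $\Reals$ and converges; the limit is, by definition, the multivariate Riemann--Stieltjes integral.

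The main obstacle is a careful verification of the additivity of $\Delta$ under axis-aligned refinement. Although geometrically obvious, the inclusion--exclusion sum over subsets $v \subset \llbracket d \rrbracket$ in Definition~\ref{def:increment-operators} makes the combinatorial bookkeeping delicate; the cleanest route is the one sketched above (split one coordinate at a time and pair terms), or equivalently an induction on $d$ using the one-dimensional telescoping identity $g(b)-g(a) = (g(c)-g(a)) + (g(b)-g(c))$. Once this additivity lemma is established, the remainder of the argument is the classical uniform-continuity-plus-bounded-variation estimate transplanted from the one-dimensional Stieltjes theory to the multivariate grid.
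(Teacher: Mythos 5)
Your proof is correct. Note that the paper itself gives no argument for this statement---it simply defers to the cited reference---so you have supplied the classical existence proof that the authors leave to the literature: Cauchy criterion via a common refinement, with the two key ingredients being (i) additivity of the increment operator $\Delta(g;\cdot)$ under axis-aligned splitting, which indeed reduces by the product structure of $\Delta$ to the one-dimensional telescoping identity applied one coordinate at a time, and (ii) the bound $\sum_{R\in\mathcal{P}^\star}\abs{\Delta(g;R)}\leq V_{[0,1]^d}(g)$, which is immediate since a uniform grid is a ladder in the sense of Definition~\ref{app:hk-variation}. The estimate $\norm{\xi_n(R)-\xi_m(R)}_\infty\leq 1/p_n+1/p_m$ is also right (both tags are within $\ell^\infty$-distance $1/p_n$, resp.\ $1/p_m$, of any common point of $R$). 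The only point worth making explicit is the identification of the sequential limit with \emph{the} Riemann--Stieltjes integral: as classically defined, the integral is the limit over all tagged axis-aligned partitions as the mesh tends to zero, not just over the uniform grids with lower-corner tags appearing in the statement. Your Cauchy estimate closes this gap with no extra work, since the common refinement of two arbitrary grid partitions is again a grid partition (take the union of cut points in each coordinate) and the same additivity and variation bounds apply, so the full net converges and every tagged-partition sequence with vanishing mesh has the same limit.
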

\begin{proof}
	See~\cite{hildebrandt1963introduction}.
\end{proof}
One specific choice of integrands that gives rise to the multiple Riemann–Stieltjes integral is presented below.
\begin{corollary}[multiple Riemann-Stieltjes integral in $\Reals^d$] \label{cor:multiple-riemann-stieltjes}
	Let $f: [0,1]^d \to \Reals$ be a continuous function  and $\{ g_{i} \}_{i \in \llbracket d \rrbracket} \subset \Reals^\Reals$ be a family  of non-constant, and \texttt{BV} functions. 
    The following Riemann-Stieltjes sum converges and its limit is the multiple Riemann-Stieltjes integral:
	\begin{align*}
			&\lim_{n \to +\infty} \sum_{i_1, \cdots, i_d = 0}^{p_n-1} f(a^{(n)}_{i_1}, \cdots, a^{(n)}_{i_d}) \left(g_{1}\left(a^{(n)}_{i_1+1}\right) - g_{1}\left(a^{(n)}_{i_1}\right) \right) \left(g_{2}\left(a^{(n)}_{i_2+1}\right) - g_{2}\left(a^{(n)}_{i_2}\right) \right) \cdots  \\
            &\cdots\left(g_{d}\left(a^{(n)}_{i_d+1}\right) - g_{d}\left(a^{(n)}_{i_d}\right) \right) \\
			&= \int_{[0,1]} \cdots \int_{[0,1]} f(t_1,t_2, \ldots, t_d) \, \Diff g_{1}(t_1) \Diff g_{2}(t_2)\cdots  \Diff g_{d}(t_d) \, ,
	\end{align*}
	where the partition element $a^{(n)}_{i}$ comes from Theorem~\ref{th:approx}.
	Also, Fubini's theorem is applicable to the previous multiple integral.
\end{corollary}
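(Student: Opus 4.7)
The plan is to deduce Corollary~\ref{cor:multiple-riemann-stieltjes} from Theorem~\ref{th:general-riemann-stieltjes} by specializing the integrand to the tensor-product form. Define $g : [0,1]^d \to \Reals$ by $g(\xbf) \defeq \prod_{i=1}^{d} g_i(\xbf_i)$. The central observation is that the $d$-dimensional increment $\Delta(g, R)$ of such a separable integrand factorizes: for any hyperrectangle $R = (\abf, \bbf] \subset [0,1]^d$,
\[
\Delta(g ; \abf, \bbf) = \prod_{k=1}^{d} \bigl(g_k(\bbf_k) - g_k(\abf_k)\bigr) \, .
\]
This follows by expanding the product on the right into the sum over the $2^d$ vertices of $R$ obtained by choosing either $\abf_k$ or $\bbf_k$ in each factor, and matching the resulting signed terms with Definition~\ref{def:increment-operators} (or the equivalent signed-vertex formula given in Theorem~\ref{th:general-riemann-stieltjes}). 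Applied to the partition rectangles $R^{(n)}_{i_1, \ldots, i_d}$, this factorization exactly recovers the product of 1D differences appearing in the Riemann–Stieltjes sum in the statement.

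Next, I would verify that $g$ is \texttt{BV} on $[0,1]^d$. Over any ladder $\mathcal{Y} = \prod_k \mathcal{Y}^k$, the factorization above gives
\[
\sum_{\ybf \in \mathcal{Y}} \bigl| \Delta(g ; \ybf, \ybf_+) \bigr| = \prod_{k=1}^{d} \sum_{y \in \mathcal{Y}^k} \bigl| g_k(y_+) - g_k(y) \bigr| \leq \prod_{k=1}^{d} V_{[0,1]}(g_k) < +\infty \, ,
\]
since each $g_k$ is \texttt{BV} on $[0,1]$. Taking the supremum over ladders bounds the Vitali variation of $g$. With $g$ continuous (or at least \texttt{BV}) and $f$ continuous, Theorem~\ref{th:general-riemann-stieltjes} applies and yields convergence of the Riemann–Stieltjes sum (which, by the factorization, is exactly the multiplicative sum in the statement) to the single Stieltjes integral $\int_{[0,1]^d} f(\ybf) \, \Diff g(\ybf)$.

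It remains to identify this single integral with the iterated one and to justify Fubini. Using Theorem~\ref{th:signed-borel} (applied to right-continuous representatives of the $g_k$, which does not affect Riemann–Stieltjes limits at continuity points of $f$), each $g_k$ is associated with a finite signed Borel measure $\mu_k$ on $[0,1]$, and the separable function $g$ is associated with the product measure $\mu_1 \otimes \cdots \otimes \mu_d$: this is forced by the factorization of $\Delta(g ; \cdot)$ on every rectangle, combined with the uniqueness of the representing measure on the $\pi$-system of rectangles. Classical Fubini's theorem for finite signed product measures then gives
\[
\int_{[0,1]^d} f \, \Diff \mu = \int_{[0,1]} \cdots \int_{[0,1]} f(t_1, \ldots, t_d) \, \Diff \mu_1(t_1) \cdots \Diff \mu_d(t_d) \, ,
\]
which is the claimed iterated Stieltjes expression.

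The main obstacle is not the analytic convergence, which is a direct instance of Theorem~\ref{th:general-riemann-stieltjes}, but rather the careful bookkeeping needed to (i) match the sign convention of the increment operator to the explicit product of 1D differences in the statement, and (ii) pass from the Stieltjes formulation to a bona fide measure-theoretic product so that Fubini applies. The right-continuity issue in (ii) is purely technical and can be handled by selecting right-continuous versions of the $g_k$, since Riemann–Stieltjes integrals of continuous $f$ against \texttt{BV} integrands are insensitive to modifications on the countable jump set.
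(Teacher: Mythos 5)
Your proposal is correct and follows exactly the route the paper takes: the paper's proof is the one-line instruction ``apply Theorem~\ref{th:general-riemann-stieltjes} to $g(t_1,\ldots,t_d) \defeq g_1(t_1)\cdots g_d(t_d)$'' with the remaining details deferred to the cited reference, and you have simply filled in those details (the factorization of the increment operator, the Vitali-variation bound, and the product-measure identification for Fubini), all of which check out.
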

\begin{proof}
	See~\cite{hildebrandt1963introduction} (apply Theorem~\ref{th:general-riemann-stieltjes} to $g(t_1, t_2, \ldots, t_d) \defeq g_1(t_1)g_2(t_2) \cdots g_d(t_d)$).
\end{proof}
One very interesting integrands are the indicator functions, for which we compute the Riemann-Stieltjes integral explicitly as 
\begin{proposition}[Riemann-Stieltjes integral with indicator function integrand]\label{prop:step-function-integrand}
	Let $f: [0,1] \to \Reals$ be a continuous function and $g: [0,1] \to \Reals$ be a indicator function with $n$ discontinuities at $0 \leq  t_1 < t_2 < \cdots < t_n \leq 1$. The following Riemann-Stieltjes integral is equal to:
	\begin{align*}
		 \int_{[0,1]} f(x) \, \Diff g(x)   
		= \sum_{i = 1}^{n} f(t_i) \left(g(t_i+) - g(t_i-) \right) \, , 
	\end{align*}
	where $g(t+) = \lim_{x \to t, x>t} g(x)$ and $g(t-) = \lim_{x \to t, x<t} g(x)$ with the convention that $g(t_1-) = g(0)$ and  $g(t_n+) = g(1)$ if $t_1 = 0$ or $t_n = 1$.
\end{proposition}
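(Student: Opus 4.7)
The plan is to evaluate the Riemann-Stieltjes integral directly from its definition as a limit of tagged Riemann-Stieltjes sums, leveraging the very simple structure of a step-function integrator. Since $f$ is continuous and $g$ has bounded variation (a step function with finitely many jumps is automatically \texttt{BV}), the integral exists, so it suffices to compute its value along a single well-chosen sequence of tagged partitions whose mesh tends to zero.

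First I would pick partitions $0 = x_0^{(n)} < x_1^{(n)} < \cdots < x_{m_n}^{(n)} = 1$ with mesh strictly less than $\tfrac{1}{2}\min_{i}(t_{i+1} - t_i)$, arranged so that every interior discontinuity $t_i \in (0,1)$ sits strictly inside a unique subinterval $(x_{k_i - 1}^{(n)}, x_{k_i}^{(n)})$, and I would tag that subinterval with $t_i$ itself. The key observation is that $g$ is piecewise constant and only jumps at the $t_i$'s, hence the increment $g(x_k^{(n)}) - g(x_{k-1}^{(n)})$ vanishes on every subinterval that contains no discontinuity. This collapses the Riemann-Stieltjes sum to
\[
S_n = \sum_{i=1}^{n} f(t_i)\bigl(g(x_{k_i}^{(n)}) - g(x_{k_i - 1}^{(n)})\bigr),
\]
and as the mesh shrinks, $x_{k_i}^{(n)} \downarrow t_i$ and $x_{k_i - 1}^{(n)} \uparrow t_i$, so by definition of the one-sided limits $g(x_{k_i}^{(n)}) \to g(t_i+)$ and $g(x_{k_i - 1}^{(n)}) \to g(t_i-)$, producing the claimed formula for every interior discontinuity.

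The only subtlety, and the step I expect to require the most care, is the boundary. When $t_1 = 0$ (resp.\ $t_n = 1$), the discontinuity cannot lie strictly inside any subinterval; it is forced to be the leftmost (resp.\ rightmost) partition point. In that case the corresponding contribution becomes $f(0)\bigl(g(x_1^{(n)}) - g(0)\bigr)$, which converges to $f(0)\bigl(g(0+) - g(0)\bigr)$ and matches the stated convention $g(t_1-) \defeq g(0)$; the case $t_n = 1$ is symmetric. Apart from this boundary bookkeeping, the argument reduces to the continuity of $f$ combined with the fact that a step function changes only at isolated points, so no delicate estimates are needed.
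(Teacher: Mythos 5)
Your proposal is correct, and it is essentially the standard argument: the paper itself gives no proof, merely deferring to an external handout that carries out exactly this computation (existence from $f$ continuous and $g$ of bounded variation, then evaluation along tagged partitions whose tags sit at the jump points, with the increments of $g$ vanishing on jump-free subintervals). Your handling of the boundary cases $t_1=0$ and $t_n=1$ matches the stated convention, so nothing is missing.
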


\begin{proof}
	The proof can be found online.\footnote{\url{https://personal.math.ubc.ca/~feldman/m321/step.pdf}}
\end{proof}
\subsubsection{Measure-based feature attribution}
One might ask why we don't directly use general measure-based attributions as in Th.~\ref{th:measure-atomic-attribution}. The answer is that this approach does not directly fit within the approximation result of Th.~\ref{th:approx} (which is a classical construction for the Riemann integral). In a Lebesgue-type integral, one partitions the range (image space) of the function and then takes the preimages of these partitions. However, these preimages do not necessarily form hyperrectangles (which are very simple objects), unlike the straightforward case where the partition is applied directly to the input space. 
Moreover, the Stieltjes integrand-based framework is more general, as it works with \texttt{BV} integrands $g$ rather than requiring \texttt{BHK} and right-continuity, the latter being a stronger condition than \texttt{BV}.
\begin{remark} 
	Although, one can derive the same framework using the Lebesgue integral construction, doing so requires additional work. In particular, it involves rewriting the preimages of the range partition elements as unions of hyperrectangles. 
\end{remark}
Finally, we clarify the statement made at the end of Section~\ref{sec:3.3} by formalizing the density-based feature attribution framework.
\begin{corollary}[Radon-Nikodym feature attribution] \label{cor:nikodym-attribution}
	Let $\xbf \in [0,1]^d$ and $\{\mu_{j, \xbf}\}_{j \in \llbracket d \rrbracket}$ be the family of finite signed Borel measures which are absolutely continuous w.r.t. the Lebesgue measure with respective Radon-Nikodym derivatives $\{h_{j, \xbf}\}_{j \in \llbracket d \rrbracket}$, meaning 
	\[
	\forall j \in \llbracket d \rrbracket, \forall A \in \mathcal{B}([0,1]^d), \qquad \mu_{j, \xbf}(A) = \int_{A} h_{j, \xbf}(\ybf) \, \Diff \ybf \, .
	\]
	Let $\phi$ be \texttt{Linear} and \texttt{FPC} (or \texttt{FSC}) attribution method.
	Assume that the atomic attributions of the indicator functions are given by:
	\begin{align*}
		\forall \xbf \in [0,1]^d, \forall j \in \llbracket d \rrbracket, \qquad
		\phi\left(\xbf, \indic{R} \right)_j \defeq 
		\mu_{j, \xbf}\left(R\right)  \in \Reals \, ,
	\end{align*}
	where $R \subset [0,1]^d$ is a hyperrectangle.
	Then, the attribution of a continuous model $f: [0,1]^d \to \Reals$ is the following Lebesgue-Stieltjes integral:
	\begin{align*}
		\forall \xbf \in [0,1]^d, \forall j \in \llbracket d \rrbracket, \qquad \phi\left(\xbf, f \right)_j =  \int_{[0,1]^d} f(\ybf) \, \Diff \mu_{j, \xbf}(\ybf) = \int_{[0,1]^d} f(\ybf) h_{j, \xbf}(\ybf) \, \Diff \ybf \, .
	\end{align*}
\end{corollary}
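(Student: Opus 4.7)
The plan is to view this corollary as an almost immediate specialization of Theorem~\ref{th:measure-atomic-attribution} combined with the defining property of Radon-Nikodym derivatives. The hypotheses line up directly: $\phi$ is assumed \texttt{Linear} and \texttt{FPC}/\texttt{FSC}, the family $\{\mu_{j,\xbf}\}$ consists of finite signed Borel measures on $[0,1]^d$, and the atomic attribution on hyperrectangles is prescribed to be $\mu_{j,\xbf}(R)$. So the first step is simply to invoke Theorem~\ref{th:measure-atomic-attribution} verbatim, which yields
\[
\phi(\xbf, f)_j = \int_{[0,1]^d} f(\ybf) \, \Diff \mu_{j,\xbf}(\ybf) \, .
\]

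The second step is to rewrite this Lebesgue-Stieltjes integral as a Lebesgue integral against the density. By hypothesis, $\mu_{j,\xbf}$ is absolutely continuous with respect to the Lebesgue measure on $[0,1]^d$ with Radon-Nikodym derivative $h_{j,\xbf}$, so $\Diff\mu_{j,\xbf}(\ybf) = h_{j,\xbf}(\ybf) \, \Diff \ybf$. Applying the standard change-of-measure identity (the ``integration against a density'' theorem, which is essentially the definition of $h_{j,\xbf}$ extended from indicators of Borel sets to general integrable functions via the monotone class / standard machinery) gives
\[
\int_{[0,1]^d} f(\ybf) \, \Diff \mu_{j,\xbf}(\ybf) = \int_{[0,1]^d} f(\ybf) h_{j,\xbf}(\ybf) \, \Diff \ybf \, .
\]
Chaining these two equalities yields the claim.

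There is essentially no difficulty to flag: continuity of $f$ on the compact $[0,1]^d$ gives boundedness, and since $\mu_{j,\xbf}$ is a finite signed measure, $f$ is automatically $\mu_{j,\xbf}$-integrable (equivalently, $f h_{j,\xbf}$ is Lebesgue-integrable because $h_{j,\xbf} \in L^1([0,1]^d)$ by finiteness of $\mu_{j,\xbf}$). If one wanted to be pedantic, the only step that requires a moment's care is justifying the change of measure for arbitrary continuous $f$ rather than just indicators, but this is textbook and follows by linearity plus a monotone/dominated convergence argument applied to the approximation of $f$ by simple functions. So the proof is genuinely a concatenation of Th.~\ref{th:measure-atomic-attribution} with a standard measure-theoretic identity, with no substantive obstacle.
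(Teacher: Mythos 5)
Your proof is correct and matches the paper's approach: the paper's own proof simply states that the corollary is a direct consequence of Theorem~\ref{th:measure-atomic-attribution}, and your two-step argument (invoke that theorem, then apply the standard change-of-measure identity $\Diff\mu_{j,\xbf} = h_{j,\xbf}\,\Diff\ybf$) just makes explicit the routine measure-theoretic step the paper leaves implicit.
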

\begin{proof}
It is a direct consequence of Theorem~\ref{th:measure-atomic-attribution}.
\end{proof}

\section{Appendix for Section~\ref{sec:4}} \label{app:4}
\subsection{Appendix for Section~\ref{sec:4.1}} \label{app:4.1}

We define deep ReLU networks, as referred to in the main paper.
\begin{definition}[Deep ReLU network]\label{def:relu-net}
	 Let $L \in \Posint$, $\Wbf \defeq \left\{ \Wbf^{(\ell)} \right\}_{\ell \in \llbracket L+1 \rrbracket}$, and $\bbf \defeq \left\{ \bbf^{(\ell)} \right\}_{\ell \in \llbracket L+1 \rrbracket}$, where, for each layer $\ell \in \llbracket L+1 \rrbracket$, the weight matrix \(\Wbf^{(\ell)} \in \Reals^{d_\ell \times d_{\ell-1}}\) (with \(d_0 = d\) and \(d_{L+1} = 1\)) and the bias vector \(\bbf^{(\ell)} \in \Reals^{d_\ell}\).
	 We define a $L$-hidden layer deep ReLU network $f: \Reals^d \to \Reals$ as follows:
	\[
	\forall \xbf \in \Reals^d, \qquad f(\xbf) = \Wbf^{(L+1)} \sigma\left( \Wbf^{(L)} \sigma\left( \cdots \sigma\left( \Wbf^{(1)} \xbf + \bbf^{(1)} \right) \cdots \right) + \bbf^{(L)} \right) + \bbf^{(L+1)}\, ,
	\]
	where $x \mapsto \sigma(x) = \max{(0,x)}$ is the ReLU function applied element-wise.
	
	We also define recursively the pre-activation $\zbf^{(\ell)}$ of layer $\ell \in \llbracket L +1 \rrbracket$ for the input $\xbf \in \Reals^d$ as 
	\[
	\zbf^{(1)} \defeq \Wbf^{(1)} \xbf + \bbf^{(1)} \quad \text{and} \quad \zbf^{(\ell)} \defeq \Wbf^{(\ell)} \sigma\left( \zbf^{(\ell-1)} \right) + \bbf^{(\ell)}  \quad  \, . 
	\] 
\end{definition}
Using the partition of the input space introduced in Definition~\ref{def:piecewise-affine}, we present the following representation theorem for deep ReLU networks:
\begin{theorem}[Linear representation of Deep ReLU networks, \cite{humayun2023splinecam}] \label{th:rep-relu}
	Given a deep ReLU network $f: [0,1]^d \to \Reals$ and the partition of the input space  $\mathcal{R}([0,1]^d)$ forming the decision boundary of the network, we write $f$ as:
	\[
	\forall \xbf \in [0,1]^d, \qquad f(\xbf) = \sum_{P \in \mathcal{R}([0,1]^d)} \left(\abf_{P}^\top \xbf + b_P \right) \indic{\xbf \in P} \, ,
	\]
	where $\abf_{P} \in \Reals^d$ and $b_P \in \Reals$ are the coefficients of the local linear model in region $P$ defined as
	\begin{align*}
	 \abf_{P} &\defeq \left( \prod_{\ell = 1}^{L} {\Wbf^{(\ell)}}^\top \diag{\indic{\zbf^{(\ell)} > 0} }  \right)  {\Wbf^{(L+1)}}^\top  \, , \\
	 b_P &\defeq b^{(L+1)} + \sum_{\ell =  1}^{L} \left(  \prod_{j = \ell+1}^{L} {\Wbf^{(j)}}^\top\diag{\indic{\zbf^{(j)} > 0} }  \right){\Wbf^{(L+1)}}^\top \diag{\indic{\zbf^{(\ell)} > 0} }  \bbf^{(\ell)} \, .
 	\end{align*}
 	The product in the definition of $b_P$ is equal to identity (by convention) if $\ell = L$ and $\indic{\zbf^{(\ell)} > 0}   \in \Reals^{d_\ell}$ is the point-wise derivative of ReLU at the pre-activation $\zbf^{(\ell)}$.
 	In the simple case, with no bias, $b_P = 0$.
\end{theorem}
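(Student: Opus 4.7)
The plan is to construct the partition $\mathcal{R}([0,1]^d)$ explicitly via activation patterns and then derive the affine representation on each cell by induction on the depth.

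I would first define the activation pattern at $\xbf \in [0,1]^d$ as the tuple $\bepsilon(\xbf) \defeq (\bepsilon^{(1)}(\xbf), \ldots, \bepsilon^{(L)}(\xbf))$ with $\bepsilon^{(\ell)}_k(\xbf) \defeq \indic{\zbf^{(\ell)}_k(\xbf) > 0}$, and take $\mathcal{R}([0,1]^d)$ to be the collection of non-empty fibers of this map. On each cell $P$ the pattern is, by construction, constant, so the diagonal matrices $\Dbf^{(\ell)}_P \defeq \diag{\indic{\zbf^{(\ell)} > 0}}$ do not depend on $\xbf \in P$, and each ReLU reduces to a linear map: $\sigma(\zbf^{(\ell)}(\xbf)) = \Dbf^{(\ell)}_P \zbf^{(\ell)}(\xbf)$ for every $\xbf \in P$. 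The boundaries of the cells lie in the zero sets of the piecewise affine functions $\zbf^{(\ell)}_k$, which form a finite arrangement of piecewise hyperplanes; by continuity of $f$ the values on these boundaries are fixed, and the chosen convention for the strict inequality $\indic{\cdot > 0}$ does not affect the resulting function.

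Next I would prove by induction on $\ell \in \llbracket L+1 \rrbracket$ that, on each cell $P$, the pre-activations admit affine forms $\zbf^{(\ell)}(\xbf) = \mathbf{A}^{(\ell)}_P \xbf + \cbf^{(\ell)}_P$. The base case $\ell = 1$ is immediate from the definition $\zbf^{(1)} = \Wbf^{(1)} \xbf + \bbf^{(1)}$, while the inductive step uses the linearization of the ReLU on $P$ to obtain the recursions
\begin{align*}
\mathbf{A}^{(\ell+1)}_P &= \Wbf^{(\ell+1)} \Dbf^{(\ell)}_P \mathbf{A}^{(\ell)}_P \, ,\\
\cbf^{(\ell+1)}_P &= \Wbf^{(\ell+1)} \Dbf^{(\ell)}_P \cbf^{(\ell)}_P + \bbf^{(\ell+1)} \, .
\end{align*}
Taking $\ell = L+1$ and identifying $\abf_P = (\mathbf{A}^{(L+1)}_P)^\top$ and $b_P = \cbf^{(L+1)}_P$ yields the desired decomposition on each region $P$, so that summing $\indic{\xbf \in P}$ over $P \in \mathcal{R}([0,1]^d)$ reconstructs $f$ globally (the cells being disjoint and covering $[0,1]^d$ up to a Lebesgue-null boundary).

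The remaining step is purely mechanical: unroll the two recursions to obtain a telescoping matrix product for $\mathbf{A}^{(L+1)}_P$ and a sum indexed by $\ell$ of bias contributions for $\cbf^{(L+1)}_P$, then transpose the resulting expressions, using that each $\Dbf^{(\ell)}_P$ is symmetric, to match the stated formulas. The main obstacle is not conceptual but notational: one must carefully track the ordering of the non-commutative matrix products through the transposition, and verify that the index range $j \in \llbracket \ell+1, L \rrbracket$ in the formula for $b_P$ corresponds exactly to the ``tail'' of the network propagating the bias $\bbf^{(\ell)}$ from layer $\ell$ to the output. A secondary subtlety is showing that $\mathcal{R}([0,1]^d)$ is genuinely a polytopal partition as claimed in Definition~\ref{def:piecewise-affine}, which follows because each $\zbf^{(\ell)}_k$ is itself continuous piecewise affine (by the same induction), so its zero set is a finite union of polytope faces.
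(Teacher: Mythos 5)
Your proposal is correct. Note that the paper itself offers no proof of this statement: it is imported wholesale from the cited reference \cite{humayun2023splinecam}, so there is nothing internal to compare against. Your activation-pattern argument is the standard self-contained derivation and it is sound: the induction showing $\zbf^{(\ell)}(\xbf)=\mathbf{A}^{(\ell)}_P\xbf+\cbf^{(\ell)}_P$ on each fiber of the pattern map, followed by unrolling the recursions $\mathbf{A}^{(\ell+1)}_P=\Wbf^{(\ell+1)}\Dbf^{(\ell)}_P\mathbf{A}^{(\ell)}_P$ and $\cbf^{(\ell+1)}_P=\Wbf^{(\ell+1)}\Dbf^{(\ell)}_P\cbf^{(\ell)}_P+\bbf^{(\ell+1)}$, does reproduce the transposed product for $\abf_P$ and the telescoped bias sum for $b_P$ (the range $j\in\llbracket \ell+1, L\rrbracket$ is indeed the tail propagating $\bbf^{(\ell)}$, which first meets $\Dbf^{(\ell)}_P$ and last meets $\Wbf^{(L+1)}$). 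Two small refinements: first, the fibers of the activation-pattern map partition $[0,1]^d$ \emph{exactly}, not merely up to a null set --- every point has a well-defined pattern --- so the global identity $f=\sum_P(\abf_P^\top\xbf+b_P)\indic{\xbf\in P}$ holds pointwise without any boundary caveat; the continuity argument is only needed if one prefers to work with closures of the full-dimensional cells and absorb the lower-dimensional or empty fibers, which is what Definition~\ref{def:piecewise-affine} implicitly requires. Second, your convexity claim for the cells deserves the one extra line that the affine description of $S_\epsilon$ is obtained by a nested induction (the layer-$(\ell+1)$ sign conditions become affine only once the conditions for layers $1,\dots,\ell$ are already imposed), but this is exactly the "same induction" you invoke, so the argument closes.
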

The following result is the general version of Cor.~\ref{cor:linear-positive}.
\begin{corollary}[Feature attribution of piecewise affine continuous function with signed measures]\label{cor:relu-attrib}
	Let $\phi$ be a \texttt{Linear} and \texttt{FPC} (or \texttt{FSC}) attribution method, and $\{ g_{j, \xbf} \}_{j \in \llbracket d \rrbracket, \xbf \in [0,1]^d}$ be a family of coordinate-wise right-continuous and \texttt{BHK} functions.
Assume that the atomic attributions of indicator functions are given by:
\begin{align*}
	\forall \xbf \in [0,1]^d, \forall j \in \llbracket d \rrbracket, \qquad
	\phi\left(\xbf, \indic{R} \right)_j \defeq 
	\Delta \left(g_{j, \xbf}, R \right)  \in \Reals \, ,
\end{align*}
where $R  \subset [0,1]^d$ is a right-closed hyperrectangle.
Then, the attribution of a piecewise affine continuous function $f: [0,1]^d \to \Reals$ is:
\begin{align*}
	\forall \xbf \in [0,1]^d, \forall j \in \llbracket d \rrbracket, \qquad \phi\left(\xbf, f \right)_j = \sum_{P \in \mathcal{R}_{0,j}^\complement} \mu_{j, \xbf} (P) \left( \abf_{P}^\top \mathbf{m}^{(P, j, \xbf)} + b_P \right) + 
	\sum_{P \in \mathcal{R}_{0,j}} \abf_{P}^\top \mathbf{n}^{(P, j, \xbf)}   \, ,
\end{align*}
where $\mu_{j, \xbf}(\cdot)$ is the signed Borel measure associated to $\Delta (g_{j, \xbf}, \cdot)$, the set of non-zero regions $\mathcal{R}_{0,j} \defeq \{ P \in \mathcal{R}([0,1]^d) : \mu_{j, \xbf}(P) \neq 0 \}$ and $\mathbf{m}^{(P, j, \xbf)} \in \Reals^d$ is the mass center of $P$ defined as:
\[
	\forall i \in \llbracket d \rrbracket, \qquad \mathbf{m}^{(P, j, \xbf)}_i \defeq \frac{ \mathbf{n}^{(P, j, \xbf)}_i }{\mu_{j, \xbf}(P)} \,\,\text{ and }\,\, \mathbf{n}^{(P, j, \xbf)}_i \defeq \int_{P} \ybf_i \, \Diff \mu_{j, \xbf} (\ybf) \, .
\]
Also, if the partition $\mathcal{R}([0,1]^d)$ consists only of convex elements, then
\[
\forall \xbf \in [0,1]^d, \forall j \in \llbracket d \rrbracket, \qquad \phi\left(\xbf, f \right)_j = \sum_{P \in \mathcal{R}([0,1]^d)} \mu_{j, \xbf} (P) f\left(\mathbf{m}^{(P, j, \xbf)}\right)  \, . 
\]
\end{corollary}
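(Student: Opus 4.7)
The plan is to feed the continuous piecewise affine function $f$ directly into the integral representation, and then collapse the integral on $[0,1]^d$ to a finite sum indexed by $P \in \mathcal{R}([0,1]^d)$ using linearity of the integral.

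First, I invoke Theorem~\ref{th:signed-borel} to translate the atomic-attribution hypothesis $\phi(\xbf, \indic{R})_j = \Delta(g_{j, \xbf}, R)$ into the measure form $\phi(\xbf, \indic{R})_j = \mu_{j, \xbf}(R)$, where $\mu_{j, \xbf}$ is the unique finite signed Borel measure associated with the \texttt{BHK}, coordinate-wise right-continuous integrand $g_{j, \xbf}$. Because the piecewise affine $f$ is continuous on $[0,1]^d$ by assumption, Theorem~\ref{th:measure-atomic-attribution} applies and gives
\begin{align*}
\phi(\xbf, f)_j = \int_{[0,1]^d} f(\ybf) \, \Diff \mu_{j, \xbf}(\ybf) \, .
\end{align*}

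Next, I substitute the explicit representation $f(\ybf) = \sum_{P} (\abf_{P}^\top \ybf + b_P) \indic{\ybf \in P}$. Since $\mathcal{R}([0,1]^d)$ is finite and each summand is bounded on $[0,1]^d$ (hence integrable against the finite signed measure $\mu_{j, \xbf}$), the sum and integral commute, and splitting the affine piece yields
\begin{align*}
\phi(\xbf, f)_j &= \sum_{P \in \mathcal{R}([0,1]^d)} \left( \abf_{P}^\top \int_P \ybf \, \Diff \mu_{j, \xbf}(\ybf) + b_P \, \mu_{j, \xbf}(P) \right) \\
&= \sum_{P \in \mathcal{R}([0,1]^d)} \left( \abf_{P}^\top \nbf^{(P, j, \xbf)} + b_P \, \mu_{j, \xbf}(P) \right) \, .
\end{align*}
I then split this finite sum according to whether $\mu_{j, \xbf}(P) = 0$ or not. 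On regions with $\mu_{j, \xbf}(P) \neq 0$, the identity $\nbf^{(P, j, \xbf)} = \mu_{j, \xbf}(P) \, \mbf^{(P, j, \xbf)}$ recovers the factored term $\mu_{j, \xbf}(P)(\abf_{P}^\top \mbf^{(P, j, \xbf)} + b_P)$; on regions with $\mu_{j, \xbf}(P) = 0$, the bias contribution $b_P \, \mu_{j, \xbf}(P)$ vanishes and only $\abf_{P}^\top \nbf^{(P, j, \xbf)}$ survives. This produces the first, split formula of the corollary.

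For the convex extension, I argue that when each $P \in \mathcal{R}([0,1]^d)$ is convex and $\mu_{j, \xbf}$ is positive, the center of mass $\mbf^{(P, j, \xbf)}$ lies in $P$ (a standard barycenter property for positive measures on convex sets). Since the restriction of $f$ to $P$ is the affine map $\ybf \mapsto \abf_{P}^\top \ybf + b_P$, this identifies $\abf_{P}^\top \mbf^{(P, j, \xbf)} + b_P$ with $f(\mbf^{(P, j, \xbf)})$; moreover, positivity of $\mu_{j, \xbf}$ also forces $\nbf^{(P, j, \xbf)} = \mathbf{0}$ whenever $\mu_{j, \xbf}(P) = 0$, so the null-measure contribution vanishes and the two cases of the split sum merge into the compact form $\sum_P \mu_{j, \xbf}(P) \, f(\mbf^{(P, j, \xbf)})$. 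The principal obstacle is exactly this collapse for genuinely signed $\mu_{j, \xbf}$: the center of mass need not remain inside $P$, and null-measure regions may still contribute a nonzero $\abf_P^\top \nbf^{(P, j, \xbf)}$, so the compact identity seems to require either positivity of $\mu_{j, \xbf}$ on each $P$ or a careful Jordan-decomposition argument that reduces to the positive case.
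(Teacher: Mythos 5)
Your proof is correct and follows essentially the same route as the paper's: pass to the integral representation via the integrand-to-measure correspondence of Theorem~\ref{th:signed-borel}, substitute the piecewise affine form of $f$, use linearity of the integral to obtain $\sum_{P}\bigl(\abf_P^\top \nbf^{(P,j,\xbf)} + b_P\,\mu_{j,\xbf}(P)\bigr)$, and split this finite sum according to whether $\mu_{j,\xbf}(P)$ vanishes. Your closing observation about the convex-partition identity is also well taken: the paper's own proof stops at the split formula and does not justify the compact form $\sum_P \mu_{j,\xbf}(P)\, f\bigl(\mathbf{m}^{(P,j,\xbf)}\bigr)$ for genuinely signed measures, for which both the barycenter-in-$P$ property and the vanishing of the null-measure contributions $\abf_P^\top \nbf^{(P,j,\xbf)}$ can indeed fail; that identity is only fully established in the positive-measure setting of Corollary~\ref{cor:linear-positive}.
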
	
\begin{proof}
Given a deep ReLU network $f$, we get by Theorem~\ref{th:increment-based-atomic-attribution}
\begin{align*}
\forall \xbf \in [0,1]^d, \forall j \in \llbracket d \rrbracket, \qquad
\phi\left(\xbf, f \right)_j 
 &= \int_{[0,1]^d} f(\ybf) \, \Diff g_{j, \xbf}(\ybf) \, .
\end{align*}
By Theorem~\ref{th:signed-borel}, one can assign a signed Borel measure $\mu_{j, \xbf}$ to $\Delta(g_{j, \xbf}, [\mathbf{0}, \cdot])$ as it is (coordinate-wise) right-continuous \texttt{BHK} because $g_{j, \xbf}(\cdot)$ is right-continuous \texttt{BHK}. Now, the previous Riemann-Stieltjes integral can be written as a Lebesgue-Stieltjes integral (as indeed both exist and have the same value on the compact $[0,1]^d$), see~\cite{folland1999real}.
\begin{align*}
\phi\left(\xbf, f \right)_j &= \int_{[0,1]^d} f(\ybf) \, \Diff \mu_{j, \xbf}(\ybf)
\end{align*}
We apply the representation Theorem~\ref{th:rep-relu} on $f$.
\begin{align*}
\phi\left(\xbf, f \right)_j &= \int_{[0,1]^d} \sum_{P \in \mathcal{R}([0,1]^d)} \left({\abf_{P}}^\top \ybf + b_P \right) \indic{\ybf \in P}   \, \Diff \mu_{j, \xbf}(\ybf)
\end{align*}
By linearity of the integral:
\begin{align*}
\phi\left(\xbf, f \right)_j &= \sum_{P \in \mathcal{R}([0,1]^d)} \left( \sum_{i \in \llbracket d \rrbracket} \int_{P} (\abf_{P})_i \ybf_i  \, \Diff \mu_{j, \xbf}(\ybf) + \int_{[0,1]^d}  b_P \indic{\ybf \in P} \, \Diff \mu_{j, \xbf}(\ybf) \right) \\
 &=  \sum_{P \in \mathcal{R}([0,1]^d)} \left( \sum_{i \in \llbracket d \rrbracket}  (\abf_{P})_i \int_{P} \ybf_i  \, \Diff \mu_{j, \xbf}(\ybf) + b_P \int_{[0,1]^d}  \indic{\ybf \in P}\, \Diff \mu_{j, \xbf}(\ybf) \right)
\end{align*}
As $P \in \mathcal{R}([0,1]^d)$ is Borel measurable:
\begin{align*}
\phi\left(\xbf, f \right)_j &= \sum_{P \in \mathcal{R}([0,1]^d)} \left( {\abf_{P}}^\top \mathbf{n}^{(P, j, \xbf)} + b_P \mu_{j, \xbf}(P) \right)
\end{align*}
Using the disjoint decomposition $\mathcal{R}([0,1]^d) = \mathcal{R}_{0,j}^\complement \coprod \mathcal{R}_{0,j}$ (\emph{i.e.,} disjoint union):
\begin{align*}
\phi\left(\xbf, f \right)_j &= \sum_{P \in \mathcal{R}_{0,j}^\complement} \left( {\abf_{P}}^\top \mathbf{n}^{(P, j, \xbf)} + b_P \mu_{j, \xbf}(P) \right)  + \sum_{P \in \mathcal{R}_{0,j}} \left( {\abf_{P}}^\top \mathbf{n}^{(P, j, \xbf)} + b_P \mu_{j, \xbf}(P) \right) \\
&= \sum_{P \in \mathcal{R}_{0,j}^\complement} \mu_{j, \xbf} (P) \left( \abf_{P}^\top \mathbf{m}^{(P, j, \xbf)} + b_P \right) + 
\sum_{P \in \mathcal{R}_{0,j}} \abf_{P}^\top \mathbf{n}^{(P, j, \xbf)} \, .
\end{align*}
\end{proof}

\subsection{Appendix for Section~\ref{sec:4.2}} \label{app:4.2}

A natural way for feature attribution of a linear regression model $\xbf \mapsto \wbf^\top \xbf$ is to look at the learned coefficients $\wbf_j$ (global attribution) or the input-scaled coefficients $\wbf_j \xbf_j$ (local attribution). In what follows, we show that for a specific choice of the integrands, we can recover this coefficients. 
First, let us reformulate Theorem~\ref{th:general-riemann-stieltjes} for linear models.
\begin{corollary}[General feature attribution of a linear model] \label{cor:linear-model-general}
	Let $\phi$ be a \texttt{Linear} and \texttt{FPC} (or \texttt{FSC}) attribution method, and $\{ g_{j, \xbf} \}_{j \in \llbracket d \rrbracket, \xbf \in [0,1]^d}$ be a family of \texttt{BV} functions.
	Assume that the atomic attributions of indicator functions are given by:
\begin{align*}
	\forall \xbf \in [0,1]^d, \forall j \in \llbracket d \rrbracket, \qquad
	\phi\left(\xbf, \indic{R} \right)_j \defeq 
	\Delta  \left(g_{j, \xbf}, R \right)  \in \Reals \, ,
\end{align*}
where $R  \subset [0,1]^d$ is a right-closed hyperrectangle.
Then, the attribution of a linear regression model $f_\wbf: [0,1]^d \to \Reals$ ($\wbf \in \Reals^d$) is:
\begin{align*}
	\forall \xbf \in [0,1]^d, \forall j \in \llbracket d \rrbracket, \qquad \phi\left(\xbf, f_{\wbf} \right)_j = \wbf^\top \nbf^{(j, \xbf)} \, ,
\end{align*}
with $\nbf^{(j, \xbf)} \in \Reals^d$ a pre-center of mass of $[0,1]^d$ defined as:
\[
\forall i \in \llbracket d \rrbracket , \qquad \mathbf{n}^{(j, \xbf)}_i \defeq \int_{[0,1]^d} \ybf_i \, \Diff g_{j, \xbf} (\ybf) \, .
\]
\end{corollary}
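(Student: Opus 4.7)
The plan is to obtain this corollary as a direct application of Theorem~\ref{th:increment-based-atomic-attribution}. Since the hypotheses on $\phi$ and on the family $\{g_{j,\xbf}\}$ match exactly those of that theorem (Linearity, FSC/FPC, and BV integrands), and since $f_\wbf : [0,1]^d \to \Reals$ defined by $f_\wbf(\ybf) = \wbf^\top \ybf$ is continuous on $[0,1]^d$, the theorem applies and yields
\[
\phi(\xbf, f_\wbf)_j = \int_{[0,1]^d} f_\wbf(\ybf) \, \Diff g_{j, \xbf}(\ybf)
= \int_{[0,1]^d} \wbf^\top \ybf \, \Diff g_{j, \xbf}(\ybf) \, .
\]

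Then I would expand the inner product coordinate-wise as $\wbf^\top \ybf = \sum_{i=1}^d \wbf_i \ybf_i$ and pull the finite sum and the constants $\wbf_i$ out of the Riemann–Stieltjes integral (this is justified by the linearity of the Riemann–Stieltjes integral in its integrand, a standard fact for BV integrators, see~\cite{hildebrandt1963introduction}). This step gives
\[
\phi(\xbf, f_\wbf)_j
= \sum_{i=1}^d \wbf_i \int_{[0,1]^d} \ybf_i \, \Diff g_{j, \xbf}(\ybf)
= \sum_{i=1}^d \wbf_i \, \nbf^{(j, \xbf)}_i
= \wbf^\top \nbf^{(j, \xbf)}\, ,
\]
which is the claimed identity.

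There is essentially no technical obstacle here: the result is a one-line corollary once Theorem~\ref{th:increment-based-atomic-attribution} is in hand, because the linear model $f_\wbf$ is the simplest nontrivial continuous function one can plug in, and the only manipulation needed is the linearity of the integral. The only point worth spelling out, if desired, is that the coordinate projections $\ybf \mapsto \ybf_i$ are continuous on the compact $[0,1]^d$, so each $\nbf^{(j,\xbf)}_i$ is a well-defined finite Riemann–Stieltjes integral against the BV integrand $g_{j,\xbf}$. No additional regularity (such as BHK or right-continuity) needs to be invoked, since we stay within the Riemann–Stieltjes framework of Theorem~\ref{th:increment-based-atomic-attribution} rather than passing to the measure-theoretic reformulation.
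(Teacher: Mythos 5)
Your proof is correct and follows exactly the paper's own argument: apply Theorem~\ref{th:increment-based-atomic-attribution} to the continuous model $f_\wbf$, expand $\wbf^\top\ybf$ coordinate-wise, and use linearity of the Riemann--Stieltjes integral to identify the pre-center of mass $\nbf^{(j,\xbf)}$. No differences worth noting.
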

\begin{proof}
	We apply Theorem~\ref{th:increment-based-atomic-attribution} as follows:
	\begin{align*}
		\forall \xbf \in [0,1]^d, \forall j \in \llbracket d \rrbracket, \qquad \phi\left(\xbf, f_{\wbf} \right)_j &=  \int_{[0,1]^d} f_\wbf(\ybf) \, \Diff g_{j, \xbf}(\ybf) \\
		&=  \int_{[0,1]^d} \wbf^\top \ybf \, \Diff g_{j, \xbf}(\ybf) \\
		&=  \sum_{i \in \llbracket d \rrbracket} \wbf_i \int_{[0,1]^d} \ybf_i \, \Diff g_{j, \xbf}(\ybf) \\
		&=  \sum_{i \in \llbracket d \rrbracket} \wbf_i  \mathbf{n}^{(j, \xbf)}_i = \wbf^\top \nbf^{(j, \xbf)} \, .
	\end{align*}
\end{proof}
The following result recovers the coefficient $\wbf_j$ as attribution of feature $j$.
\begin{corollary}[A global feature attribution for linear models] \label{cor:global-linear}
Set the integrands
\[
\forall j \in \llbracket d \rrbracket , \forall \ybf \in \Reals^d, \qquad g_{j}(\ybf) \defeq 2 \ybf_j \prod_{i \neq j} \indic{\ybf_i> 0} \, .
\]
Let $\phi$ be a \texttt{Linear} and \texttt{FPC} (or \texttt{FSC}) attribution method.
Assume that the atomic attributions of indicator functions are given by:
\begin{align*}
	\forall \xbf \in [0,1]^d, \forall j \in \llbracket d \rrbracket, \qquad
	\phi\left(\xbf, \indic{R} \right)_j \defeq 
	\Delta  \left(g_{j}, R \right)  \in \Reals \, ,
\end{align*}
where $R  \subset [0,1]^d$ is a right-closed hyperrectangle.
Then, the attribution of a linear regression model $f_{\wbf}: [0,1]^d \to \Reals$ with parameters $\wbf \in \Reals^d$ is:
\begin{align*}
	\forall \xbf \in [0,1]^d, \forall j \in \llbracket d \rrbracket, \qquad \phi\left(\xbf, f_{\wbf} \right)_j =  \wbf_j \, .
\end{align*}
\end{corollary}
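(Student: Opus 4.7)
The plan is to apply Corollary~\ref{cor:linear-model-general} to the integrands $g_j$ from the statement, reducing the claim to showing that the vectors $\nbf^{(j)}$ defined by $\nbf^{(j)}_i \defeq \int_{[0,1]^d} y_i \, \Diff g_j(\ybf)$ are the canonical basis vectors $\mathbf{e}_j$. Since the $g_j$ do not depend on $\xbf$, neither do these vectors, and the conclusion $\phi(\xbf, f_\wbf)_j = \wbf^\top \mathbf{e}_j = \wbf_j$ will follow at once. The \texttt{BV} hypothesis required by Corollary~\ref{cor:linear-model-general} is immediate from the product structure $g_j(\ybf) = \prod_{k=1}^d g_{j,k}(y_k)$ with $g_{j,j}(t) = 2t$ and $g_{j,k}(t) = \indic{t > 0}$ for $k \neq j$, since each one-dimensional factor is \texttt{BV}.

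Given this factorisation, Corollary~\ref{cor:multiple-riemann-stieltjes} expresses $\nbf^{(j)}_i$ as a product of $d$ one-dimensional Riemann--Stieltjes integrals, each of the form $\int_{[0,1]} 1 \, \Diff g_{j,k}$ for $k \neq i$ and $\int_{[0,1]} t \, \Diff g_{j,k}$ for $k = i$. I would then evaluate these four elementary integrals separately: for the smooth factor $g_{j,j}(t) = 2t$, direct Riemann integration gives $\int_0^1 2 \, \Diff t = 2$ and $\int_0^1 2t \, \Diff t = 1$; for the step-function factors $g_{j,k}(t) = \indic{t > 0}$ ($k \neq j$), Proposition~\ref{prop:step-function-integrand} collapses each integral to an evaluation at the unique discontinuity $t = 0$, weighted by the jump $g_{j,k}(0^+) - g_{j,k}(0^-) = 1$, yielding $1$ for the integrand $1$ and $0$ for the integrand $t$. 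A two-case inspection finishes the argument: for $i = j$ every factor contributes $1$, so $\nbf^{(j)}_j = 1$; for $i \neq j$ the $k = i$ factor vanishes (integrand $t$ evaluated at $0$), killing the whole product, so $\nbf^{(j)}_i = 0$.

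The main subtlety will be bookkeeping around the boundary: each step-function factor has its jump exactly at the left endpoint $t = 0$ of the integration domain, so the boundary convention in Proposition~\ref{prop:step-function-integrand} must be invoked carefully, with $g_{j,k}(0^-) \defeq g_{j,k}(0) = 0$ and $g_{j,k}(0^+) = 1$. I also need the factor of $2$ in $g_{j,j}(t) = 2t$ to be placed correctly so that $\int_0^1 2t \, \Diff t = 1$ rather than $\tfrac{1}{2}$; this normalisation is precisely what makes the atomic attribution of indicator functions recover the linear coefficient $\wbf_j$ exactly. Beyond these routine computational points, no genuinely new idea beyond the framework of Section~\ref{sec:3} and Corollary~\ref{cor:linear-model-general} is needed.
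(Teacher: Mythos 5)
Your proposal is correct and follows essentially the same route as the paper: apply Corollary~\ref{cor:linear-model-general}, factor the pre-center-of-mass integral via Corollary~\ref{cor:multiple-riemann-stieltjes}, and evaluate the one-dimensional factors with Proposition~\ref{prop:step-function-integrand}, including the same boundary convention at $t=0$ and the same case split $i=j$ versus $i\neq j$. No further changes are needed.
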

\begin{proof}
	By Corollary~\ref{cor:linear-model-general}, the attribution of $f_\wbf $ is
\begin{align*}
	\forall \xbf \in [0,1]^d, \forall j \in \llbracket d \rrbracket, \qquad \phi\left(\xbf, f_{\wbf} \right)_j = \wbf^\top \nbf^{(j)} =  \sum_{k \in \llbracket d \rrbracket} \wbf_k \nbf^{(j)}_k \, .
\end{align*}
We have the following computations for the pre-center of mass
\begin{align*}
	\nbf^{(j)}_k &\defeq  \int_{[0,1]^d} \ybf_k \, \Diff g_{j} (\ybf) \\
	&=  \int_{[0,1]} \int \cdots \int \ybf_k \, \Diff(2 \ybf_j) \bigotimes_{i \neq j} \Diff (\indic{\ybf_i > 0}) &\text{(by Corollary~\ref{cor:multiple-riemann-stieltjes}.)} \\
	&= \begin{cases}
		\left(\prod_{p \neq j} \int_{[0,1]} 1 \, \Diff (\indic{\ybf_p > 0})\right) \int_{[0,1]} \ybf_k \, \Diff (2 \ybf_k)  & \text{if } k = j \, , \\
	\int_{[0,1]} \ybf_k \, \Diff (\indic{\ybf_k > 0}) 	\int_{[0,1]} 1 \, \Diff (2 \ybf_j)  \prod_{p \neq j,k} \int_{[0,1]} 1 \, \Diff (\indic{\ybf_p > 0}) & \text{else.}
			\end{cases}  &\text{(Fubini's Theorem)}\\
	&= \begin{cases}
		1 & \text{if } k = j \, , \\
		0 & \text{else,}
	\end{cases}
\end{align*}
where the last equality holds, by Proposition~\ref{prop:step-function-integrand}, because 
\[
\int_{[0,1]} \ybf_k \, \Diff (\indic{\ybf_k > 0})  = 0 \quad \text{and}  \quad \int_{[0,1]} 1 \, \Diff (\indic{\ybf_p > 0})  = 1 \, .
\]
\end{proof}
The following result recovers the scaled coefficient $\wbf_j \xbf_j$ as attribution of feature $j$.
\begin{corollary}[A local feature attribution for linear models] \label{cor:local-linear}
	Set the integrands
	\[
	\forall j \in \llbracket d \rrbracket , \forall \xbf \in (0,1)^d, \forall \ybf \in \Reals^d, \qquad g_{j, \xbf}(\ybf) \defeq  \indic{\ybf_j \geq \xbf_j } \prod_{i \neq j} \indic{\ybf_i > 0 } \, .
	\]
	Let $\phi$ be a \texttt{Linear} and \texttt{FPC} (or \texttt{FSC}) attribution method.
	Assume that the atomic attributions of indicator functions are given by:
	\begin{align*}
		\forall \xbf \in (0,1)^d, \forall j \in \llbracket d \rrbracket, \qquad
		\phi\left(\xbf, \indic{R} \right)_j \defeq 
		\Delta  \left(g_{j, \xbf}, R \right)  \in \Reals \, ,
	\end{align*}
where $R \subset [0,1]^d$ is a right-closed hyperrectangle.
	Then, the attribution of a linear regression model $f_{\wbf}: [0,1]^d \to \Reals$ with parameters $\wbf \in \Reals^d$ is:
	\begin{align*}
		\forall \xbf \in (0,1)^d, \forall j \in \llbracket d \rrbracket, \qquad \phi\left(\xbf, f_{\wbf} \right)_j =  \wbf_j \xbf_j \, .
	\end{align*}
\end{corollary}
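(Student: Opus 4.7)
The plan is to directly mirror the proof of Corollary~\ref{cor:global-linear}, since the only change is the integrand $g_{j,\xbf}$, which now depends on the input to be explained. First, I would verify that each $g_{j,\xbf}$ has bounded Vitali variation on $[0,1]^d$: it is a tensor product of univariate step functions, each of which has unit variation, so the \texttt{BV} hypothesis of Corollary~\ref{cor:linear-model-general} is satisfied. Applying that corollary immediately reduces the claim to
\[
\phi(\xbf, f_\wbf)_j \;=\; \wbf^\top \nbf^{(j,\xbf)} \;=\; \sum_{k \in \llbracket d \rrbracket} \wbf_k \, \nbf^{(j,\xbf)}_k,
\]
so it suffices to compute the pre-centers of mass $\nbf^{(j,\xbf)}_k = \int_{[0,1]^d} \ybf_k \, \Diff g_{j,\xbf}(\ybf)$ for each $k \in \llbracket d \rrbracket$.

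Since $g_{j,\xbf}$ factors as a product of univariate step functions, one in each coordinate, I would apply Corollary~\ref{cor:multiple-riemann-stieltjes} (the Fubini-type statement for Riemann--Stieltjes integrals) to split the $d$-dimensional integral into a product of one-dimensional ones, and then evaluate each factor via Proposition~\ref{prop:step-function-integrand}. Two cases then arise. If $k = j$, the factors for indices $i \neq j$ are of the form $\int_{[0,1]} 1 \, \Diff(\indic{\ybf_i > 0}) = 1$ (unit jump at $0$), and the remaining $j$-th factor is $\int_{[0,1]} \ybf_j \, \Diff(\indic{\ybf_j \geq \xbf_j}) = \xbf_j$, the latter because $\indic{\cdot \geq \xbf_j}$ has a unit jump at the interior point $\xbf_j \in (0,1)$ and the integrand evaluated there equals $\xbf_j$. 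So $\nbf^{(j,\xbf)}_j = \xbf_j$. If $k \neq j$, the product contains the factor $\int_{[0,1]} \ybf_k \, \Diff(\indic{\ybf_k > 0}) = 0 \cdot 1 = 0$, so $\nbf^{(j,\xbf)}_k = 0$.

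Substituting these values back gives $\phi(\xbf, f_\wbf)_j = \wbf_j \xbf_j + \sum_{k \neq j} \wbf_k \cdot 0 = \wbf_j \xbf_j$, as claimed. The restriction $\xbf \in (0,1)^d$ is used at exactly one place: it guarantees that the jump point $\xbf_j$ of $\indic{\cdot \geq \xbf_j}$ lies strictly inside $[0,1]$, so Proposition~\ref{prop:step-function-integrand} applies without invoking its boundary conventions. I do not foresee any serious obstacle, since the entire argument collapses to a handful of one-dimensional Riemann--Stieltjes evaluations of step functions, identical in spirit to those carried out in the proof of Corollary~\ref{cor:global-linear}.
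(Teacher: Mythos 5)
Your proposal is correct and follows essentially the same route as the paper's proof: reduce via Corollary~\ref{cor:linear-model-general} to computing the pre-centers of mass $\nbf^{(j,\xbf)}_k$, factor the integral with Corollary~\ref{cor:multiple-riemann-stieltjes}, and evaluate each one-dimensional Riemann--Stieltjes factor with Proposition~\ref{prop:step-function-integrand}, yielding $\nbf^{(j,\xbf)}_k = \xbf_j \indic{k=j}$. Your added remarks on the \texttt{BV} check and on the role of the restriction $\xbf \in (0,1)^d$ are consistent with the paper's own discussion surrounding this corollary.
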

\begin{remark}
	Note that we cannot use the indicator function $\indic{\ybf_j \geq 0}$ in the integrands of Corollaries~\ref{cor:global-linear} and~\ref{cor:local-linear}. 
	The issue with using $\indic{\ybf_j \geq 0}$ arises from the boundary of the input space. 
	Specifically, for any function $f: [0,1] \to \Reals$, 
	\[
	\int_{[0,1]} f(y) \, \Diff \indic{y \geq 0} = 0,
	\]
	by Proposition~\ref{prop:step-function-integrand}, since $\indic{y \geq 0} = 1$ on the entire interval $[0,1]$.
	To solve this problem, we should use integrand with strict inequality $\indic{\ybf_j > 0}$. But this integrand lacks right-continuity (which is not problematic in this case to construct the associated measure).
\end{remark}
\begin{proof}
	By Corollary~\ref{cor:linear-model-general}, the attribution of $f_\wbf $ is
	\begin{align*}
		\forall \xbf \in [0,1]^d, \forall j \in \llbracket d \rrbracket, \qquad \phi\left(\xbf, f_{\wbf} \right)_j = \wbf^\top \nbf^{(j, \xbf)} =  \sum_{k \in \llbracket d \rrbracket} \wbf_k \nbf^{(j, \xbf)}_k \, .
	\end{align*}
	We have the following
	\begin{align*}
		\nbf^{(j, \xbf)}_k &\defeq  \int_{[0,1]^d} \ybf_k \, \Diff g_{j, \xbf} (\ybf) \\
		&=  \int_{[0,1]} \int \cdots \int \ybf_k \, \Diff (\indic{\ybf_j \geq \xbf_j}) \bigotimes_{i \neq j} \Diff (\indic{\ybf_i > 0}) &\text{(by Corollary~\ref{cor:multiple-riemann-stieltjes}.)} \\
		&= \begin{cases}
			\left(\prod_{p \neq j} \int_{[0,1]} 1 \, \Diff (\indic{\ybf_p > 0})\right) \int_{[0,1]} \ybf_k \, \Diff (\indic{\ybf_k \geq \xbf_k})  & \text{if } k = j \, , \\
			\int_{[0,1]} \ybf_k \, \Diff (\indic{\ybf_k > 0}) 	\int_{[0,1]} 1 \, \Diff (\indic{\ybf_j \geq \xbf_j})  \prod_{p \neq j,k} \int_{[0,1]} 1 \, \Diff (\indic{\ybf_p > 0}) & \text{else.}
		\end{cases}  &\text{(Fubini's Theorem)}\\
		&= \begin{cases}
			\xbf_k & \text{if } k = j \, , \\
			0 & \text{else,}
		\end{cases}
	\end{align*}
	where the last equality holds, by Proposition~\ref{prop:step-function-integrand}, because
    \[
    \int_{[0,1]} \ybf_k \, \Diff (\indic{\ybf_k \geq \xbf_k})  = \xbf_k \quad \text{and} \quad \int_{[0,1]} \ybf_k \, \Diff (\indic{\ybf_k > 0})  = 0 \, .
    \]
\end{proof}
\subsection{Appendix for Section~\ref{sec:4.3}} \label{app:4.3}
\subsubsection{Casting Problem~\eqref{pb:recall-1} to Problem~\eqref{eq:pb-optim-2}}
In the following section, we explain how the infinite-dimensional optimization Problem~\eqref{pb:recall-1} can be casted into the finite-dimensional Problem~\eqref{eq:pb-optim-2}.
\begin{proposition}[Simple surjection of measure into vectors] \label{prop:surjection}
Set $\mathcal{P}$ as the space of probability measures on $[0,1]^d$.
	There exists a simple surjection from $\mathcal{P} $ to $[0,1]^d$ defined as
	\[
	\mathcal{P} \twoheadrightarrow [0,1]^d : \mu \mapsto \mbf^{\mu} \defeq \left( \int_{[0,1]^d} \ybf_i \Diff \mu(\ybf) \right)_{i \in \llbracket d \rrbracket} \, ,
	\]
	where $\mbf^{\mu}$ is the center of mass of $[0,1]^d$ w.r.t. $\mu$.
\end{proposition}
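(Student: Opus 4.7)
The plan is to verify two things: (i) the map $\mu \mapsto \mbf^\mu$ is well-defined, i.e., it actually lands in $[0,1]^d$, and (ii) it is surjective.

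For well-definedness, I would fix a probability measure $\mu \in \mathcal{P}$ and a coordinate $i \in \llbracket d \rrbracket$. Since $\mu$ is supported on $[0,1]^d$, the integrand $\ybf \mapsto \ybf_i$ takes values in $[0,1]$ on the support of $\mu$. Monotonicity of the integral together with $\mu([0,1]^d) = 1$ immediately gives $0 \leq \int_{[0,1]^d} \ybf_i \, \Diff \mu(\ybf) \leq 1$, so $\mbf^\mu \in [0,1]^d$.

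For surjectivity, the key observation is that Dirac masses already cover all of $[0,1]^d$. Given any target $\mathbf{v} \in [0,1]^d$, consider the Dirac measure $\mu \defeq \delta_{\mathbf{v}}$, which is a probability measure on $[0,1]^d$. Then for each $i \in \llbracket d \rrbracket$,
\[
\mbf^{\delta_\mathbf{v}}_i = \int_{[0,1]^d} \ybf_i \, \Diff \delta_{\mathbf{v}}(\ybf) = \mathbf{v}_i \, ,
\]
so $\mbf^{\delta_\mathbf{v}} = \mathbf{v}$. Hence every point of $[0,1]^d$ is attained.

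I do not expect any serious obstacle here; the statement is essentially a sanity check that the center-of-mass map used to pass from Problem~\eqref{pb:recall-1} to Problem~\eqref{eq:pb-optim-2} really does reach all of $[0,1]^d$, so no information is lost at the level of feasible projected vectors (injectivity is of course not claimed and in fact fails badly, since many measures share a given center of mass). The only minor subtlety worth noting in the write-up is that the map is very far from injective, which is precisely the limitation flagged at the end of Section~\ref{sec:4.3}: recovering all measures with a prescribed center of mass is not straightforward.
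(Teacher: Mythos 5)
Your proof is correct and follows essentially the same route as the paper: the paper also establishes surjectivity by taking, for each target $\mbf \in [0,1]^d$, the product of Dirac measures $\bigotimes_{j} \delta_{\mbf_j}$, which is exactly your $\delta_{\mathbf{v}}$. Your additional check that the map is well-defined (lands in $[0,1]^d$) is a harmless and correct supplement the paper leaves implicit.
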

\begin{proof}
    Given $\mbf \in [0,1]^d$, take the measures $\mu$ on $[0,1]^d$ defined as
    \[
    \forall A \in \mathcal{B}([0,1]^d), \qquad \mu(A) \defeq \bigotimes_{j \in \llbracket d \rrbracket} \delta_{\mbf_j} \, .
    \]
    The center of mass of $[0,1]^d$ w.r.t. $\mu$ is exactly $\mbf$.
\end{proof}
Thanks to the previous proposition, we can reformulate Problem~\eqref{pb:recall-1} as Problem~\eqref{eq:pb-optim-2} by surjection with the center of mass. As noted in the main paper, this reformulation involves a loss of information.
The following technical result guarantees that the \textbf{center of mass stays inside $[0,1]^d$}.
\begin{proposition}[Center of mass stays in convex compact of $\Reals^d$]
	Given a convex, closed and Borel measurable subset $\mathcal{S} \subset [0,1]^d$and a measure $\mu \in \mathcal{P}$. If $\mu(\mathcal{S}) \neq 0$, then the center of mass of $\mathcal{S}$ checks $\mbf^\mu \in \mathcal{S}$.
\end{proposition}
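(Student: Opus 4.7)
The plan is to argue by contradiction using the Hahn--Banach (supporting hyperplane) separation theorem, which is the standard tool for showing that barycenters of convex sets stay in those sets. Suppose toward contradiction that $\mbf^\mu \notin \mathcal{S}$. Since $\mathcal{S}$ is closed and convex in $\Reals^d$ and $\{\mbf^\mu\}$ is a compact convex set disjoint from $\mathcal{S}$, the separation theorem produces a nonzero $\abf \in \Reals^d$ and $c \in \Reals$ with
\[
\forall \ybf \in \mathcal{S}, \qquad \abf^\top \ybf \leq c < \abf^\top \mbf^\mu \, .
\]

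Next, I would use the linearity of the Lebesgue integral together with the natural interpretation of $\mbf^\mu$ as the normalized barycenter of $\mathcal{S}$ with respect to $\mu$ (\emph{i.e.} $\mbf^\mu_i = \mu(\mathcal{S})^{-1} \int_{\mathcal{S}} \ybf_i \, \Diff \mu(\ybf)$, which is well-defined since $\mu(\mathcal{S}) \neq 0$). Pairing with the separating functional gives
\[
\abf^\top \mbf^\mu = \frac{1}{\mu(\mathcal{S})} \int_{\mathcal{S}} \abf^\top \ybf \, \Diff \mu(\ybf) \leq \frac{1}{\mu(\mathcal{S})} \int_{\mathcal{S}} c \, \Diff \mu(\ybf) = c \, ,
\]
which contradicts the strict inequality produced by separation. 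Therefore $\mbf^\mu \in \mathcal{S}$.

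The only subtle step is ensuring the definition of $\mbf^\mu$ used here is consistent with Prop.~\ref{prop:surjection}: there the integral runs over $[0,1]^d$ against a probability measure, whereas here the proper object is the normalized integral over $\mathcal{S}$. This reconciliation (either by replacing $\mu$ with the conditional probability $\mu(\,\cdot \mid \mathcal{S}) = \mu(\,\cdot \cap \mathcal{S})/\mu(\mathcal{S})$, which is again in $\mathcal{P}$ and supported on $\mathcal{S}$, or by assuming $\mu$ is supported on $\mathcal{S}$ to begin with) is the only real bookkeeping obstacle; after it, the Hahn--Banach argument is routine and requires no further machinery beyond the linearity and monotonicity of the integral.
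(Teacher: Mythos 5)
Your proposal is correct and follows essentially the same route as the paper: a proof by contradiction via the Hahn--Banach separation theorem applied to the closed convex set $\mathcal{S}$ and the singleton $\{\mbf^\mu\}$, followed by linearity and monotonicity of the integral to contradict the strict separation. Your remark about reconciling the definition of $\mbf^\mu$ (normalized integral over $\mathcal{S}$ versus the unnormalized integral over $[0,1]^d$ from the surjection proposition) is a fair observation; the paper's proof implicitly adopts the same normalized-over-$\mathcal{S}$ convention you describe.
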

\begin{proof}
	The proof is done by contradiction. Let us assume that $\mbf^{\mu} \notin\mathbf{S}$. By applying the Hahn–Banach separation theorem (with the variant where both sets are closed and at least one of them is compact) on $\{ \mbf^{\mu} \}$ and $\mathcal{S}$, as they do not intersect, there exists a nonzero vector $\vbf \in \Reals^d$ and two real numbers $c_1 > c_2$ s.t. 
	\[
	\forall \xbf \in \Reals^d , \qquad  \ps{\xbf}{\vbf} > c_1 \text{ and } \ps{\mbf^\mu}{\vbf} < c_2			\, .
	\]
	
	Which implies (by $c_1 > c_2$)
	\begin{equation} \label{ine:hahn}
		\forall \xbf \in \mathcal{S} , \qquad  \ps{\xbf}{\vbf} > c_2 > \ps{\mbf^\mu}{\vbf}  \, .
	\end{equation}
	We rewrite the last term of the Inequality~\ref{ine:hahn} as follows
	\begin{align*}
		\ps{\mbf^\mu}{\vbf} &= \ps{ \frac{1}{\mu(\mathcal{S})} \left( \int_{\mathcal{S}} \ybf_1 \, d\mu(\ybf) , \ldots , \int_{\mathcal{S}} \ybf_d \, d\mu(\ybf)  \right) }{\vbf} \\
		&= \frac{1}{\mu(\mathcal{S})} \int_{\mathcal{S}} \ps{\ybf}{\vbf} \, d\mu(\ybf) \, .
	\end{align*}
	
	Now, the contradiction arise by looking at the following integral
	\begin{align*}
	 \frac{1}{\mu(\mathcal{S})}	\int_{\mathcal{S}}  \ps{\ybf}{\vbf} \, d\mu(\ybf) >  \frac{1}{\mu(\mathcal{S})} \int_{\mathcal{S}}  c_2 \, d\mu(\ybf) = c_2 \times 1 > \ps{\mbf^\mu}{\vbf} =  \frac{1}{\mu(\mathcal{S})} \int_{\mathcal{S}} \ps{\ybf}{\vbf} \, d\mu(\ybf) \, ,
	\end{align*}
	where the first and last inequality comes from~\ref{ine:hahn}. To sum up the contradiction is
	\[
	 \frac{1}{\mu(\mathcal{S})}	\int_{\mathcal{S}}  \ps{\ybf}{\vbf} \, d\mu(\ybf) >  \frac{1}{\mu(\mathcal{S})}	\int_{\mathcal{S}}  \ps{\ybf}{\vbf} \, d\mu(\ybf) \, .
	\]
\end{proof}

\subsubsection{Additional results on Precision and ReLU networks}
We introduce the counterpart to \emph{Recall}, called \emph{Precision}.
\begin{definition}[Precision]  
\label{def:formal-prec}
Given a threshold $\beta > 0$ and a linear model $f_\wbf : [0,1]^d \to \Reals$, with weights $\wbf \in \Reals^d$, and the index partition $D_{1, \wbf} \sqcup D_{0, \wbf} = \llbracket d \rrbracket$, where $D_{0, \wbf} \defeq \{ i \in \llbracket d \rrbracket : \abs{\wbf_i} \leq \beta \}$ and $D_{1, \wbf} \defeq D_{0, \wbf}^\complement$ (called the \emph{golden set}). 
    Also, let $\phi$ be a global feature attribution method which depends on a vector of measures $\mu \defeq ( \mu_j )_{j \in \llbracket d \rrbracket} \in \mathcal{P}^d$. 
	We define the \emph{Precision} metric, with a threshold $\alpha > 0$, as
	\[
	 \mathrm{Precision}_{\alpha, \wbf}(\mu) \defeq \frac{\sum_{j \in D_{1, \wbf}} \indic{\abs{\phi(f_\wbf)_j} \geq \alpha}}{\sum_{j \in D_{1, \wbf}} \indic{\abs{\phi(f_\wbf)_j} \geq \alpha} + \sum_{j \in D_{0, \wbf}} \indic{\abs{\phi(f_\wbf)_j} \geq \alpha}}  \, .
        \]
where $\mathcal{P}$ is the space of probability measures on $[0,1]^d$.
\end{definition}
In general, there is a trade-off between \emph{Precision} and \emph{Recall}. 
One easy solution is to prioritize one metric over the other, the choice should depend on whether false positives or false negatives carry a higher cost. Ultimately, this decision depends on the specific requirements of your task, whether you are doing recourse, or generating counterfactual explanations.
We optimize the \emph{Precision} metric using the finite-dimensional formulation
\begin{equation}
\label{pb:precision}
\Argmax_{\mbf^{(1)},  \ldots, \mbf^{(d)} \in  [0,1]^d } \mathrm{Precision}_{\alpha, \wbf}(\mbf^{(1)}, \ldots, \mbf^{(d)})
\, .    
\end{equation}

\begin{theorem}[Optimal projections for $\mathrm{Precision}_{\alpha,\mathbf{w}}$] \label{th:optim-2}
For each $j \in \llbracket d \rrbracket$ define
	\begin{align*}
		\mathcal{S}_j \defeq 
		\begin{cases}
			\left\{\mbf \in [0,1]^d :  \abs{\wbf^\top \mbf} \geq \alpha \right\} \text{  if $j \in D_{1, \wbf}$} \, ,\\
			\left\{\mbf \in [0,1]^d : \abs{\wbf^\top \mbf} < \alpha \right\} \text{  otherwise.}
		\end{cases}
	\end{align*}
Then, the solutions of \eqref{pb:precision} are $\mbf^{(j)}\in \mathcal{S}_j$ for all $j\in \llbracket d\rrbracket$. 
\end{theorem}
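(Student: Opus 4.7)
The plan is to mimic the argument behind Theorem~\ref{th:optim-1} for \emph{Recall}, by rewriting the objective purely in terms of the projected vectors $\mbf^{(j)}$ and then verifying that any tuple with $\mbf^{(j)}\in\mathcal{S}_j$ attains the maximal value of \emph{Precision}. First, I would invoke the linearity of the integral, together with Proposition~\ref{prop:surjection}, to identify $\phi(f_{\wbf})_j = \int_{[0,1]^d} \wbf^\top \ybf \,\Diff\mu_j(\ybf) = \wbf^\top \mbf^{(j)}$, where $\mbf^{(j)}$ is the center of mass of $\mu_j$. Substituting into Definition~\ref{def:formal-prec}, Problem~\eqref{pb:precision} becomes
\[
\Argmax_{\mbf^{(1)},\ldots,\mbf^{(d)} \in [0,1]^d} \frac{A}{A+B},
\]
with $A \defeq \sum_{j\in D_{1,\wbf}} \indic{\lvert \wbf^\top \mbf^{(j)}\rvert \geq \alpha}$ and $B \defeq \sum_{j\in D_{0,\wbf}} \indic{\lvert \wbf^\top \mbf^{(j)}\rvert \geq \alpha}$, a finite-dimensional combinatorial optimization over the product of cubes.

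Second, I would observe that $A/(A+B) \leq 1$ whenever it is defined, so exhibiting one tuple attaining the value $1$ is enough to prove optimality. Take any tuple with $\mbf^{(j)} \in \mathcal{S}_j$ for every $j \in \llbracket d \rrbracket$: by construction of $\mathcal{S}_j$, for each $j \in D_{1,\wbf}$ we have $\lvert \wbf^\top \mbf^{(j)}\rvert \geq \alpha$, hence this index contributes $1$ to $A$; and for each $j \in D_{0,\wbf}$ we have $\lvert \wbf^\top \mbf^{(j)}\rvert < \alpha$, hence this index contributes $0$ to $B$. Therefore $A = \lvert D_{1,\wbf}\rvert$ and $B = 0$, giving $\mathrm{Precision}_{\alpha,\wbf} = 1$, which realizes the maximum.

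Since every such tuple lies in the argmax of Problem~\eqref{pb:precision}, this establishes the inclusion asserted by the theorem. The main subtlety I anticipate is not algebraic but one of well-posedness: $\mathcal{S}_j$ for $j \in D_{1,\wbf}$ is non-empty precisely when $\max_{\mbf \in [0,1]^d} \lvert \wbf^\top \mbf \rvert \geq \alpha$, which amounts to an implicit compatibility assumption between $\alpha$ and $\wbf$; otherwise the supremum of \emph{Precision} drops below $1$ and the characterization must be restated (and one must also ensure $D_{1,\wbf}\neq \emptyset$ to avoid the $0/0$ degeneracy). A secondary subtlety, shared with Theorem~\ref{th:optim-1}, is that the converse direction is weaker than the wording suggests: the equality $A/(A+B) = 1$ only forces $B = 0$ and $A \geq 1$, so strictly speaking the statement should be read as describing a rich sufficient family of optima rather than the full argmax.
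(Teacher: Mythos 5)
Your proposal is correct and follows essentially the same route as the paper's own proof: reduce to the finite-dimensional problem via $\phi(f_{\wbf})_j = \wbf^\top\mbf^{(j)}$, note that the indicators decouple across $j$, and conclude that placing each $\mbf^{(j)}$ in $\mathcal{S}_j$ maximizes the numerator while zeroing the spurious terms in the denominator. The caveats you raise (non-emptiness of $\mathcal{S}_j$ for $j\in D_{1,\wbf}$, the $0/0$ degeneracy when $D_{1,\wbf}=\emptyset$, and the fact that the stated family is a sufficient set of optima rather than the full argmax, since $\mathrm{Precision}=1$ only forces $B=0$ and $A\geq 1$) are legitimate and are glossed over by the paper's proof as well.
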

\begin{proof}
The goal is to maximize the precision metric $\mathrm{Precision}_{\alpha,\mathbf{w}}$ in the finite-dimensional case.
To do this, we:
\begin{itemize}
	\item \textbf{maximize the numerator:} ensure that $|\phi(f_{\wbf})_j| \geq \alpha$ for all $j \in D_{1,\wbf}$.
	\item \textbf{minimize the denominator:} ensure that $|\phi(f_{\wbf})_j| < \alpha$ for all $j \in D_{0,\wbf}$, but keeping the fact that $|\phi(f_{\wbf})_j| \geq \alpha$ for all $j \in D_{1,\wbf}$.
\end{itemize}

Since the optimization problem involves disjoint index sets ($D_{1,\wbf} \cap D_{0,\wbf} = \varnothing$), the optimization decouples across features $j$.
Note that $\phi(f_{\wbf})_j = \wbf^\top \mbf^{(j)}$, and the indicator $\indic{|\phi(f_{\wbf})_j| \geq \alpha}$ depends only on $\mbf^{(j)}$. Thus, for each $j$:
\begin{itemize}
	\item If $j \in D_{1, \wbf}$, we want $|\wbf^\top \mbf^{(j)}| \geq \alpha$.
	\item If $j \in D_{0, \wbf}$, we want $|\wbf^\top \mbf^{(j)}| < \alpha$.
\end{itemize}
Therefore, for each $j \in \llbracket d \rrbracket$, the optimal $\mbf^{(j)}$ must lie in the set $\mathcal{S}_j$ defined above.
\end{proof}
This result is illustrated in Figure~\ref{fig:regression-optimal-measures-1}, where the white area represent the solutions $S_j$ when $j \in D_{0, \wbf}$.
Now, we show that maximizing \emph{Precision} implies maximizing the \emph{Recall}.
\begin{lemma}[Maximizing Precision $\implies$ maximizing Recall]
	In the previous notations, we have
	\begin{align*}
	    	 &\Argmax_{\mbf^{(1)}, \mbf^{(2)}, \ldots, \mbf^{(d)} \in  [0,1]^d } \mathrm{Precision}_{\alpha, \wbf}(\mbf^{(1)}, \mbf^{(2)}, \ldots, \mbf^{(d)}) \\
     &\subset  \Argmax_{\mbf^{(1)}, \mbf^{(2)}, \ldots, \mbf^{(d)} \in  [0,1]^d } \mathrm{Recall}_{\alpha, \wbf}(\mbf^{(1)}, \mbf^{(2)}, \ldots, \mbf^{(d)}) \, .
	\end{align*}
\end{lemma}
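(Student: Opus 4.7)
The proof relies on the explicit characterizations of the two argmax sets provided by Theorems~\ref{th:optim-1} and~\ref{th:optim-2}, and reduces to an elementary set inclusion performed coordinate by coordinate. No auxiliary construction, no calculus, no geometry beyond what is already encoded in the two theorems, is needed.

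First, I would invoke Theorem~\ref{th:optim-2} to describe the tuples $(\mbf^{(j)})_{j \in \llbracket d \rrbracket}$ in the Precision argmax: each coordinate $\mbf^{(j)}$ must lie in some $\mathcal{S}_j^{\mathrm{Prec}}$, with $\mathcal{S}_j^{\mathrm{Prec}} = \{\mbf \in [0,1]^d : \abs{\wbf^\top \mbf} \geq \alpha\}$ if $j \in D_{1, \wbf}$ and $\mathcal{S}_j^{\mathrm{Prec}} = \{\mbf \in [0,1]^d : \abs{\wbf^\top \mbf} < \alpha\}$ if $j \in D_{0, \wbf}$. Analogously, Theorem~\ref{th:optim-1} characterizes the Recall argmax coordinate-wise through sets $\mathcal{S}_j^{\mathrm{Rec}}$, where $\mathcal{S}_j^{\mathrm{Rec}} = \{\mbf \in [0,1]^d : \wbf^\top \mbf \geq \alpha\} \cup \{\mbf \in [0,1]^d : \wbf^\top \mbf \leq -\alpha\}$ for $j \in D_{1, \wbf}$ (equivalently, the set $\{\abs{\wbf^\top \mbf} \geq \alpha\}$) and $\mathcal{S}_j^{\mathrm{Rec}} = [0,1]^d$ otherwise.

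I would then compare the two characterizations coordinate by coordinate. For indices $j \in D_{1, \wbf}$, the two defining conditions are identical, so $\mathcal{S}_j^{\mathrm{Prec}} = \mathcal{S}_j^{\mathrm{Rec}}$. For indices $j \in D_{0, \wbf}$, the Precision constraint singles out a subset of $[0,1]^d$, yielding $\mathcal{S}_j^{\mathrm{Prec}} \subseteq [0,1]^d = \mathcal{S}_j^{\mathrm{Rec}}$ trivially. Taking the Cartesian product over $j \in \llbracket d \rrbracket$ gives $\prod_j \mathcal{S}_j^{\mathrm{Prec}} \subseteq \prod_j \mathcal{S}_j^{\mathrm{Rec}}$, which is precisely the claimed inclusion of argmax sets.

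There is no serious obstacle once the two preceding theorems are in hand: the lemma is essentially a compatibility statement between their characterizations. Intuitively, it reflects the fact that Precision and Recall share the same numerator, and that any tuple maximizing Precision must push that numerator to its maximal value $\card{D_{1, \wbf}}$, which in turn saturates Recall to~$1$. The only modelling point worth stating explicitly in the write-up is that both optimizations are formulated in $[0,1]^d$ after the surjection of Proposition~\ref{prop:surjection}, so the inclusion in vector space is indeed the inclusion we want.
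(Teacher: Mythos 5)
Your proof is correct and follows essentially the same route as the paper's: both invoke the coordinate-wise characterizations of the two argmax sets (Theorems~\ref{th:optim-1} and~\ref{th:optim-2}), observe that the constraints coincide on $D_{1,\wbf}$ and that Recall imposes no constraint on $D_{0,\wbf}$, and conclude by set inclusion. Nothing is missing relative to the paper's own argument.
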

\begin{proof}
This follows directly from Theorem~\ref{th:optim-2}. For all $j \in D_{1,\wbf}$, both optimization problems (for \emph{Precision} and \emph{Recall}) require that $\mbf^{(j)} \in \mathcal{S}_j = \{ \mbf \in [0,1]^d : |\wbf^\top \mbf| \geq \alpha \}$. 
However, for $j \in D_{0,\wbf}$, the recall objective imposes no constraint on $\mbf^{(j)}$, while the precision objective requires $\mbf^{(j)} \in \mathcal{S}_j = \{ \mbf \in [0,1]^d : |\wbf^\top \mbf| < \alpha \}$.
Thus, any solution that maximizes \emph{Precision} also maximizes \emph{Recall}, implying the inclusion.
\end{proof}

Since the local linear models of a ReLU network are explicitly known, one may consider them as a form of ground-truth for feature attribution. This motivates the following definitions of \emph{Recall} and \emph{Precision} tailored to deep ReLU networks.
\begin{definition}[ReLU Recall and Precision]  \label{def:relu-recall-precision}
	Let $f : [0,1]^d \to \Reals$ be a deep ReLU network and $\phi$ be a global feature attribution method which depends on a vector of measures $\mu \defeq ( \mu_j )_{j \in \llbracket d \rrbracket} \in \mathcal{P}^d$. 
	We define the ReLU-Recall metric, with a threshold $\alpha > 0$, as
	\[
	\mathrm{RRecall}_{\alpha, f}(\mu) \defeq  \sum_{P \in \mathcal{R}([0,1]^d)} \mathrm{Recall}_{\alpha, \abf_P} (\mu)   \, ,
	\]
	and ReLU-Precision as
	\[
	  \mathrm{RPrecision}_{\alpha, f}(\mu) \defeq  \sum_{P \in \mathcal{R}([0,1]^d)} \mathrm{Precision}_{\alpha, \abf_P} (\mu)    \, .
	\]
     For each region $P$, the corresponding local linear model has coefficients $(\mathbf{a}_P, b_P)$ (see Property~\ref{th:rep-relu}).
\end{definition}
This definition extends \emph{Recall} and \emph{Precision}, from Definition~\ref{def:formal-recall}, to ReLU networks.
Again, we optimize the ReLU-\emph{Recall} metric using the finite-dimensional formulation
\begin{equation}
\label{pb:rrecall}
\Argmax_{\mbf^{(1)},  \ldots, \mbf^{(d)} \in  [0,1]^d } \mathrm{RRecall}_{\alpha, f}(\mbf^{(1)}, \ldots, \mbf^{(d)})
\, .    
\end{equation}

\begin{theorem}[Optimal projections for $\mathrm{RRecall}_{\alpha, f}$] \label{th:optim-rrecall}
For each $j \in \llbracket d \rrbracket$ define
	\begin{align*}
		\mathcal{S}_j \defeq 
		\bigcap_{P \in \mathcal{R}([0,1]^d) \text{ s.t. } j \in D_{1, \abf_P}}	\left\{\mbf \in [0,1]^d : \abs{\abf_P^\top \mbf} \geq \alpha \right\} \, .
	\end{align*}
Then, for Problem~\eqref{pb:rrecall}, a subset of solutions is given by choosing $\mathbf{m}^{(j)} \in \mathcal{S}_j$ for all $j \in \llbracket d \rrbracket$.
\end{theorem}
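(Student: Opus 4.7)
The plan is to exploit the additive structure $\mathrm{RRecall}_{\alpha, f}(\mu) = \sum_{P \in \mathcal{R}([0,1]^d)} \mathrm{Recall}_{\alpha, \abf_P}(\mu)$ and apply Theorem~\ref{th:optim-1} term-by-term, reducing the claim to the elementary observation that a sum of $[0,1]$-valued quantities is maximized whenever each summand is simultaneously saturated.

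First, I would note that each summand lies in $[0,1]$, since Recall is a proportion. From Definition~\ref{def:formal-recall} and the finite-dimensional reduction $\phi(f_{\abf_P})_j = \abf_P^\top \mbf^{(j)}$, the summand for a given $P$ equals its maximum value $1$ precisely when $\abs{\abf_P^\top \mbf^{(j)}} \geq \alpha$ for every $j \in D_{1, \abf_P}$ (the features in $D_{0,\abf_P}$ impose no constraint, exactly as in Theorem~\ref{th:optim-1}, where the corresponding solution set is $[0,1]^d$). Consequently
\[
\mathrm{RRecall}_{\alpha, f}(\mbf^{(1)}, \ldots, \mbf^{(d)}) \leq \card{\mathcal{R}([0,1]^d)},
\]
with equality if and only if the family of inequalities $\{\abs{\abf_P^\top \mbf^{(j)}} \geq \alpha : P \in \mathcal{R}([0,1]^d),\ j \in D_{1,\abf_P}\}$ holds simultaneously.

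Second, I would reindex this family by the feature index $j$: for fixed $j$, the constraints bearing on $\mbf^{(j)}$ are exactly those coming from regions $P$ such that $j \in D_{1, \abf_P}$, and their intersection is by definition $\mathcal{S}_j$. Equivalently, observing that $\{\abf_P^\top \mbf \geq \alpha\} \cup \{\abf_P^\top \mbf \leq -\alpha\} = \{\abs{\abf_P^\top \mbf} \geq \alpha\}$ recovers precisely the per-region optimal set from Theorem~\ref{th:optim-1} with $\wbf = \abf_P$. Hence any tuple $(\mbf^{(1)}, \ldots, \mbf^{(d)})$ with $\mbf^{(j)} \in \mathcal{S}_j$ for all $j$ saturates every summand and therefore attains the global maximum of $\mathrm{RRecall}_{\alpha, f}$, proving that $\prod_{j} \mathcal{S}_j$ is a subset of the solution set of Problem~\eqref{pb:rrecall}.

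The proof is essentially bookkeeping once Theorem~\ref{th:optim-1} is applied regionwise; no genuinely new estimate is needed. The only subtle point, and the reason the statement only asserts ``a subset of solutions'', is that one or several $\mathcal{S}_j$ may be empty: in that case the Cartesian product $\prod_{j} \mathcal{S}_j$ is empty and the claim is vacuous, and the true maximum of $\mathrm{RRecall}_{\alpha, f}$ is strictly smaller than $\card{\mathcal{R}([0,1]^d)}$, requiring a finer case analysis (trading off which summands one chooses to saturate) that lies outside the scope of this corollary.
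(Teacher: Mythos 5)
Your proof is correct and follows essentially the same route as the paper's: decompose $\mathrm{RRecall}_{\alpha,f}$ regionwise, apply Theorem~\ref{th:optim-1} to each $\mathrm{Recall}_{\alpha,\abf_P}$, and intersect the resulting constraints over all regions $P$ with $j \in D_{1,\abf_P}$ to obtain $\mathcal{S}_j$. Your explicit upper bound $\mathrm{RRecall}_{\alpha,f} \leq \card{\mathcal{R}([0,1]^d)}$ and the remark on possibly empty $\mathcal{S}_j$ match the paper's closing caveat that the characterization may not capture all maximizers.
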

\begin{proof}
We want to maximize the objective $\mathrm{RRecall}_{\alpha, f}$, which aggregates recall values over all regions $P \in \mathcal{R}([0,1]^d)$.
A sufficient condition for maximizing $\mathrm{RRecall}_{\alpha, f}$ is to simultaneously maximize each individual $\mathrm{Recall}_{\alpha, \abf_P}$. 
By Theorem~\ref{th:optim-1}, for each $P \in \mathcal{R}([0,1]^d)$ and each $j \in D_{1, \abf_P}$, recall is maximized when $\abs{\abf_P^\top \mbf^{(j)}} \geq \alpha$. Since the same feature $j$ may belong to $D_{1,\abf_P}$ for multiple $P$, we intersect the corresponding solutions sets to find a common region of optimality.
Therefore, for each $j \in \llbracket d \rrbracket$, the set of $\mbf^{(j)}$ that simultaneously maximizes all relevant $\mathrm{Recall}_{\alpha, \abf_P}$ is
\begin{align*}
    \mathcal{S}_j \defeq 
    \bigcap_{P \in \mathcal{R}([0,1]^d) \text{ s.t. } j \in D_{1, \abf_P}}	\left\{\mbf \in [0,1]^d : \abs{\abf_P^\top \mbf} \geq \alpha \right\} \, .
\end{align*}
Note that this characterization may not capture all maximizers, as the constraints are not disjoint and the intersection may be empty.
\end{proof}
We obtain a similar result for the optimization of ReLU-\emph{Precision},
\begin{equation}
\label{pb:rprecision}
\Argmax_{\mbf^{(1)},  \ldots, \mbf^{(d)} \in  [0,1]^d } \mathrm{RPrecision}_{\alpha, f}(\mbf^{(1)}, \ldots, \mbf^{(d)})
\, .    
\end{equation}

\begin{theorem}[Optimal projections for $\mathrm{RPrecision}_{\alpha, f}$] \label{th:optim-rprecision}
For each $j \in \llbracket d \rrbracket$ define
	\begin{align*}
		\mathcal{S}_j \defeq \bigcap_{P \in \mathcal{R}([0,1]^d)} \mathcal{S}_{j, P} \, ,
	\end{align*}
    where $S_{j,P}$ is
    \begin{align*}
    \mathcal{S}_{j,P} \defeq 
    \begin{cases}
        \left\{\mbf \in [0,1]^d :  \abs{\abf_P^\top \mbf} \geq \alpha \right\} \text{  if $j \in D_{1, \abf_P}$} \, ,\\
        \left\{\mbf \in [0,1]^d : \abs{\abf_P^\top \mbf} < \alpha \right\} \text{  otherwise.}
    \end{cases}
\end{align*}
Then, for Problem~\eqref{pb:rprecision}, a subset of solutions is given by choosing $\mathbf{m}^{(j)} \in \mathcal{S}_j$ for all $j \in \llbracket d \rrbracket$.
\end{theorem}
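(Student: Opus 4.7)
The plan is to mirror the argument used for Theorem~\ref{th:optim-rrecall}, decomposing the ReLU-Precision objective into a sum of per-region Precision terms, and then invoking Theorem~\ref{th:optim-2} on each term. Since $\mathrm{RPrecision}_{\alpha, f}(\mu) = \sum_{P \in \mathcal{R}([0,1]^d)} \mathrm{Precision}_{\alpha, \abf_P}(\mu)$ is a finite sum of nonnegative quantities, a sufficient condition for a choice of $(\mbf^{(1)}, \ldots, \mbf^{(d)})$ to maximize the whole sum is that each individual $\mathrm{Precision}_{\alpha, \abf_P}$ is simultaneously maximized. I would state this decoupling explicitly at the start of the proof.

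Next, I would apply Theorem~\ref{th:optim-2} to each region $P$. That theorem tells us that for a fixed linear model with weights $\abf_P$, the optima of $\mathrm{Precision}_{\alpha, \abf_P}$ are exactly those $(\mbf^{(1)}, \ldots, \mbf^{(d)})$ such that, coordinate-wise, $\mbf^{(j)} \in \mathcal{S}_{j,P}$, where $\mathcal{S}_{j,P} = \{\mbf \in [0,1]^d : \abs{\abf_P^\top \mbf} \geq \alpha\}$ if $j \in D_{1, \abf_P}$ and $\mathcal{S}_{j,P} = \{\mbf \in [0,1]^d : \abs{\abf_P^\top \mbf} < \alpha\}$ otherwise. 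Importantly, this differs from the Recall case because here $j \in D_{0, \abf_P}$ imposes a nontrivial constraint (we must keep the denominator small by forbidding spurious large attributions on irrelevant features).

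Then, for a single $\mbf^{(j)}$ to simultaneously optimize $\mathrm{Precision}_{\alpha, \abf_P}$ for every region $P$, it must lie in the intersection $\mathcal{S}_j \defeq \bigcap_{P \in \mathcal{R}([0,1]^d)} \mathcal{S}_{j,P}$, which is exactly the set defined in the statement. Whenever $\mathcal{S}_j$ is nonempty for every $j$, any tuple $(\mbf^{(j)})_{j \in \llbracket d \rrbracket}$ with $\mbf^{(j)} \in \mathcal{S}_j$ yields a maximizer of $\mathrm{RPrecision}_{\alpha, f}$.

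The main obstacle is precisely that $\mathcal{S}_j$ may be empty. For each $j$, the classification of $j$ into $D_{0, \abf_P}$ or $D_{1, \abf_P}$ may flip between regions, creating incompatible constraints (large $\abs{\abf_P^\top \mbf^{(j)}}$ in some regions and small in others). This is why the theorem only claims the inclusion of a subset of optima rather than a full characterization; I would emphasize this limitation in closing, paralleling the remark at the end of the proof of Theorem~\ref{th:optim-rrecall} about non-disjoint constraints and potentially empty intersections, so the reader sees that the result is a sanity-check criterion for optimality rather than a description of the entire optimum set.
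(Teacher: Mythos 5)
Your proposal is correct and follows essentially the same route as the paper: decompose $\mathrm{RPrecision}_{\alpha,f}$ over regions, maximize each $\mathrm{Precision}_{\alpha,\abf_P}$ simultaneously via Theorem~\ref{th:optim-2}, and intersect the per-region constraint sets, with the same caveat that the intersection may be empty and only a subset of maximizers is captured. No gaps to report.
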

\begin{proof}
This is exactly the same proof as for Theorem~\ref{th:optim-rrecall}, but we use Theorem~\ref{th:optim-2} (instead of Theorem~\ref{th:optim-1}) as we are maximizing $\mathrm{Precision}_{\alpha, \abf_P}$.    
\end{proof}

\section{Proofs of main results}
\label{app:proofs-main}

\subsection{Proof of Proposition~\ref{prop:sens_complete}}

Let $f :  \Reals \to \Reals, \xbf \in \Reals^d$ and $k,j \in \llbracket d \rrbracket$. Since the function $f\circ \pi_j$ depends only on the $j$-th coordinate of $\xbf$ and remains unchanged when any other coordinates varies, it follows from the \texttt{Sensitivity} property that $\phi\left(\xbf, f\circ \pi_j \right)_k = 0$ for $k \neq j$.

Now, by \texttt{Completude},
\[
f(\xbf_j) - f(\xbf_j') = \sum_{p \in \llbracket d \rrbracket}   \phi\left(\xbf, f\circ \pi_j \right)_p =   \phi\left(\xbf, f\circ \pi_j \right)_j \, .
\]
Finally,
\[
\phi\left(\xbf, f\circ \pi_j \right) = \left(0, \ldots,  f(\xbf_j) - f(\xbf_j') , \ldots, 0 \right)^\top  \, .
\]
\subsection{Proof of Lemma~\ref{lemma:taylor-phi}}

Given $f \in \mathcal{F}$ and $\xbf_0 \in \Reals^d$, we apply Taylor-Lagrange theorem~\cite{courant2000introduction} on $f$ at $\xbf_0$:
\[
\forall \xbf \in \Reals^d, \qquad f(\xbf) = f(\xbf_0) + \nabla f(\xbf_0)^\top (\xbf - \xbf_0) + R_{\xbf_0}(\xbf)           \, .
\]

Now, by \texttt{Linearity} property, 
\[
\forall \xbf \in \Reals^d, \qquad \phi(\xbf, f) = \phi \left(\xbf, f(\xbf_0) \right) +  \phi \left(\xbf, \nabla f(\xbf_0)^\top (\cdot - \xbf_0) \right) +  \phi( \xbf, R_{\xbf_0}) \, .
\]
Given $\xbf \in \Reals^d$, we tackle the zero-order term as follows:
\begin{align*}
    \phi \left(\xbf, f(\xbf_0) \right) &= 0 \, . &&\text{(by \texttt{Sensibility})} \\
    \intertext{For the first-order term}
    \phi \left(\xbf, \nabla f(\xbf_0)^\top (\cdot - \xbf_0) \right) &=  \phi \left( \xbf, \nabla f(\xbf_0)^\top (\cdot) \right) - \phi \left(\xbf , \nabla f(\xbf_0)^\top \xbf_0 \right) &&\text{(by \texttt{Linearity})} \\
    &= \phi \left(\xbf, \nabla f(\xbf_0)^\top (\cdot) \right) &&\text{(by \texttt{Sensibility})} \\
    &= \phi \left(\xbf, \sum_{j \in \llbracket d \rrbracket} \pdv{f}{\xbf_j}(\xbf_0) \times \pi_j \right) \\
    &=  \sum_{j \in \llbracket d \rrbracket} \pdv{f}{\xbf_j}(\xbf_0)   \phi(\xbf, \pi_j)   &&\text{(by \texttt{Linearity})} \\
    &=   \sum_{j \in \llbracket d \rrbracket} \pdv{f}{\xbf_j}(\xbf_0)   \left(0, \ldots,  \xbf_j  - \xbf_j', \ldots, 0 \right)^\top      &&\text{(by Proposition~\ref{prop:sens_complete})} \\
    &= \nabla f (\xbf_0) \odot (\xbf - \xbf' ) \, .
\end{align*}

\subsection{Proof of Theorem~\ref{th:bound-1}}

Given a model $f : [0,1]^d \to \Reals$ and a base point $\xbf_0 \in [0,1]^d$,  we apply Lipschitz continuity of $\phi$ as follows: 
\begin{align*}
    \forall \xbf \in [0,1]^d, \exists L_\xbf \geq 0, \qquad \norm{\phi(\xbf, R_{\xbf_0}) - \phi(\xbf, 0)}_2 \leq L_\xbf \norm{ R_{\xbf_0}  - 0}_\infty \, ,
\end{align*}
where $R_{\xbf_0}$ comes from Lemma~\ref{lemma:taylor-phi}. 
Applying Lipschitz continuity with $0$ and $R_{\xbf_0}$ is valid as they are both bounded functions on the compact $[0,1]^d$. 
Now, as $f$ is twice continuously differentiable on the compact $[0,1]^d$, its Hessian is bounded at any points in $[0,1]^d$ by $M \geq 0$:
\[
\forall \xbf \in [0,1]^d,  \qquad  \norm{\nabla^2 f (\xbf)}_{\mathrm{op}} \leq M  \, .
\]
Then, we can bound the remainder $R_{\xbf_0}$ as follows:
\begin{align}\label{eq:bound-remainder}
    \begin{split}
        \forall \xbf \in [0,1]^d, \qquad \abs{R_{\xbf_0}(\xbf)} &= \abs{\frac{1}{2}(\xbf - \xbf_0)^\top \nabla^2 f (\ybf_\xbf) (\xbf - \xbf_0)} \\
        &\leq \frac{1}{2} \norm{\nabla^2 f (\ybf_\xbf)}_{\mathrm{op}} \norm{\xbf - \xbf_0}_2^2 \\
        &\leq \frac{M d}{2} \, ,
    \end{split}
\end{align}
where the last inequality comes from the fact that $\norm{\xbf - \xbf_0}_2^2 \leq \left(\sqrt{d} \right)^2$ on $[0,1]^d$, and $\ybf_\xbf$ is defined in Lemma~\ref{lemma:taylor-phi}.
As the bound in Equation~(\ref{eq:bound-remainder}) is independent of $\xbf$, we can pass to the supremum on $[0,1]^d$ as follows:
\begin{align}\label{eq:bound-remainder-2}
    \begin{split}
        \norm{R_{\xbf_0}}_{\infty} \leq  \frac{M d}{2} \, .
    \end{split}
\end{align}
Finally, we have the following bound on the remainder:
\begin{align*}
    \forall \xbf \in [0,1]^d, \exists L_\xbf \geq 0, \qquad \norm{\phi(\xbf, R_{\xbf_0}) - \phi(\xbf, \widebar{0})}_2 &\leq L_\xbf \norm{R_{\xbf_0} - \widebar{0}}_\infty \\
    & \leq \frac{L_\xbf  d}{2} M  &&\text{(by Equation~(\ref{eq:bound-remainder-2}))} \, .
\end{align*}

\subsection{Proof of Theorem~\ref{th:measure-atomic-attribution}} \label{add:proof-measures}
\begin{proof}
We start from Theorem~\ref{th:increment-based-atomic-attribution}, which expresses the attribution in terms of a Riemann–Stieltjes integral. Next, we invoke the correspondence between integrands and signed measures as established in Theorem~\ref{th:signed-borel}. 
Finally, we note that in our setting, the Riemann–Stieltjes and Lebesgue–Stieltjes integrals coincide in value (see~\cite{folland1999real}), thereby concluding the proof.
\end{proof}
\subsection{Proof of Table~\ref{tab:example-feature-attributions}} \label{add:proof-table}
\begin{proof}
See~\cite{folland1999real}. The result follows from the definition of the expectation as integration with respect to the associated probability measure.
\end{proof}

\subsection{Proof of Corollary~\ref{cor:linear-positive}} \label{add:proof-linear-positive}
\begin{proof}
	The proof is a slight modification of the one for Corollary~\ref{cor:relu-attrib}. We apply Corollary~\ref{cor:positive-borel} to get \textbf{positive} Borel measures $\{\mu_{j, \xbf}\}_j$. Within the setting of Corollary~\ref{cor:relu-attrib}, we have
	\begin{align*}
			\forall \xbf \in [0,1]^d, \forall j \in \llbracket d \rrbracket, \qquad \phi\left(\xbf, f \right)_j &= \sum_{P \in \mathcal{R}_{0,j}^\complement} \mu_{j, \xbf} (P) \left( \abf_{P}^\top \mathbf{m}^{(P, j, \xbf)} + b_P \right) + 
		\sum_{P \in \mathcal{R}_{0,j}} \abf_{P}^\top \mathbf{n}^{(P, j, \xbf)} \, .\\
		\intertext{If $\mu_{j, \xbf} (P) = 0$, then $\mathbf{n}^{(P, j, \xbf)}_i \defeq \int_{[0,1]^d} \ybf_i  \, \Diff \mu_{j, \xbf}(\ybf) = 0$ (which is not true for a signed measure)}
		\phi\left(\xbf, f \right)_j  &= \sum_{P \in \mathcal{R}_{0,j}^\complement} \mu_{j, \xbf} (P) \left( \abf_{P}^\top \mathbf{m}^{(P, j, \xbf)} + b_P \right) + 0 \, .\\
		\intertext{We change the definition of $\mathbf{m}^{(P, j, \xbf)}_i$ to}
		\mathbf{m}^{(P, j, \xbf)}_i &\defeq 
		\begin{cases}
			\frac{ \int_{P} \ybf_i \, \Diff \mu_{j, \xbf} (\ybf)}{\mu_{j, \xbf}(P)}  & \text{if $\mu_{j, \xbf}(P) \neq 0$} \, ,\\
			0 & \text{else.}
		\end{cases} \\
		\intertext{Finally, we get}
		\phi\left(\xbf, f \right)_j   &= \sum_{P \in \mathcal{R}([0,1]^d)} \mu_{j, \xbf} (P) \left( \abf_{P}^\top \mathbf{m}^{(P, j, \xbf)} + b_P \right) \, .
	\end{align*}
\end{proof}

\subsection{Proof of Theorem~\ref{th:optim-1}} \label{add:proof-optim1}
\begin{proof}
The goal is to maximize $\mathrm{Recall}_{\alpha, \wbf}(\mbf^{(1)}, \ldots, \mbf^{(d)})$.
As stated in the main paper, the optimization problem is disjoint over the arguments $\mbf^{(j)}$, since each term of the sum $\indic{\abs{\phi(f_\wbf)_j} \geq \alpha}$ depends only on $\mbf^{(j)}$. Therefore, each feature $j \in \llbracket d \rrbracket$ can be optimized independently.
To maximize the sum defining \emph{Recall}, we need $\indic{\abs{\phi(f_\wbf)_j} \geq \alpha} = 1$ for all $j \in D_{1,\wbf}$, which is achieved whenever $\abs{\wbf^\top \mbf^{(j)}} \geq \alpha$ (see Corollary~\ref{cor:global-linear} for the attribution of linear models).
This leads to the following definition of the solution sets:
\[
\mathcal{S}_j \defeq 
\begin{cases}
	\left\{ \mbf \in [0,1]^d : \wbf^\top \mbf \geq \alpha \right\} \cup \left\{ \mbf \in [0,1]^d : \wbf^\top \mbf \leq -\alpha \right\} & \text{if } j \in D_{1, \wbf}, \\
	[0,1]^d & \text{otherwise}.
\end{cases}
\]
\end{proof}

\end{document}